\def\eqref#1{equation~\ref{#1}}
\def\1{\bm{1}}
\def\va{{\bm{a}}}
\def\vb{{\bm{b}}}
\def\vg{{\bm{g}}}
\def\vh{{\bm{h}}}
\def\vq{{\bm{q}}}
\def\vv{{\bm{v}}}
\def\vw{{\bm{w}}}
\def\vx{{\bm{x}}}
\def\eva{{a}}
\def\mA{{\bm{A}}}
\def\mB{{\bm{B}}}
\def\mD{{\bm{D}}}
\def\mG{{\bm{G}}}
\def\mH{{\bm{H}}}
\def\mI{{\bm{I}}}
\def\mK{{\bm{K}}}
\def\mL{{\bm{L}}}
\def\mP{{\bm{P}}}
\def\mS{{\bm{S}}}
\def\mV{{\bm{V}}}
\def\mW{{\bm{W}}}
\def\mX{{\bm{X}}}
\def\mY{{\bm{Y}}}
\DeclareMathAlphabet{\mathsfit}{\encodingdefault}{\sfdefault}{m}{sl}
\SetMathAlphabet{\mathsfit}{bold}{\encodingdefault}{\sfdefault}{bx}{n}
\def\gG{{\mathcal{G}}}
\def\gP{{\mathcal{P}}}
\def\sC{{\mathbb{C}}}
\def\sN{{\mathbb{N}}}
\def\sR{{\mathbb{R}}}
\def\sZ{{\mathbb{Z}}}
\def\emA{{A}}
\def\emB{{B}}
\newcommand{\R}{\mathbb{R}}
\newcommand{\softmax}{\mathrm{softmax}}
\DeclareMathOperator*{\argmin}{arg\,min}
\newcommand*{\rom}[1]{\uppercase\expandafter{\romannumeral #1\relax}}
\newcommand{\nameprefix}{S\textsuperscript{2}}
\newcommand{\namefullsing}{Spatio-Spectral Graph Neural Network}
\newcommand{\nameacrsing}{S\textsuperscript{2}GNN}
\newcommand{\namefull}{\namefullsing s}
\newcommand{\nameacr}{\nameacrsing s}
\newcommand{\PE}{\operatorname{PE}}
\newcommand{\EVD}{\operatorname{EVD}}
\newcommand{\Eigval}{\boldsymbol{\Lambda}}
\newcommand{\eigval}{\lambda}
\newcommand{\veigval}{\boldsymbol{\eigval}}
\newcommand{\Eigvec}{\mV}
\newcommand{\eigvec}{\vv}
\newcommand{\C}{\mathbb{C}}
\newcommand{\CT}{\mathrm{H}}
\newcommand{\spec}{\operatorname{Spectral}}
\newcommand{\spat}{\operatorname{Spatial}}
\newcommand{\graphfilterfn}{\hat{g}}
\newcommand{\graphfilter}{\graphfilterfn(\veigval)}
\newcommand{\graphfilterdiag}{\operatorname{diag}(\graphfilter)}
\newcommand{\graphfilternn}{\graphfilterfn_\vartheta(\veigval)}
\newcommand{\graphfilternnl}{\graphfilterfn^{(l)}_\vartheta(\veigval)}
\newcommand{\Layer}{^{(l)}}
\newcommand{\layer}{^{(l-1)}}
\newcommand{\dr}{\frac{d^r}{d\lambda^r}}
\newcommand{\cmark}{\ding{51}}%
\newcommand{\xmark}{\ding{55}}%
\newcommand*{\uni}{}
\DeclareRobustCommand*{\uni}[1]{%
  \begingroup
    \StringEncodingConvert\x{%
      \pdfunescapehex{%
        00%
        \ifnum"#1<"100000 0\fi
        \ifnum"#1<"10000 0\fi
        \ifnum"#1<"1000 0\fi
        \ifnum"#1<"100 0\fi
        \ifnum"#1<"10 0\fi
        #1%
      }%
    }{utf32be}{utf8}%
    \everyeof{\noexpand}%
    \endlinechar=-1 %
  \edef\x{%
    \endgroup
    \scantokens\expandafter{%
      \expandafter\unexpanded\expandafter{\x}%
    }%
  }\x
}
\newtheorem{theorem}{Theorem}
\newtheorem*{theorem*}{Theorem}
\newtheorem{lemma}{Lemma}
\theoremstyle{remark}
\newtheorem*{remark}{Remark}
\definecolor{plt-C0}{HTML}{1F77B4}
\definecolor{plt-C1}{HTML}{FF7F0E}
\definecolor{orange}{RGB}{203,95,0}
\definecolor{blue}{RGB}{0,78,131}
\definecolor{green}{RGB}{10,130,11}
\newcolumntype{L}[1]{>{\raggedright\let\newline\\\arraybackslash\hspace{0pt}}m{#1}}
\newcolumntype{C}[1]{>{\centering\let\newline\\\arraybackslash\hspace{0pt}}m{#1}}
\newcolumntype{R}[1]{>{\raggedleft\let\newline\\\arraybackslash\hspace{0pt}}m{#1}}
\title{\namefull}
\author{%
  Simon~Geisler$^\dagger$, Arthur Kosmala$^\dagger$, Daniel Herbst, and Stephan~G\"unnemann \\
  Department of Computer Science \& Munich Data Science Institute \\ Technical University of Munich\\
  \texttt{\{s.geisler, a.kosmala, d.herbst, s.guennemann\}@tum.de} \\
}
\begin{document}

\maketitle

\begin{abstract}
Spatial Message Passing Graph Neural Networks (MPGNNs) are widely used for learning on graph-structured data. However, key limitations of \(\ell\)-step MPGNNs are that their ``receptive field'' is typically limited to the \(\ell\)-hop neighborhood of a node and that information exchange between distant nodes is limited by over-squashing.
Motivated by these limitations, we propose \emph{\namefull{} (\nameacr)} -- a new modeling paradigm for Graph Neural Networks (GNNs) that synergistically combines spatially and spectrally parametrized graph filters. Parameterizing filters partially in the frequency domain enables global yet efficient information propagation.
We show that \nameacr{} vanquish over-squashing and yield strictly tighter approximation-theoretic error bounds than MPGNNs. Further, rethinking graph convolutions at a fundamental level unlocks new design spaces. For example, \nameacr{} allow for free %
positional encodings that make them strictly more expressive than the 1-Weisfeiler-Lehman (WL) test. Moreover, to obtain general-purpose \nameacr{}, we propose spectrally parametrized filters for directed graphs. \nameacr{} outperform spatial MPGNNs, graph transformers, and graph rewirings, e.g., on the peptide long-range benchmark tasks, and are competitive with state-of-the-art sequence modeling. On a 40\,GB GPU, \nameacr{} scale to millions of nodes.
\end{abstract}

\section{Introduction}\label{sec:introduction}

\begin{wrapfigure}[16]{r}{0.29\textwidth}
    \vspace{-10pt}
    \centering
    \includegraphics[width=\linewidth]{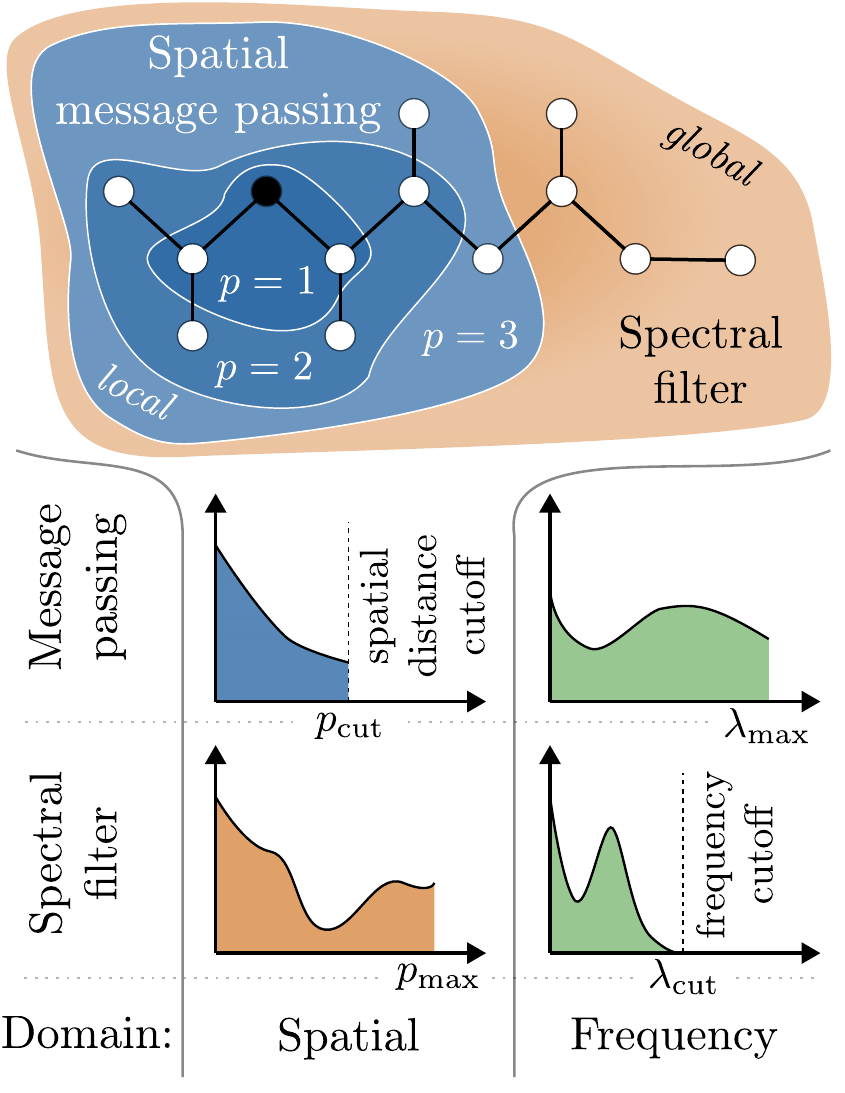}
    \caption{\nameacrsing{} principle.}
    \label{fig:specspat_schematic}
\end{wrapfigure}
\emph{Spatial} Message-Passing Graph Neural Networks (MPGNNs) ushered in various recent breakthroughs. For example, MPGNNs are able to predict the weather with unprecedented precision~\citep{lam_learning_2023}, can be composed as a foundation model for a rich set of tasks on knowledge graphs~\citep{galkin_towards_2023}, and are a key component in the discovery of millions of AI-generated crystal structures~\citep{merchant_scaling_2023}. Despite this success, MPGNNs produce node-level signals solely considering \emph{limited-size neighborhoods}, effectively bounding their expressivity. Even with a large number of message-passing steps, MPGNNs are limited in their capability of propagating information to distant nodes due to \emph{over-squashing}. As evident by the success of global models like transformers~\citep {vaswani_attention_2017}, modeling long-range interactions can be pivotal.

\textbf{We propose \emph{\namefull{} (\nameacr)}}, a new approach for tackling the aforementioned limitations, that synergistically combine spatial message passing with spectral filters, \emph{explicitly parametrized in the spectral domain} (\autoref{eq:spec}). %
As illustrated in \autoref{fig:specspat_schematic}, message passing comes with a distance cutoff \(p_{\text{cut}}\) (\# of hops) while having access to the entire frequency spectrum.
Conversely, spectral filters act globally (\(p_{\max}\)), even with truncation of the frequency spectrum %
\(\eigval_{\text{cut}}\) that is required for efficiency. Utilizing the strengths of both parametrizations, we distill many important properties of hierarchical message-passing schemes, graph-rewirings, and pooling (see \autoref{fig:spectral_filter}) into a single GNN. %
Outside of the GNN domain, a similar composition was applied to %
molecular point clouds~\citep{kosmala_ewald-based_2023} and sequence models like Mamba~\citep{gu_mamba_2023} or Hyena~\citep{poli_hyena_2023}. Such sequence models deliver transformer-like properties with superior scalability.

\begin{wrapfigure}[11]{r}{0.64\textwidth}
    \vspace{-11pt}
    \resizebox{0.615\linewidth}{!}{
    \begin{minipage}[t]{0.9\textwidth}\vspace{0pt}
        \textbf{Synergistic \emph{composition} of} \\
        \textbf{\textcolor{orange}{spectral filters} \& \textcolor{blue}{spatial message passing}}
        
        \vspace{2pt}
        \includegraphics[scale=0.06]{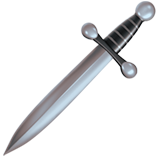}
        Vanquishes \emph{over-squashing}  (\autoref{sec:over-squashing}) \\
        \includegraphics[scale=0.06]{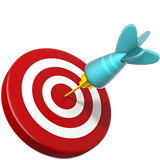}
        Superior approximation capabilities (\autoref{sec:approximation})\vspace{3pt}\\
        \parbox[b]{0.065\linewidth}{\rotatebox{90}{\resizebox{1.125\width}{\height}{\textbf{Design Space}}}} \hfill
        \parbox[b]{0.93\linewidth}{
        \par
        Parametrization in spectral domain (\autoref{sec:filter_parametrization}) \par
        \vspace{1pt}
        Spectral-domain MLP (\autoref{sec:nn_spec}) \par
        \vspace{1pt}
        Spectral filter for directed graphs (\autoref{sec:directed_spectral_filter}) \par
        \vspace{3pt}
        \emph{Dual use of partial Eigendecomposition} (\(\EVD\)): \par
        \vspace{1pt}
        \quad expressive, stable \& efficient \par
        \quad \textcolor{green}{Positional Encodings (\(\PE\))} (\autoref{sec:pos_enc}) 
        }
    \end{minipage}}
    \resizebox{0.375\linewidth}{!}{
    \begin{minipage}[t]{0.44\textwidth}\vspace{0pt}
    \centering
        \includegraphics[width=\linewidth]{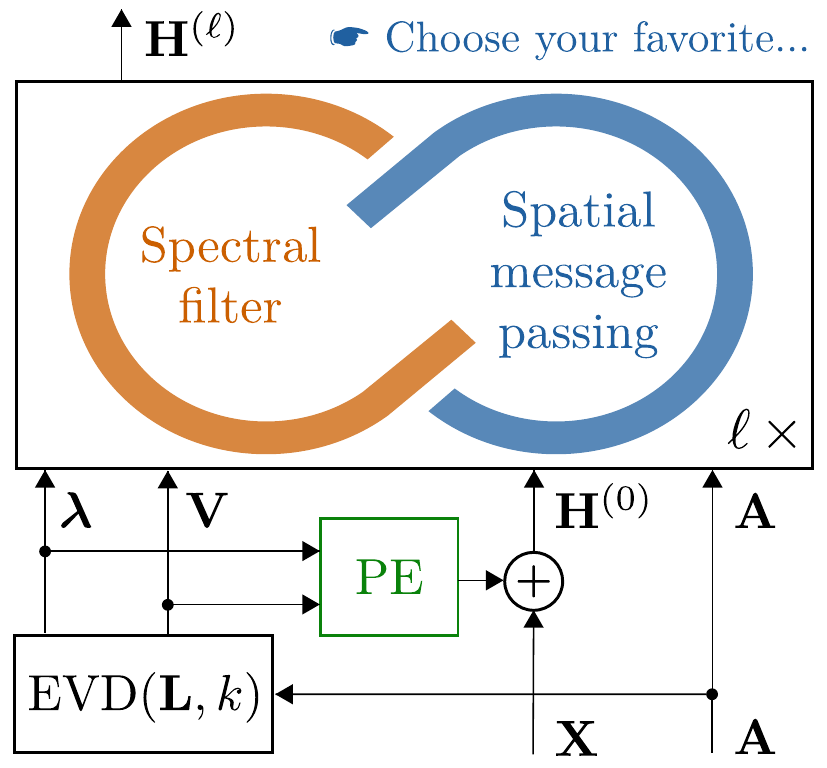}
    \end{minipage}
    }\\
    \vspace{-5pt}
    \caption{\nameacrsing{} framework with adjacency matrix \(\mA\), node features \(\mX\), and Laplacian \(\mL\) (function of \(\mA\)).}
    \label{fig:overview}
\end{wrapfigure}
\textbf{Design space of \nameacr{}.} 
Except for some initial works like \citep{bruna_spectral_2014} and in stark contrast to the design space of spatial MPGNNs~\citep{you_design_2020}, the design decisions for spectral GNNs are virtually unexplored. We provide an overview over \nameacr{} in \autoref{fig:overview}. Important dimensions of this new design space include the \emph{parametrization in the spectral domain (\autoref{sec:filter_parametrization})}. Moreover, we propose the first \emph{neural network in the spectral domain (\autoref{sec:nn_spec})}, preserving GNN's permutation equivariance, and generalize \emph{spectral filters to directed graphs (\autoref{sec:directed_spectral_filter})}.%
We identify a potential dual use of the partial eigendecomposition required for spectral filters. Namely, we propose stable \emph{positional encodings (\autoref{sec:pos_enc})} that are almost free of cost for \nameacr{} and make them strictly more expressive than the 1-Weisfeiler-Lehman (WL) test.

\textbf{Our analysis of \nameacr{}} validates their capability for modeling complex long-range interactions. We prove in \autoref{sec:over-squashing} that combining spectral and spatial filters alleviates the over-squashing phenomenon~\citep{alon_bottleneck_2020,di_giovanni_over-squashing_2023, di_giovanni_how_2023}, a necessity for effective information-exchange among distant nodes. Our approximation-theoretic analysis goes one step further and proves strictly tighter error bounds in terms of approximation of the target idealized GNN (\autoref{sec:approximation}).

\textbf{\nameacr{} are effective and practical.} We empirically verify the shortcomings of MPGNNs and how \nameacr{} overcome them (\autoref{sec:results}). E.g., we set a new state-of-the-art on the peptides-func benchmark~\citep{dwivedi_long_2022} with \(\approx\)\,40\% fewer parameters, outperforming MPGNNs, and graph transformers. Moreover, \nameacr{} can keep up with state-of-the-art sequence models and are scalable. The runtime and space complexity of \nameacr{} is equivalent to MPGNNs and, with vanilla full-graph training, \nameacr{} can handle millions of nodes with a 40\,GB GPU.

\section{Background}\label{sec:background}
We study graphs \(\gG(\mA, \mX)\) with adjacency matrix \(\mA \in \{0,1\}^{n \times n}\) (or \(\mA \in \R_{\ge 0}^{n \times n}\) if weighted), node features \smash{\(\mX \in \R^{n \times d}\)} and edge count $m$. %
\(\mA\) is symmetric for undirected graphs and, thus, has
eigendecomposition \(\veigval, \Eigvec = \EVD(\mA)\) with eigenvalues \(\veigval \in \R^n\) and eigenvectors \(\Eigvec \in \R^{n \times n}\): \(\mA = \Eigvec \Eigval \Eigvec^\top\) using \(\Eigval = \operatorname{diag}(\veigval)\). %
Instead of \(\mA\), we decompose the \emph{Laplacian} \(\mL \coloneq \mI - \mD^{-\nicefrac{1}{2}}\mA\mD^{-\nicefrac{1}{2}}\), with diagonal degree matrix \smash{\(\mD = \operatorname{diag}(\mA\vec{1})\)}, since its ordered eigenvalues \(0 = \eigval_1 \le \eigval_2 \le \dots \le \eigval_n \le 2\) are similar to frequencies %
(e.g., low eigenvalues relate to low frequencies, see \autoref{fig:spectral_filter_extended}). Likewise, one could use, e.g., \(\mL = \mI - \mD^{-1}\mA\) or more general variants \citep{yang_towards_2023}; however, 
we focus our explanations on the most common choice \smash{\(\mL \coloneq \mI - \mD^{-\nicefrac{1}{2}}\mA\mD^{-\nicefrac{1}{2}}\)}. We choose the eigenvectors \(\Eigvec \in \R^{n \times n}\) to be orthogonal \(\Eigvec\Eigvec^\top=\mI\). We refer to \(\Eigvec\) as the Fourier basis of the graph, with Graph Fourier Transformation (GFT) \smash{\(\hat{\mX} = \Eigvec^\top \mX\)} and its inverse  \smash{\(\mX = \Eigvec\hat{\mX}\)}.

\textbf{Spectral graph filters.} 
Many GNNs implement a graph convolution, where node signal \(\mX \in \R^{n \times d}\) is convolved \(\vg \ast_\gG \mX\) for every \(d\) with a scalar filter \(\vg \in \R^{n}\). The graph convolution~\citep{hammond_wavelets_2011} is defined in the spectral domain \(\vg \ast_\gG \mX \coloneq \Eigvec ([\Eigvec^\top \vg] \odot [\Eigvec^\top \mX])\), with element-wise product \(\odot\) and broadcast of \(\Eigvec^\top \vg\) to match shapes. Instead of spatial \(\vg\), spectral graph filters parametrize \(\hat{g}: [0, 2] \to \R\) in the spectral domain and yield \(\mV^\top \vg \vcentcolon = \hat{g}(\veigval) \in \R^{n}\) by probing at the eigenvalues. %

\textbf{Spatial Message Passing Graph Neural Networks (MPGNNs)} circumvent the eigendecomposition via polynomial \(\graphfilterfn(\veigval)_u = \sum_{j=0}^p \gamma_j \eigval_u^j\) since \(\Eigvec (\graphfilter \odot [\Eigvec^\top \mX]) %
= \sum_{j=0}^p \gamma_j \mL^j \mX\). In practice, many MPGNNs use \(p=1\): \(\mH^{(l)} = ( \gamma_0 \mI + \gamma_1 \mL) \mH^{(l-1)}\), with \(\mH^{(0)} = \mX\), stack \(1 \le l \le \ell\) layers of node-wise feature transformations with activations \(\sigma(\cdot)\). %
We refer to \citet{balcilar_analyzing_2021} for a spectral interpretation of  MPGNNs like GAT~\citep{velickovic_graph_2018} or GIN~\citep{xu_how_2019}.

\section{Method}\label{sec:method}

\begin{wrapfigure}[24]{r}{0.36\textwidth}
    \vspace{-15pt}
    \centering
    \includegraphics[width=\linewidth]{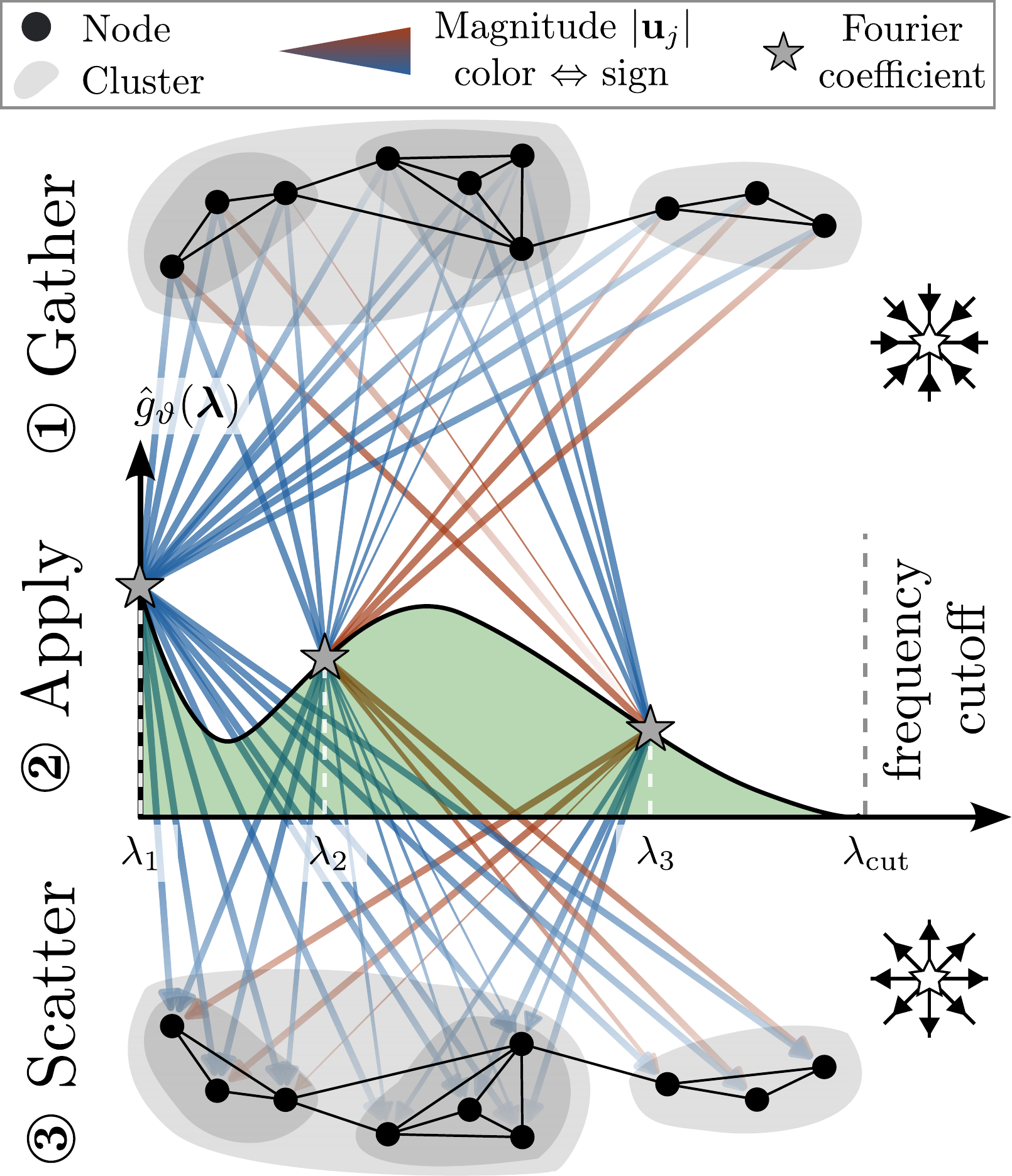}
    \caption{\textbf{Message-passing interpretation} of \(\Eigvec (\graphfilternn \odot [ \Eigvec^\top \mX])\) (spectral filter): via the Fourier coefficients they may \textbf{exchange information globally} %
    and allow \textbf{intra- and inter-cluster message passing}. Edge width/color denotes the magnitude/sign of \(\Eigvec\).
    }
    \label{fig:spectral_filter}
\end{wrapfigure}
\emph{\nameacr{}} symbiotically pair spatial  \smash{\(\spat(\mH^{(l-1)}; \mA)\)} MPGNNs and \resizebox{\width}{\height}{\(\spec(\mH^{(l-1)}; \Eigvec, \veigval)\)} filters, using a partial eigendecomposition. %
Even though the spectral filter operates on a truncated eigendecomposition (\textbf{spectrally bounded}), it is \textbf{spatially unbounded}. Conversely, spatial MPGNNs are \textbf{spatially bounded} yet \textbf{spectrally unbounded} (see \autoref{fig:specspat_schematic}).

A spectrally bounded filter is sensible for modeling global pair-wise interactions, considering its \textbf{message-passing interpretation} of \autoref{fig:spectral_filter}. Conceptually, a spectral filter consists of three steps: \ding{172} \texttt{Gather}: The multiplication of the node signal with the eigenvectors \(\eigvec_u^\top \mX\) (GFT) is a weighted and signed aggregation over all nodes; \ding{173} \texttt{Apply}: the ``Fourier coefficients'' are weighted; and \ding{174} \texttt{Scatter} broadcasts the signal \smash{\(\eigvec_u \hat{\mX}\)} back to the nodes (inverse GFT). The first eigenvector (here for \(\mL = \mD - \mA\)) acts like a ``virtual node''~\citep{gilmer_neural_2017} (see also \autoref{app:virtualnode}). That is, it calculates the average embedding and then distributes this information, potentially interlayered with neural networks. Importantly, the other eigenvectors effectively allow messages to be passed within or between clusters. As we show for exemplary graphs in \autoref{fig:spectral_filter_extended}, low frequencies/eigenvalues capture coarse structures, while high(er) frequencies/eigenvalues capture details. For example, the second eigenvector in \autoref{fig:spectral_filter_extended:circ_ladder} contrasts the inner with the outer rectangle, while the third eigenspace models both symmetries up/down and left/right. In conclusion, \nameacr{} augment spatial message-passing with a \emph{graph-adaptive hierarchy} (spectral filter). Thus, \textbf{\nameacr{} distill many important properties of hierarchical message-passing schemes}~\citep{bodnar_deep_2021}\textbf{, graph-rewirings}~\citep{di_giovanni_over-squashing_2023}\textbf{, pooling~}\citep{lee_self-attention_2019} etc. We provide further instruction and examples in \autoref{app:method_visualization}.

\begin{wrapfigure}[8]{r}{0.68\textwidth}
    \vspace{-15pt}
    \centering
    \sidesubfloat[]{\includegraphics[width=0.9125\linewidth]{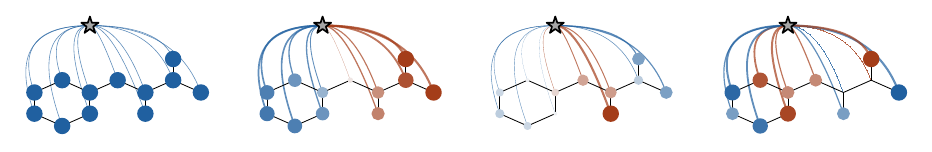}\label{fig:spectral_filter_extended:phenylalanine}}\\
    \vspace{-5pt}
    \sidesubfloat[]{\includegraphics[width=0.9125\linewidth]{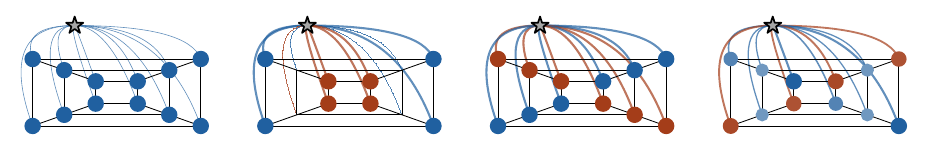}\label{fig:spectral_filter_extended:circ_ladder}}\\
    \vspace{-3pt}
    \caption{Further hierarchies (lowest eigenspaces).}
    \label{fig:spectral_filter_extended}
\end{wrapfigure}

\textbf{\nameacrsing 's composition.} We study (\ref{eq:specspat}) an \emph{additive combination} for its simpler approximation-theoretic interpretation (\autoref{sec:approximation}), or (\ref{eq:specspat_seq}) an \emph{arbitrary sequence of filters} due to its flexibility. In both cases, residual connections may be desirable (see \autoref{app:method_composition}). %
\begin{equation}\label{eq:specspat}
    \mH^{(l)} = \spec^{(l)}(\mH^{(l-1)}; \Eigvec, \veigval) + \spat^{(l)}(\mH^{(l-1)}; \mA)
\end{equation}
\begin{equation}\label{eq:specspat_seq}
 \mH^{(\ell)} = (h^{(\ell)} \circ h^{(\ell-1)} \circ \dots \circ h^{(1)})(\mH^{(0)}) \quad \text{ with } h^{(j)} \in \{\spec, \spat\}
\end{equation}

\textbf{Spectral Filter.} 
The %
building block that turns a spatial MPGNN into an \nameacrsing{} is the spectral filter:
\begin{equation}\label{eq:spec}
    \spec^{(l)}(\mH^{(l-1)}; \Eigvec, \veigval) = \Eigvec \Big( \graphfilternnl \odot \big[\Eigvec^\top f_\theta^{(l)}(\mH^{(l-1)})\big] \Big)
\end{equation}
with a point-wise embedding transformation \smash{\(f_\theta^{(l)}: \R^{n \times d^{(l-1)}} \to \R^{n \times d^{(l)}}\)} and a learnable spectral filter \(\graphfilternnl \in \R^{k \times d^{(l)}}\) parameterized element-wise as \smash{\(\graphfilternnl_{u,v} \coloneq \graphfilterfn^{(l)}_v(\eigval_u; \vartheta_v)\)}  (see \autoref{sec:filter_parametrization}).  We largely omit notation for the partial \(\EVD(\mL, k)\) %
since \(\graphfilterfn(\lambda_j)=0\) for \(j > k\).  
Importantly, spectral filters (\autoref{eq:spec}) of this structure are permutation equivariant. 
All proofs are deferred to \autoref{app:proofs}. %

\begin{restatable}[]{theorem}{theorempermutationequivariance}~\label{theorem:permutationequivariance} 
    \(\spec(\mH^{(l-1)}; \EVD(\mL))\) %
    of \autoref{eq:spec} is equivariant to all \(n \times n\) permutations \(\mP \in \mathcal{P}\): \(\spec(\mP \mH^{(l-1)}; \EVD(\mP\mL\mP^\top)) = \mP\spec(\mH^{(l-1)}; \EVD(\mL))\).
\end{restatable}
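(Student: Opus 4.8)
The plan is to recast the spectral filter of \autoref{eq:spec} as a \emph{matrix function of the Laplacian} applied column-wise to the output of the point-wise feature map, and then to combine two elementary equivariances — that of point-wise maps and that of matrix functions under orthogonal conjugation. A more pedestrian route would be to substitute \((\mP\Eigvec, \veigval)\) as ``the'' eigendecomposition of \(\mP\mL\mP^\top\) and expand directly, but that forces one to argue separately that the output does not depend on which eigenbasis \(\EVD\) happens to return; the matrix-function viewpoint disposes of both issues at once.

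First I would observe that \(f_\theta^{(l)}\) acts identically on each row (node) of its input and does not depend on the graph structure, hence commutes with row permutations: \(f_\theta^{(l)}(\mP\mH^{(l-1)}) = \mP f_\theta^{(l)}(\mH^{(l-1)})\); write \(\mY \coloneq f_\theta^{(l)}(\mH^{(l-1)})\). Next, since the filter entry \(\graphfilterfn^{(l)}_v(\eigval_u;\vartheta_v)\) depends on the eigenvector index \(u\) only through the eigenvalue \(\eigval_u\), the \(v\)-th output column of \(\spec^{(l)}\) equals \(\Eigvec\operatorname{diag}\!\big(\graphfilterfn^{(l)}_v(\veigval)\big)\Eigvec^\top \mY_{:,v} = \graphfilterfn^{(l)}_v(\mL)\,\mY_{:,v}\), where \(\graphfilterfn^{(l)}_v(\mL) \coloneq \sum_{j=1}^{n} \graphfilterfn^{(l)}_v(\eigval_j)\,\eigvec_j\eigvec_j^\top = \sum_{\mu}\graphfilterfn^{(l)}_v(\mu)\,\Pi_\mu\), with the convention \(\graphfilterfn^{(l)}_v(\eigval_j)=0\) for \(j>k\) (matching the paper's truncation) and the last sum running over the distinct eigenvalues \(\mu\) of \(\mL\) with orthogonal eigenspace projectors \(\Pi_\mu\). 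This exhibits \(\spec^{(l)}\) as depending on the eigendecomposition only through \(\{\Pi_\mu\}\), which are genuine functions of \(\mL\) and hence invariant to sign flips and intra-eigenspace rotations of the columns of \(\Eigvec\).

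I would then finish by conjugation: for any permutation matrix \(\mP\), \(\mP\mL\mP^\top = (\mP\Eigvec)\Eigval(\mP\Eigvec)^\top\) with \(\mP\Eigvec\) orthogonal, so \(\mP\mL\mP^\top\) has exactly the same eigenvalues as \(\mL\) — hence literally the same filter functions \(\graphfilterfn^{(l)}_v(\cdot)\) — and eigenspace projectors \(\Pi_\mu(\mP\mL\mP^\top) = \mP\,\Pi_\mu(\mL)\,\mP^\top\), whence \(\graphfilterfn^{(l)}_v(\mP\mL\mP^\top) = \mP\,\graphfilterfn^{(l)}_v(\mL)\,\mP^\top\) for every \(v\). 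Combining this with the point-wise step and \(\mP^\top\mP = \mI\), the \(v\)-th column of \(\spec^{(l)}(\mP\mH^{(l-1)}; \EVD(\mP\mL\mP^\top))\) is \(\graphfilterfn^{(l)}_v(\mP\mL\mP^\top)\,[f_\theta^{(l)}(\mP\mH^{(l-1)})]_{:,v} = \mP\,\graphfilterfn^{(l)}_v(\mL)\,\mP^\top\mP\,\mY_{:,v} = \mP\,\graphfilterfn^{(l)}_v(\mL)\,\mY_{:,v}\), which is the \(v\)-th column of \(\mP\,\spec^{(l)}(\mH^{(l-1)}; \EVD(\mL))\); stacking over the output channels \(v\) yields the claimed identity.

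The main obstacle — and the only genuinely delicate point — is that \(\EVD\) is not single-valued (eigenvectors are fixed only up to sign, and up to an orthogonal rotation within any degenerate eigenspace), so ``equivariance of \(\EVD\)'' is not literally well posed; the reduction to the spectral projectors \(\Pi_\mu\) in the second step is precisely what resolves this. A secondary, milder caveat is hidden in the truncation: for the partial \(\EVD(\mL,k)\) to coincide with the matrix function obtained by zeroing \(\graphfilterfn^{(l)}_v\) above the cutoff, the cutoff must not split a degenerate eigenspace (\(\eigval_k < \eigval_{k+1}\), or ties broken consistently) — true in generic position and enforceable in practice. Both points concern only well-definedness; once they are in place the argument reduces to the short computation above.
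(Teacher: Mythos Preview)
Your proof is correct and follows essentially the same route as the paper's: both decompose into the row-wise equivariance of \(f_\theta^{(l)}\) and the observation that the \(v\)-th output column is \(\sum_{u}\graphfilterfn^{(l)}_v(\eigval_u)\,\eigvec_u\eigvec_u^\top\,\mY_{:,v}\), which is a well-defined function of \(\mL\) (hence conjugates under \(\mP\)), and both flag the same truncation caveat that the cutoff must not split an eigenspace. The only differences are cosmetic: you package well-definedness via the spectral projectors \(\Pi_\mu\), whereas the paper verifies invariance under the two eigendecomposition ambiguities (phase/sign and intra-eigenspace unitary rotation) explicitly, and does so in the more general Hermitian setting.
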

\textbf{Depth-wise separable convolution}~\citep{sifre_rigid-motion_2014, howard_mobilenets_2017}: Applying different filters for each dimension is computationally more convenient than with spatial filters. While ``full'' convolutions are also possible, we find that such a construction is more prone to over-fitting. In practice, we even
use parameter sharing and apply fewer filters than dimensions to counteract over-fitting. %

\textbf{Feature transformations \(f_\theta^{(l)}\).} As sketched in \autoref{fig:spectral_filter} \& \ref{fig:spectral_filter_extended}, all nodes participate in the global data transfer. While this global message-passing scheme is \emph{graph-adaptive}, it is fixed for the graph at hand. For adaptivity, we typically consider non-linear feature transformations \smash{\(f_\theta^{(l-1)}(\mH^{(l-1)})\)}, like gating mechanism \(f_\theta^{(l-1)}(\mH^{(l-1)}) = \mH^{(l-1)} \odot \sigma'(\mH^{(l-1)}\mW_G^{(l)} + \vec{1}\vb^\top))\) with element-wise multiplication \(\odot\), SiLU function \(\sigma'\), learnable weight \(\mW\), and bias \(\vb\). A linear transformation \smash{\(f_\theta^{(l)}(\mH^{(l-1)}) = \mH^{(l-1)} \mW^{(l)}\)} is another interesting special case since we may first apply the GFT and then the transformation: \((\Eigvec^\top \mH^{(l-1)}) \mW^{(l)}\). In \autoref{sec:nn_spec}, we extend this linear transformation to a neural network in the spectral domain by adding multiple transformations and nonlinearities. Beyond the cases above, all theoretical guarantees hold if a $\theta$ exists such that $f_\theta = \mI$. 

\textbf{High-resolution filters for low frequencies.} Another perspective on \emph{spectral filters} is that they \emph{are highly discriminative between the frequencies} and, e.g., can readily access a single eigenspace. %
Yet, for efficiency, 
we limit the spectral filter to a specific frequency band. \emph{This choice of band does not decide on, say, low-pass behavior; it solely determines where to increase the spectral selectivity.} Our method and theoretical guarantees adapt accordingly to domain-specific choices for the spectral filter's frequency band.
In all other cases, a sensible default is to focus on the low frequencies: %
(1) Low frequencies model the smoothest global signals w.r.t.\ the high-level graph structure (see \autoref{fig:spectral_filter} \& \ref{fig:spectral_filter_extended}). (2) \citet{gama_stability_2020} find that, under a relative perturbation model (perturbation budget proportional to connectivity), stability implies $C$-integral-Lipschitzness ($\exists C > 0\colon \, |\lambda \nicefrac{d\hat g}{d\lambda}| \leq C$), i.e., the filter can vary arbitrarily around zero but must level out towards larger $\lambda$. As many graph problems discretize continuous phenomena, stability to graph perturbations is a strong domain-agnostic prior. (3) Many physical long-range interactions are power laws with a flattening frequency response. For example, we construct an explicit graph filter modeling the electric potential of charges in a 1D ``ion crystal'' (\autoref{app:groundtruthfilter}) and find that a low-pass window is optimal. (4) Sequence models like Hyena~\citep{poli_hyena_2023} apply global low-pass filters through their exponential windows. (5) \citet{cai_connection_2023} prove that an MPGNN plus virtual node (see \autoref{app:virtualnode}) can emulate DeepSets~\citep{zaheer_deep_2017} and, thus, approximate self-attention to any precision. Nonetheless, we find that a virtual node alone does not necessarily yield good generalization (\autoref{sec:over-squashing} \& \ref{sec:emp_long}). (6) Nonlinear-\\ities ``spill'' features between frequency bands~\citep{gama_stability_2020}. %

\subsection{\nameacr{} Vanquish Over-Squashing}\label{sec:over-squashing}

\begin{wrapfigure}[8]{}{0.254\textwidth}
    \vspace{-65pt}
    \centering
    \includegraphics[width=\linewidth]{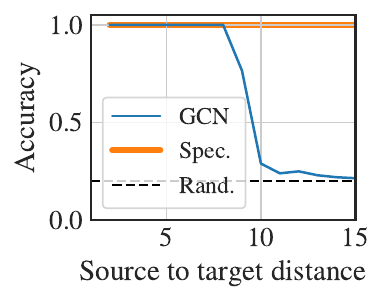}\\
    \vspace{-8pt}
    \caption{Spectral filters do not exhibit over-squashing on ``Clique Path'' graphs
    \citep{di_giovanni_over-squashing_2023}.
    }
    \label{fig:di_givanni}
\end{wrapfigure}
\mbox{\citet{alon_bottleneck_2020}} pointed out that MPGNNs must pass information through bottlenecks that connect different communities using fixed-size embedding vectors. 
\citet{topping_understanding_2022,di_giovanni_over-squashing_2023} formalize this via an $L^1$-norm Jacobian sensitivity analysis: \smash{\(\|\nicefrac{\partial \mathbf{h}_v^{(\ell)}}{\partial \mathbf{h}_u^{(0)}}\|_{L^1}\)} modeling the output's \smash{$\mathbf{h}_v^{(\ell)}$} change if altering input signal \smash{$\mathbf{h}_u^{(0)}$}. MPGNNs' Jacobian sensitivity typically decays with the node distance $r$ as \smash{$\mathcal{O}\left(\exp \left(-r\right)\right)$} if the number of walks between the two nodes is small. We restate the theorems of \citet{di_giovanni_over-squashing_2023} in \autoref{app:over-squashing}.

\nameacr{} are not prone to such an exponential sensitivity decay due to their global message scheme. We formalize this in \autoref{theorem:specspat_over-squashing}, refer to \autoref{fig:spectral_filter_extended} for intuition and \autoref{fig:di_givanni} for empirical verification.
\begin{theorem}\label{theorem:specspat_over-squashing}
    An \(\ell\)-layer \nameacrsing{} %
    can be parametrized s.t.\ output $\mathbf{h}_v^{(\ell)}$%
    has a uniformly lower-bounded Jacobian sensitivity on a connected graph:
\smash{\(    
    \|\nicefrac{\partial \mathbf{h}_v^{(\ell)}%
    }{\partial \mathbf{h}_u^{(0)}}\|_{L^1} \geq \nicefrac{C_\vartheta d}{m}
\)}
with rows \smash{$\vh_u^{(0)}$, $\vh_v^{(\ell)}$} of \smash{$\mH^{(0)}$, $\mH^{(\ell)}$} for nodes $u$, $v$ $\in \mathcal{G}$, a parameter-dependent $C_\vartheta$, network width $d$ and edge count $m$.
\end{theorem}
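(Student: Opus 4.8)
The plan is to exhibit an explicit parametrization of a (one-layer suffices, but $\ell$ layers with identity feature maps on the remaining layers) $\nameacrsing{}$ whose spectral component isolates the first eigenvector $\eigvec_1$ of $\mL = \mD - \mA$ (or equivalently the normalized Laplacian, where the Perron eigenvector is $\mD^{1/2}\vone/\|\mD^{1/2}\vone\|$), and then compute the Jacobian $\nicefrac{\partial \vh_v^{(\ell)}}{\partial \vh_u^{(0)}}$ directly. Concretely, choose $f_\theta = \mI$ (permissible by the remark that all guarantees hold when such a $\theta$ exists), let the spatial branch be switched off or set to identity, and set the learnable filter $\graphfilternn$ to be supported only on the smallest eigenvalue $\eigval_1 = 0$, i.e.\ $\graphfilterfn(\eigval_1) = c_\vartheta$ and $\graphfilterfn(\eigval_u) = 0$ for $u \ge 2$. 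Then $\spec(\mH^{(0)}) = c_\vartheta\, \eigvec_1 \eigvec_1^\top \mH^{(0)}$, and since $\eigvec_1 \propto \mD^{1/2}\vone$ (normalized Laplacian) or $\eigvec_1 = \vone/\sqrt n$ (combinatorial Laplacian), the operator $\eigvec_1\eigvec_1^\top$ is the rank-one "virtual node" averaging operator with strictly positive entries.

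The key computation is then: for each feature channel, $\nicefrac{\partial (\vh_v^{(\ell)})_c}{\partial (\vh_u^{(0)})_c} = c_\vartheta (\eigvec_1)_v (\eigvec_1)_u$, which is a positive scalar, so the $L^1$ norm of the Jacobian block (summing over the $d$ channels, using depth-wise separable structure so each channel gets its own filter or a shared one) is $d \cdot |c_\vartheta| \cdot (\eigvec_1)_v (\eigvec_1)_u$. For the combinatorial Laplacian this is exactly $d|c_\vartheta|/n$; for the normalized Laplacian, $(\eigvec_1)_u (\eigvec_1)_v = \sqrt{\mathrm{deg}(u)\mathrm{deg}(v)}/\big(\sum_w \mathrm{deg}(w)\big) = \sqrt{\mathrm{deg}(u)\mathrm{deg}(v)}/(2m) \ge 1/(2m)$. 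Either way one obtains a bound of the form $\|\nicefrac{\partial \vh_v^{(\ell)}}{\partial \vh_u^{(0)}}\|_{L^1} \ge C_\vartheta d / m$ with $C_\vartheta$ absorbing $|c_\vartheta|$ and the constant; absorbing the factor into $C_\vartheta$ as the theorem permits makes the statement clean. The crucial point is that this lower bound is \emph{uniform in $r$} (the graph distance between $u$ and $v$), in stark contrast to the $\mathcal O(\exp(-r))$ decay of pure MPGNNs recalled just before the theorem — and connectedness guarantees $\eigval_1$ is a simple eigenvalue with a strictly positive eigenvector, so $\eigvec_1\eigvec_1^\top$ has no zero entries and the bound is genuinely positive for every pair $u,v$.

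For the $\ell$-layer version one just needs the chain rule to not kill the sensitivity: take layer $1$ to be the spectral virtual-node filter above and layers $2,\dots,\ell$ to be identity (again via $f_\theta = \mI$ and trivial filters/spatial maps, or residual connections which the paper already flags as available in \autoref{app:method_composition}), so that $\vh_v^{(\ell)} = \vh_v^{(1)}$ and the one-layer computation carries through verbatim. I should remark that this is a \emph{lower} bound witnessed by a specific parametrization — the theorem asserts the network \emph{can} be parametrized this way, not that every parametrization has this property — so no infimum over parameters is needed.

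**Main obstacle.** The only real subtlety is handling the normalization/Laplacian convention cleanly: the Perron eigenvector of the normalized Laplacian is $\mD^{1/2}\vone$ up to scaling, so $(\eigvec_1)_u(\eigvec_1)_v$ is not exactly $1/n$ but $\sqrt{\deg u \deg v}/(2m)$, which still lower-bounds as $\ge 1/(2m)$ since degrees are $\ge 1$ on a connected graph with $n \ge 2$ — and the edge-count-normalized form $C_\vartheta d/m$ in the statement is precisely what this yields. A secondary bookkeeping point is the depth-wise-separable vs.\ shared-filter distinction: with one filter per channel one gets the clean factor $d$; with parameter sharing across channels the same factor $d$ appears because the same positive scalar multiplies each of the $d$ diagonal Jacobian entries. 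Neither point is deep, but stating the constant $C_\vartheta$ carefully (it depends on the chosen filter magnitude $c_\vartheta$ and, if one refuses to absorb it, on $\min_w \deg w$) is what makes the proof airtight.
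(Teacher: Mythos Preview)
Your proposal is correct and mirrors the paper's own proof: both zero out the spatial branch, set $f_\theta = \mI$, choose the spectral filter to be the indicator on $\lambda=0$ (the virtual-node filter), and then compute the Jacobian entry as $c_\vartheta\,(\eigvec_1)_v(\eigvec_1)_u = c_\vartheta\sqrt{d_u d_v}/(2m) \ge c_\vartheta/(2m)$, summing over $d$ channels. The only cosmetic difference is that the paper applies the same virtual-node filter at \emph{every} layer and uses idempotence of the rank-one projector (picking up $K_\vartheta^\ell$ instead of $c_\vartheta$), whereas you apply it once and make the remaining layers act as the identity---both are valid and lead to the same bound.
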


This is true since \nameacr{} contain a virtual node as a special case with \(\graphfilterfn_{\vartheta}^{(l)}(\veigval)=\mathds{1}_{\{0\}}\), with $\mathds{1}_{\mathcal{S}}$ denoting the indicator function of a set $\mathcal{S}$  (see also \autoref{app:virtualnode}).
However, we find that a virtual node %
is insufficient for some long-range tasks, including our long-range clustering (\texttt{LR-CLUSTER}) of \autoref{fig:cluster_gmm_layers}. Hence, the exponential sensitivity decay of spatial MPGNNs only shows their inadequacy in long-range settings. Proving its absence is not sufficient to quantify long-range modeling capabilities. We close this gap with our subsequent analysis rooted in polynomial approximation theory.

\subsection{Approximation Theory: Superior Error Bounds Despite Spectral Cutoff}\label{sec:approximation}

We study how well ``idealized'' GNNs (IGNNs) can be approximated by an \nameacrsing{}. Each IGNN layer \(l\) can express convolution operators \smash{$g\Layer$} of \textit{any} spectral form \smash{$\hat g\Layer \colon [0,2] \to \sR$}. We approximate IGNNs with \nameacr{} from \autoref{eq:specspat}, with a spectral filter as in \autoref{eq:spec} and a spatial part parametrized by a polynomial. 
While we assume here that the \nameacrsing{} spectral filter is bandlimited to and a universal approximator on the interval \([0, \eigval_{\text{max}}]\), the findings generalize to, e.g., a high-pass interval. In the main body, we focus on the key insights for architectures without nonlinear activations. Thus, we solely need to consider a single layer. %
\citet{wang_how_2022} prove that even linear IGNNs can produce any one-dimensional output under certain regularity assumptions on the graph and input signal. In \autoref{app:proof_approximation}, we cover the generic setting including nonlinearities.

\begin{wrapfigure}[15]{r}{0.62\textwidth}
    \vspace{-17pt}
    \centering
    \includegraphics[width=\linewidth]{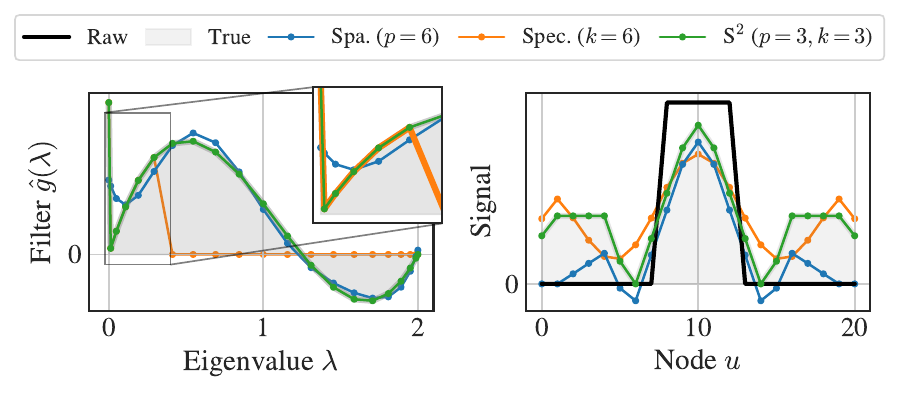}\\
    \vspace{-15pt}
    \subfloat[Spectral domain]{\hspace{0.49\linewidth}\label{fig:spec_spat_contrast:a}}
    \subfloat[Spatial domain]{\hspace{0.49\linewidth}}
    \\\vspace{-3pt}
    \caption{\nameprefix{} filter perfectly approximates true filter (a) with a discontinuity at \(\lambda=0\), while polynomial (``Spa.'') and spectral (``Spec.'') do not. (b) shows the responses on a path graph.}
    \label{fig:spec_spat_contrast}
\end{wrapfigure}
\textbf{Locality relates to spectral smoothness.} The locality of the true/ideal filter $g$ is related to the smoothness of its Fourier transform $\hat g$. For instance, if $g$ is a low-order polynomial of $\mL$, it is localized to a few-hop neighborhood, and $\hat g$ is regularized to vary slowly (\autoref{fig:spec_spat_contrast:a} w/o discontinuity). The other extreme is a discontinuous spectral filter $ \hat g$, such as the entirely non-local virtual node filter, $\hat g = \mathds{1}_{\{0\}}$ (discontinuity in \autoref{fig:spec_spat_contrast:a}, details in \autoref{app:virtualnode}). This viewpoint of spectral smoothness illuminates the limitations of finite-hop message passing from an angle that complements spatial analyses in the over-squashing picture. It informs a lower bound on the error, which shows that spatial message passing, i.e, order-\(p\) polynomial graph filters $g_{\bm{\gamma}_p}$ with $p+1$ coefficients \(\bm{\gamma}_p \in \sR^{p+1}\), can converge exceedingly slowly (slower than any inverse root (!) of $p$) to a discontinuous ground truth in the Frobenius-induced operator norm:
\begin{restatable}[]{theorem}{theoremdiscontinuous}~\label{thm:discontinuous}
    Let $\hat g$ be a discontinuous spectral filter. For any approximating sequence $\left(g_{\bm{\gamma}_p}\right)_{p\in\sN}$ of polynomial filters, an adversarial sequence $(\mathcal{G}_p)_{p \in \sN}$ of input graphs exists such that
    \[\nexists \alpha \in \sR_{>0} \colon \, \sup_{0 \neq \bm X \in \sR^{|\mathcal{G}_p| \times d}}\frac{\lVert(g_{\bm{\gamma}_p} - g) \ast_{\gG_p} \bm{X}\rVert_{\mathrm{F}}}{\lVert \bm{X} \rVert_{\mathrm{F}}} = \mathcal{O}\left(p^{-\alpha}\right)\]
\end{restatable}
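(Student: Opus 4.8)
The plan is to reduce the operator-norm lower bound to a pointwise statement about polynomial approximation of a discontinuous function, and then invoke a quantitative Jackson-type obstruction. The key observation is that for a graph $\gG$ with Laplacian $\mL = \Eigvec\Eigval\Eigvec^\top$, the operator $(g_{\bm\gamma_p} - g)\ast_\gG \cdot$ is diagonalized by $\Eigvec$, so its Frobenius-induced operator norm equals $\max_{u} |\graphfilterfn_{\bm\gamma_p}(\eigval_u) - \hat g(\eigval_u)|$, the sup over the realized spectrum. Hence controlling the approximation error over \emph{all} graphs in the sequence amounts to controlling $\sup_{\lambda \in S_p} |\graphfilterfn_{\bm\gamma_p}(\lambda) - \hat g(\lambda)|$ where $S_p \subseteq [0,2]$ is the set of eigenvalues we are allowed to place via $\gG_p$. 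The adversary's freedom is precisely the freedom to choose which eigenvalues appear; in particular, one can always realize eigenvalues arbitrarily close to the discontinuity point $\lambda_0$ of $\hat g$ (e.g.\ $\lambda_0 = 0$ is hit by the constant eigenvector of any connected graph, and nearby eigenvalues are realizable by, say, long path graphs or weakly-coupled cluster constructions whose spectra densely fill a neighborhood of $\lambda_0$).

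First I would make the reduction precise: fix the jump location $\lambda_0$ and jump size $\delta = |\hat g(\lambda_0^+) - \hat g(\lambda_0^-)| > 0$ (or a one-sided analog if $\lambda_0$ is an endpoint). Since each $\graphfilterfn_{\bm\gamma_p}$ is a polynomial of degree $p$, it is in particular continuous, so on any interval straddling $\lambda_0$ it cannot track the jump: there is a point $\lambda_p^\star$ within distance $\epsilon$ of $\lambda_0$ where $|\graphfilterfn_{\bm\gamma_p}(\lambda_p^\star) - \hat g(\lambda_p^\star)| \geq \delta/2 - (\text{oscillation bound})$. The real content is quantitative: I want to show the error cannot decay like $p^{-\alpha}$ for \emph{any} $\alpha>0$. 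For this I would use the classical fact (Bernstein/Jackson lower bounds for approximating a step function, or a bound via the Markov brothers' inequality controlling the derivative of a bounded polynomial) that a degree-$p$ polynomial which is bounded by, say, $M$ on $[0,2]$ can change by at most $\mathcal{O}(M p^2 \cdot \text{length})$ over a subinterval — so to jump by $\delta$ it needs the subinterval to have length $\gtrsim \delta/(Mp^2)$, meaning it misfits $\hat g$ badly on a window of that width. But for the \emph{operator} norm over a well-chosen graph, a single badly-fit eigenvalue suffices, so one only needs \emph{one} point of large error, not a window. The cleanest route: pick the graph $\gG_p$ so that its spectrum contains two eigenvalues $\lambda_0 - \eta_p$ and $\lambda_0 + \eta_p$ with $\eta_p \to 0$ slowly (e.g.\ $\eta_p = 1/\log p$); then $\graphfilterfn_{\bm\gamma_p}$ must nearly hit $\hat g(\lambda_0^-)$ at one and $\hat g(\lambda_0^+)$ at the other, yet by continuity/compactness of the polynomial family this forces error $\geq \delta/4$ at one of them unless $\graphfilterfn_{\bm\gamma_p}$ has derivative blowing up — which, if we had $p^{-\alpha}$ decay uniformly, we can rule out by a Markov-inequality argument. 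Chaining these contradictions across $p$ and diagonalizing the graph choices yields the claim.

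The step I expect to be the \textbf{main obstacle} is making the adversarial graph construction $(\gG_p)_{p\in\sN}$ genuinely uniform in $p$ and rigorous about which eigenvalues are realizable near $\lambda_0$ for the \emph{normalized} Laplacian $\mL = \mI - \mD^{-1/2}\mA\mD^{-1/2}$, whose spectrum lives in $[0,2]$ but is not as freely placeable as the combinatorial Laplacian's. I would handle $\lambda_0 = 0$ (the virtual-node filter case, which is the motivating example and the one tied to Figure~\ref{fig:spec_spat_contrast}) first and most carefully: the all-ones vector gives $\eigval_1 = 0$ exactly, and the spectral gap $\eigval_2$ of, e.g., a path or a barbell-type graph tends to $0$, so the spectrum accumulates at $0$; I then only need the polynomial to simultaneously approximate $\hat g(0) = $ jump-value and $\hat g(\eigval_2) \to \hat g(0^+) \neq \hat g(0)$. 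For interior $\lambda_0$, I would cite or adapt a spectral-density result (e.g.\ that unions of cycles/paths of varying lengths have eigenvalues dense in $[0,2]$) to guarantee realizability. A secondary subtlety is being careful that $g$ itself is a fixed convolution operator across the sequence — here $\hat g$ is a fixed function, so $g\ast_{\gG_p}$ simply means ``apply the filter $\hat g$ via $\gG_p$'s eigendecomposition,'' and no consistency issue arises. The rest — converting ``no $\alpha$ works'' into the stated $\nexists \alpha$ formula — is bookkeeping: suppose for contradiction some $\alpha, C$ give the bound for all large $p$; then in particular the per-point error at $\lambda_0 \pm \eta_p$ is $\leq C p^{-\alpha}$, so $\graphfilterfn_{\bm\gamma_p}$ moves by $\geq \delta - 2Cp^{-\alpha} \geq \delta/2$ over an interval of width $2\eta_p$, forcing (Markov) $\|\graphfilterfn_{\bm\gamma_p}\|_\infty \gtrsim \delta/(\eta_p p^2)$ on $[0,2]$ unless... — and choosing $\eta_p$ decaying just slower than any polynomial (so $\eta_p p^2 \to \infty$ but $\eta_p p^{\alpha} \to 0$ is impossible to arrange, so instead I take the error point to be $\lambda_0$ exactly and a nearby realizable eigenvalue, extracting the contradiction from boundedness of $\hat g$ plus equicontinuity failure). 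I will tighten which of these two closely related arguments is cleanest during write-up, but both deliver the sub-polynomial (indeed, arbitrarily slow) convergence rate asserted.
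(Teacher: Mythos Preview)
Your reduction is correct: the Frobenius-induced operator norm equals $\max_u |\hat g_{\bm\gamma_p}(\lambda_u) - \hat g(\lambda_u)|$ over the realized spectrum, so the problem is pointwise. But you then overcomplicate. Your first observation---that a polynomial is continuous and therefore cannot track a jump, so there is a point $\lambda_p^\star$ near $\lambda_0$ with error $\geq \delta/2 - o(1)$---already gives a \emph{constant} lower bound on $\|\hat g_{\bm\gamma_p} - \hat g\|_\infty$, independent of $p$. A constant is not $\mathcal{O}(p^{-\alpha})$ for any $\alpha>0$, and you are done at the sup-norm level. The Markov-inequality detour (bounding how fast a degree-$p$ polynomial can move over a window of width $\eta_p$) is unnecessary and, as you yourself notice, does not close cleanly: Markov only tells you the transition window has width $\gtrsim 1/p^2$, which is not a contradiction. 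Drop it.

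The paper takes a different route to the same sup-norm conclusion: it composes with $\cos(\cdot)+1$ to turn algebraic polynomials into trigonometric polynomials and invokes Bernstein's converse theorem (rate-$p^{-\alpha}$ trigonometric approximation implies $\alpha$-H\"older continuity), deriving a contradiction with the discontinuity of $\hat g$. This is more machinery than your continuity argument needs, but it is black-box and handles arbitrary discontinuities without assuming one-sided limits.

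Where the paper is genuinely cleaner than your sketch is the graph construction. Rather than arguing density of simple-graph spectra near $\lambda_0$, the paper allows \emph{weighted} graphs and exhibits an explicit 3-node weighted cycle whose normalized Laplacian has eigenvalues $0,\ \sin^2\theta,\ 2-\sin^2\theta$; tuning $\theta$ realizes any $\lambda\in[0,2]$ exactly. This sidesteps all the realizability worries you flag as your main obstacle. If you insist on simple graphs, you need an extra piecewise-continuity hypothesis on $\hat g$ so that a nearby realizable eigenvalue inherits a large error; the paper treats this in a separate remark.
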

\textbf{Superior \nameacrsing{} error bound.} A spatio-spectral convolution wins over a purely spatial filter when the sharpest irregularities of the ground truth $\hat g$ are within reach of its expressive spectral part. The spatial part, which can ``focus'' on learning the remaining, smoother part outside of this window, now needs much fewer hops to give a faithful approximation. We illustrate this principle in \autoref{fig:spec_spat_contrast} where we approximate an additive combination of an order-three polynomial filter with low-pass discontinuous. Only the \nameprefix{} filter is faithfully approximating this filter. Formally, we find:
\begin{restatable}[]{theorem}{theoremapproximationtheorybasic}~\label{thm:approximationtheory_basic}
    Assume $\hat g\big\lvert_{[\lambda_{\text{cut}}, 2]}$ is $r$-times continuously differentiable on $[\lambda_{\text{cut}}, 2]$, and a bound $K_r(\hat g, \lambda_\text{cut}) \geq 0$ such that $\left \lvert \dr \hat g(\lambda) \right \rvert \leq K_r(\hat g, \lambda_\text{cut}) \, \, \forall \lambda \in [\lambda_{\text{cut}}, 2]$. An approximating \nameacrsing{} sequence with parameters \(\left(\vartheta_p^*, \bm{\gamma}_p^*\right)_{p\in\sN}\) exists such that, for arbitrary graph sequences $(\mathcal{G}_p)_{p \in \sN}$,
    \begin{equation*}
        \sup_{0 \neq \bm X \in \sR^{|\mathcal{G}_p| \times d}}\frac{\lVert(g_{\bm{\gamma}_p^*} + g_{\vartheta^*} - g) \ast_{\gG_p} \bm{X}\rVert_F}{\lVert \bm{X} \rVert_F} = \mathcal{O}\left(K_r(\hat g, \lambda_\text{cut}) p^{-r}\right)
    \end{equation*}
    with a scaling constant that depends only on $r$, not on $\hat g$ or $(\mathcal{G}_p)_{p \in \sN}$.
\end{restatable}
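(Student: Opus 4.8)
The plan is to exploit the additive form of \autoref{eq:specspat}: let the band-limited spectral part absorb the behaviour of the target on $[0,\eigval_{\text{cut}}]$ \emph{exactly at the eigenvalues}, so that the degree-$p$ polynomial spatial part is left only with the \emph{smooth tail} $\hat g\big\lvert_{[\eigval_{\text{cut}},2]}$, where the classical $C^r$ Jackson rate $p^{-r}$ applies. Throughout we may take the feature map to be the identity ($f_\theta=\mI$), so a single layer suffices.

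The first step is to collapse the Frobenius-induced operator norm to a pointwise bound on the spectrum. All three operators in the difference are diagonal in the graph Fourier basis $\Eigvec$: $g_{\bm\gamma_p}\ast_{\gG_p}\bm X=\Eigvec\operatorname{diag}(q_p(\veigval))\Eigvec^\top\bm X$ for the polynomial $q_p(\lambda)=\sum_{j=0}^p\gamma_j\lambda^j$ with coefficients $\bm\gamma_p$; $g_{\vartheta}\ast_{\gG_p}\bm X$ equals \autoref{eq:spec} with $f_\theta=\mI$, i.e.\ $\Eigvec\operatorname{diag}(\hat g_{\vartheta}(\veigval))\Eigvec^\top\bm X$ with the convention $\hat g_{\vartheta}(\eigval_u)=0$ for $u>k$; and $g\ast_{\gG_p}\bm X=\Eigvec\operatorname{diag}(\hat g(\veigval))\Eigvec^\top\bm X$. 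Hence the difference operator is $\Eigvec\operatorname{diag}(\hat e(\veigval))\Eigvec^\top$ with spectral error $\hat e(\eigval)\coloneqq q_p(\eigval)+\hat g_{\vartheta}(\eigval)\,\mathds{1}_{\{\eigval\le\eigval_{\text{cut}}\}}-\hat g(\eigval)$, and substituting $\bm Y=\Eigvec^\top\bm X$ together with orthonormality of $\Eigvec$ yields
\[
\sup_{0\neq\bm X}\frac{\lVert(g_{\bm\gamma_p}+g_{\vartheta}-g)\ast_{\gG_p}\bm X\rVert_F}{\lVert\bm X\rVert_F}=\max_{u}\bigl\lvert\hat e(\eigval_u)\bigr\rvert .
\]
So it is enough to control $\hat e$ on the finitely many eigenvalues of $\gG_p$.

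The second step chooses the two filters. For the spatial part, take $q_p$ to be a near-best polynomial of degree $\le p$ approximating $\hat g$ on $[\eigval_{\text{cut}},2]$: Jackson's theorem for $C^r$ functions gives $\sup_{[\eigval_{\text{cut}},2]}\lvert\hat g-q_p\rvert\le c_r\,p^{-r}\,K_r(\hat g,\eigval_{\text{cut}})$ for $p\ge r$, with $c_r$ depending only on $r$ (the interval-length factor $((2-\eigval_{\text{cut}})/2)^r\le 1$ is absorbed), and we let $\bm\gamma_p^*$ be its coefficient vector (graph-independent). For the spectral part, note $\gG_p$ has at most $k$ eigenvalues in the compact interval $[0,\eigval_{\text{cut}}]$; since the spectral filter is a universal approximator there, we may pick $\vartheta_p^*$ with $\hat g_{\vartheta_p^*}(\eigval_u)=\hat g(\eigval_u)-q_p(\eigval_u)$ at each such eigenvalue (exact interpolation at finitely many points; approximating to within $p^{-r}$ only rescales the constant). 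Then on $\{\eigval_u\le\eigval_{\text{cut}}\}$ the two contributions cancel, $\hat e(\eigval_u)=0$, while on $\{\eigval_u>\eigval_{\text{cut}}\}$ the spectral filter is silent and $\hat e(\eigval_u)=q_p(\eigval_u)-\hat g(\eigval_u)$, which the Jackson estimate bounds by $c_r\,p^{-r}\,K_r(\hat g,\eigval_{\text{cut}})$. Feeding $\max_u\lvert\hat e(\eigval_u)\rvert\le c_r\,p^{-r}\,K_r(\hat g,\eigval_{\text{cut}})$ into the displayed identity gives the claim, with scaling constant $c_r$ depending on $r$ alone and uniformly over the graph sequence $(\gG_p)_p$.

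The main obstacle is the spectral side of the second step: one must be certain the band-limited filter can \emph{annihilate} the error on $[0,\eigval_{\text{cut}}]$ even when $\hat g$ is irregular there (e.g.\ the discontinuous virtual-node filter $\mathds{1}_{\{0\}}$). This works precisely because a finite graph has only finitely many eigenvalues, so interpolation at those points — not uniform approximation of an irregular function — is all that is required, and because the idealized target $g$ itself is only ever probed at those eigenvalues; consequently $\vartheta_p^*$ is permitted to depend on $\gG_p$ while the \emph{rate and constant remain universal}. A secondary point needing care is that the Jackson constant be linear in $K_r$ and independent of $\hat g$ and of the graphs, which requires invoking the estimate in its sharp form. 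Extending the argument to nonlinear, multi-layer $\nameacrsing{}$s (done in \autoref{app:proof_approximation}) then essentially amounts to tracking how these per-layer spectral errors propagate through the activations and subsequent layers.
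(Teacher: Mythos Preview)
Your overall strategy matches the paper's: apply Jackson's theorem on $[\lambda_{\text{cut}},2]$ for the polynomial spatial part, and let the band-limited spectral part absorb the target on $[0,\lambda_{\text{cut}}]$. The reduction of the Frobenius operator norm to $\max_u|\hat e(\lambda_u)|$ is also exactly what the paper does. However, there is a quantifier-order slip in how you construct $\vartheta_p^*$.

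The theorem asserts $\exists\,(\vartheta_p^*,\bm\gamma_p^*)\ \forall\,(\mathcal G_p)$: a \emph{single} choice of parameters must serve every graph sequence. You instead set $\hat g_{\vartheta_p^*}(\lambda_u)=\hat g(\lambda_u)-q_p(\lambda_u)$ by interpolating at the eigenvalues of the particular $\mathcal G_p$, and then state that ``$\vartheta_p^*$ is permitted to depend on $\mathcal G_p$''. That inverts the quantifiers and proves the weaker statement $\forall\,(\mathcal G_p)\ \exists\,(\vartheta_p^*,\bm\gamma_p^*)$. Under the stated order, another graph with different low-end eigenvalues would not be covered by your interpolant.

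The repair is what the paper does and costs nothing extra: use the universal-approximation assumption on the interval, not interpolation at points. Choose $\hat g_{\vartheta_p^*}$ to match (or approximate to within $p^{-r}$) the graph-independent function $\hat g - q_p$ on the \emph{whole} interval $[0,\lambda_{\text{cut}}]$. Since $\hat g$ and $q_p$ depend only on $p$, so does $\vartheta_p^*$. Then for any $\mathcal G_p$: eigenvalues in $[0,\lambda_{\text{cut}}]$ give $|\hat e(\lambda_u)|$ bounded by the (arbitrarily small) spectral approximation error, and eigenvalues in $(\lambda_{\text{cut}},2]$ receive the Jackson bound $c_r\,K_r(\hat g,\lambda_{\text{cut}})\,p^{-r}$ exactly as you wrote. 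This also dissolves your stated ``main obstacle'': you are not forced to interpolate an irregular $\hat g$ at finitely many graph-dependent points; the spectral filter is assumed expressive enough on $[0,\lambda_{\text{cut}}]$ to handle $\hat g$ there regardless of smoothness, and this is precisely why no regularity hypothesis on $\hat g\big|_{[0,\lambda_{\text{cut}}]}$ appears in the statement.
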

The above bound extends to purely spatial convolutions in terms of $K_r(\hat g, 0)$ if $\hat g$ is $r$-times continuously differentiable on the full interval $[0,2]$. The \nameacrsing{} bound of \autoref{thm:approximationtheory_basic} is still strictly tighter if $K_r(\hat g, \lambda_\text{cut}) < K_r(\hat g, 0)$. In particular, taking the limit $K_1(\hat g, 0) \to \infty$ towards discontinuity makes the purely spatial upper bound arbitrarily loose, whereas a benign filter might still admit a small $K_1(\hat g, \lambda_\text{cut})$ for some $\lambda_\text{cut} > 0$. \autoref{thm:discontinuous} suggests that this is not an artifact of a loose upper bound but that there is an inherent difficulty in approximating unsmooth filters with polynomials. 

We conclude the theoretical analysis by instantiating \autoref{thm:approximationtheory_basic}: assuming $\hat g$ is $C$-integral-Lipschitz for stability reasons (see \cite{gama_stability_2020} and the paragraph before \autoref{sec:over-squashing}) yields $K_1(\hat g, \lambda_\text{cut}) = \nicefrac{C}{\lambda_\text{cut}}$, whereas for the electrostatics example $\hat g_\sigma$ in \autoref{app:groundtruthfilter}, we find upper bounds $K_r(\hat g_\sigma, \lambda_\text{cut}) = \nicefrac{r!}{\lambda_\text{cut}^{(r+1)}}$. In both cases, the pure spatial bound diverges as smoothness around $0$ remains unconstrained.

\subsection{Parametrizing Spectral Filters}\label{sec:filter_parametrization}

\begin{wrapfigure}[10]{r}{0.3\textwidth}
    \vspace{-32pt}
    \centering
    \includegraphics[width=\linewidth]{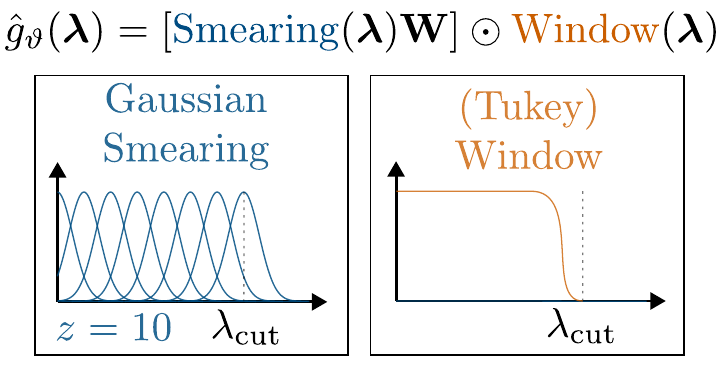}
    \\\vspace{-4pt}
    \caption{
    \(\graphfilternn\) with \(\operatorname{Smearing}(\veigval): [0,2]^k \to \R^{k \times z}\), linear map \(\mW \in \R^{z \times d}\) (\(\vartheta = \{\mW\}\)), and fixed window function \(\operatorname{Window}(\veigval)\).%
    }
    \label{fig:smearing}
\end{wrapfigure}
Next, we describe the considerations for parametrizing spectral filters. As depicted in \autoref{fig:smearing}, we use a Gaussian smearing and apply a linear transformation (bias omitted, similar to SchNet~\citep{schutt_schnet_2017}). This choice (1) may represent any possible \(\graphfilternn\) with sufficient resolution (assumption in \autoref{sec:approximation}); (2) avoids overfitting towards numerical inaccuracies of the eigenvalue calculation; (3) limits the discrimination of almost repeated eigenvalues and, in turn, should yield stability (similar to \autoref{sec:pos_enc}). Strategies to cope with a variable \(\eigval_{\text{cut}}\) and \(k\) (e.g., using attention similar to SpecFormer~\citep{bo_specformer_2023}) did usually not yield superior experimental results.

\begin{wrapfigure}[7]{r}{0.3\textwidth}
    \vspace{-15pt}
    \centering
    \includegraphics[width=\linewidth]{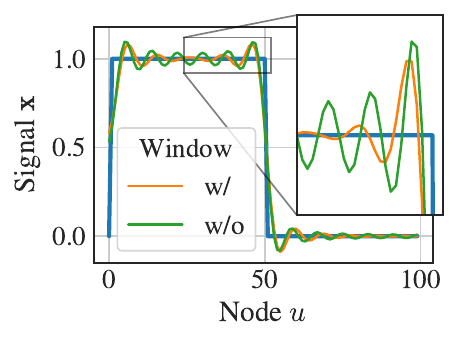}
    \vspace{-20pt}
    \caption{Ringing of low pass filter on path graph.}
    \label{fig:ringing}
\end{wrapfigure}
\textbf{Window.} In \autoref{fig:ringing}, we plot an ideal low-pass filter response to a rectangular 
wave of a sequence/path graph w/ and w/o Tukey window. Such oscillations around discontinuities are known as ringing/the Gibbs phenomenon. As illustrated, a window may reduce the oscillations that occur if the spectral filter has discontinuities. Thus, we add windowing to mitigate unwanted oscillations/noise and note that the filter may (largely) overrule the windowing at the cost of an increased weight decay penalty.

\subsection{Neural Network for the Spectral Domain}\label{sec:nn_spec}

Applying a neural network \(s_\zeta\) in the spectral domain is highly desirable on due to its negligible computational cost if \(k \ll n\). Moreover, \(s_\zeta\) allows the spectral filter to become data-dependent and may mix between channels. Data-dependent filtering is one of the properties that is hypothesized to make transformers that powerful~\citet{fu_hungry_2023}. %
We propose the first neural network for the spectral domain of graph filters \smash{\(s_\zeta^{(l)} : \R^{k \times d} \to \R^{k \times d}\)} that is designed to preserve permutation equivariance.
\begin{equation}\label{eq:spec_with_s}
    \mH^{(l)} = \spec^{(l)}(\mH^{(l-1)}; \Eigvec, \veigval) = \Eigvec s_\zeta^{(l)} \Big( \graphfilternnl \odot \big[\Eigvec^\top f_\theta^{(l)}(\mH^{(l-1)})\big] \Big)
\end{equation}
We achieve permutation equivariance via sign equivariance \(s_\zeta(\mS \odot \mX) = \mS \odot s_\zeta(\mX)\,,\,\forall \mS \in \{-1, 1\}^{k \times d}\), combined with a permutation equivariance \(s_\zeta(\mP \mX) = \mP s_\zeta(\mX)\,,\,\mP \in \gP_k\), where \(\gP_k\) is the set of all \(k \times k\) permutation matrices. Specifically, we stack linear mappings \(\mW \in \R^{d \times d}\) (\emph{without bias}) with a gated nonlinearity \(\phi(\hat{\mH}) = \hat{\mH} \odot \sigma(\mK)\) with sigmoid \(\sigma\). \(\mK\) is constant among all \(k\) eigenvectors: \(\mK_j = \vec{1}^\top |\hat{\mH}| \mW\) for \(j \in \{1, 2, \dots, k\}\) with element-wise absolute value \(|\cdot|\).

\subsection{Directed Graphs}\label{sec:directed_spectral_filter}

All discussion before assumed the existence of the eigenvalue decomposition of \(\mL\). This was the case for symmetric \(\mL\); however, for directed graphs, \(\mL\) may be asymmetric. To guarantee \(\mL\) to be diagonalizable with real eigenvalues, we use the Magnetic Laplacian~\citep{forman_determinants_1993, shubin_discrete_1994, de_verdiere_magnetic_2013} which is Hermitian and models direction in the complex domain: \smash{\(\mL_q = \mI - (\mD_s^{-\nicefrac{1}{2}}\mA_s\mD_s^{-\nicefrac{1}{2}}) \circ \exp[i 2 \pi q (\mA - \mA^\top)]\)} with symmetrized adjacency/degrees \(\mA_s\)/\(\mD_s\), potential \(q \in [0, 2\pi]\), element-wise exponential \(\exp\), and complex number \(i^2=-1\). While other parametrizations of a Hermitian matrix are also possible, with \(\mA \in \{0,1\}^{n \times n}\) and appropriate choice of \(q\), \(\mL_q : \{0,1\}^{n \times n} \to \C^{n \times n}\) is \emph{injective}. In other words, every possible asymmetric \(\mA\) maps to exactly one \(\mL_q\) and, thus, this representation is lossless.
Moreover, for sufficiently small potential \(q\), the order of eigenvalues is well-behaved~\citep{furutani_graph_2020}. In contrast to \citet{koke_holonets_2024}, a Hermitian parametrization of spectral filters does not require a dedicated propagation for forward and backward information flow. For simplicity we choose \(q < \nicefrac{1}{n_{\max}}\) with maximal number of nodes \(n_{\max}\) (with binary \(\mA\)). This choice ensures that the first eigenvector suffices to obtain, e.g., the topological sorts of a Directed Acyclic Graph (DAG). Due to the real eigenvalues of a Hermitian matrix, the presented content generalizes with minor adjustments. Most notably, we use a feature transformation \smash{\(f_\theta^{(l)}: \R^{n \times d^{(l-1)}} \to \C^{n \times d^{(l)}}\)} and map back into the real domain after the spectral convolution. More details are in \autoref{app:method_directed} and additional background on directed graphs in \autoref{app:background_directed}. 

\clearpage
\subsection{Efficient Yet Stable and Expressive Positional Encodings}\label{sec:pos_enc}

\begin{wrapfigure}[25]{r}{0.146\textwidth}
    \vspace{-36pt}
    \centering
    \resizebox{1.05\linewidth}{!}{\sidesubfloat[]{\includegraphics[width=0.9\linewidth]{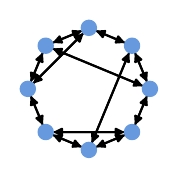}}}
    \\
    \resizebox{1.05\linewidth}{!}{\sidesubfloat[]{\includegraphics[width=0.9\linewidth]{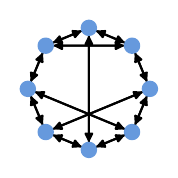}}}
    \\
    \resizebox{1.05\linewidth}{!}{\sidesubfloat[]{\includegraphics[width=0.9\linewidth]{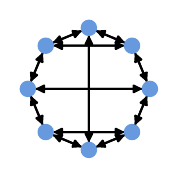}}}
    \\
    \resizebox{1.05\linewidth}{!}{\sidesubfloat[]{\includegraphics[width=0.9\linewidth]{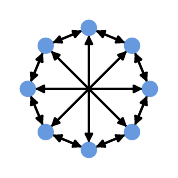}}}
    \\
    \resizebox{1.05\linewidth}{!}{\sidesubfloat[]{\includegraphics[width=0.9\linewidth]{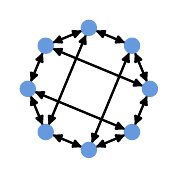}}}
    \\\vspace{-5pt}
    \caption{\(\PE\) discriminates the depicted degree\-/regular graphs, but (a) vs.\ (c).}
    \label{fig:pe_distance}
\end{wrapfigure}
We propose the first efficient (\(\mathcal{O}(km)\)) and (fully) permutation equivariant spectral Positional Encodings \(\PE\) that are proven to increase the expressivity strictly beyond the 1-Weisfeiler-Leman (1-WL) test~\citep{xu_how_2019,morris_weisfeiler_2019}. %
The \emph{dual use} of the \(k\) lowest eigenvalues allows for practically free positional encodings in combination with \nameacr{}, calculated as a preprocessing step along with the \(\EVD\).
We construct our \(k\)-dimensional positional encodings \(\PE(\Eigvec, \veigval) \in \R^{n \times k}\) as
\begin{equation}\label{eq:posenc}
    \PE(\Eigvec, \veigval) = ||_{j=1}^k [(\Eigvec \hat{h}_j(\veigval) \Eigvec^\top) \odot \mA] \cdot \vec{1}
\end{equation}
with concatenation \(||\) and binary adjacency \(\mA \in \{0,1\}^{n \times n}\). We use a Radial Basis Function (RBF) filter with normalization around each eigenvalue \(h_j(\veigval) = \softmax(\nicefrac{(\eigval_j - \veigval) \odot (\eigval_j - \veigval)}{\sigma^2})\) with small width \(\sigma \in \R_{>0}\). 
This parametrization is not only permutation equivariant but also stable according to the subsequent definition via the H\"older continuity. Note that \(C\) depends on the eigengap between \(\nicefrac{1}{\eigval_{k+1} - \eigval_{k}}\) at the frequency cutoff (for exact constant \(C\) see proof \autoref{app:sec:proof_stable}).
\begin{restatable}[Stable \(\PE\)]{definition}{defpestability}~\label{def:pestability}~\citep{huang_stability_2024}
    A PE method \(\PE : \R^{n \times k} \times \R^k \to \R^{n \times k}\) is called stable, if there exist constants \(c, C > 0\), such that for any Laplacian \(\mL, \mL'\), and \(\mP_* = \argmin_\mP \|\mL - \mP \mL' \mP^\top\|_{\mathrm{F}}\)
    \begin{equation}
        \left\|\PE(\EVD(\boldsymbol{L}))-\mP_* \PE\left(\EVD\left(\boldsymbol{L}^{\prime}\right)\right)\right\|_{\mathrm{F}} \leq C \cdot\left\|\boldsymbol{L}-\mP_* \boldsymbol{L}^{\prime} \mP_*^{\top}\right\|_{\mathrm{F}}^c.
    \end{equation}
\end{restatable}
\begin{restatable}[]{theorem}{theoremstable}~\label{theorem:stable}
    The Positional Encodings \(\PE\) in \autoref{eq:posenc} are stable.
\end{restatable}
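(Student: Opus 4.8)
Since $\PE$ of \autoref{eq:posenc} is permutation equivariant (\autoref{sec:pos_enc}), for any Laplacians $\mL,\mL'$ and the minimizing alignment $\mP_*$ of \autoref{def:pestability} one has $\mP_*\PE(\EVD(\mL'))=\PE(\EVD(\mP_*\mL'\mP_*^\top))$. Writing $\mL''\coloneq\mP_*\mL'\mP_*^\top$, the assertion is therefore equivalent to H\"older continuity of the bare map $\mL\mapsto\PE(\EVD(\mL))$ in Frobenius norm, which I would prove with exponent $c=1$ by a two-regime argument. Assume the cutoff gap $\gamma\coloneq\eigval_{k+1}(\mL)-\eigval_k(\mL)$ is positive (otherwise $\PE$ itself is ambiguous --- this is the origin of the $\nicefrac1{\eigval_{k+1}-\eigval_k}$ dependence of $C$). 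In the ``far'' regime where $\|\mL-\mL''\|_{\mathrm F}>\tfrac12\gamma$, or where $\mL$ and $\mL''$ have different (binary) adjacency supports, $\|\mL-\mL''\|_{\mathrm F}$ is bounded below by a positive constant --- $\tfrac12\gamma$ in the first case, and $\ge\nicefrac1{d_{\max}}$ in the second since every nonzero off-diagonal entry of $\mI-\mD^{-\nicefrac12}\mA\mD^{-\nicefrac12}$ has modulus at least $\nicefrac1{d_{\max}}$ --- while $\|\PE(\EVD(\mL))-\PE(\EVD(\mL''))\|_{\mathrm F}$ is bounded above (each $\PE$ entry is at most the maximum degree, by orthonormality of $\Eigvec$ and $\mA\in\{0,1\}$). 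So the inequality holds trivially there, and it remains to handle the ``benign'' regime $\mA(\mL)=\mA(\mL'')$, $\|\mL-\mL''\|_{\mathrm F}\le\tfrac12\gamma$, where by Weyl the gap stays $\ge\tfrac12\gamma$ along the whole segment.

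\textbf{Benign regime.} Concatenation over $j$ only costs a factor $\sqrt k$, so it suffices to bound one block $\PE_j(\mL)=(\mB_j(\mL)\odot\mA)\vec1$, where $\mB_j(\mL)\coloneq\sum_{i=1}^k h_j(\veigval)_i\,\eigvec_i\eigvec_i^\top$ is the matrix $\Eigvec\hat h_j(\veigval)\Eigvec^\top$ of \autoref{eq:posenc}. Since $h_j$ is a softmax of a Gaussian in the eigenvalues, $\mB_j(\mL)=\mP_{\le k}(\mL)\,\psi_{\eigval_j(\mL)}(\mL)\,/\,Z_j(\mL)$, where $\mP_{\le k}(\mL)=\sum_{i\le k}\eigvec_i\eigvec_i^\top$ is the spectral projection onto the $k$ lowest eigenspaces, $\psi_\mu(\lambda)\coloneq\exp(-(\mu-\lambda)^2/\sigma^2)$, and $Z_j(\mL)\coloneq\sum_{i\le k}\psi_{\eigval_j(\mL)}(\eigval_i(\mL))$; this uses $\mP_{\le k}(\mL)\phi(\mL)=\sum_{i\le k}\phi(\eigval_i)\eigvec_i\eigvec_i^\top$ and that $h_j$ is constant on each eigenspace. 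The post-map $M\mapsto(M\odot\mA)\vec1$ is a fixed bounded linear operator here, so everything reduces to bounding $\|\mB_j(\mL)-\mB_j(\mL'')\|_{\mathrm F}$. The three factors are each Lipschitz in $\|\mL-\mL''\|_{\mathrm F}$: the eigenvalues $\eigval_i(\mL)$ are $1$-Lipschitz (Weyl) and $\psi$ is smooth with bounded derivatives on the compact box $[0,2]^2$, so $Z_j$ is Lipschitz and, being $\ge\psi_{\eigval_j}(\eigval_j)=1$, so is $1/Z_j$; the operator $\psi_{\eigval_j(\mL)}(\mL)$ is Lipschitz by the elementary Hilbert--Schmidt divided-difference bound $\|f(A)-f(B)\|_{\mathrm F}\le\mathrm{Lip}(f)\,\|A-B\|_{\mathrm F}$ for self-adjoint $A,B$, together with a term $\sqrt n\,\mathrm{Lip}_\mu(\psi)\,|\eigval_j(\mL)-\eigval_j(\mL'')|$ for the moving parameter $\mu=\eigval_j$; and by the Davis--Kahan $\sin\Theta$ theorem, in the benign regime $\|\mP_{\le k}(\mL)-\mP_{\le k}(\mL'')\|_{\mathrm F}\lesssim\|\mL-\mL''\|_{\mathrm F}/\gamma$ --- the only place the eigengap enters. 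Since $\|\mP_{\le k}\|_{\mathrm{op}}=1$ and $\|\psi_\mu(\mL)\|_{\mathrm{op}}\le1$ are bounded, the product rule $\|ABC-A'B'C'\|\le\|A\|\,\|B\|\,\|C-C'\|+\|A\|\,\|B-B'\|\,\|C'\|+\|A-A'\|\,\|B'\|\,\|C'\|$ gives $\|\mB_j(\mL)-\mB_j(\mL'')\|_{\mathrm F}\le C_1\|\mL-\mL''\|_{\mathrm F}$ with $C_1$ depending only on $\sigma,n,k$ and $1/\gamma$. Pushing this through the linear post-map and the concatenation, and combining with the trivial ``far'' regime, yields \autoref{theorem:stable} with $c=1$ and $C$ of the stated eigengap-dependent form.

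\textbf{Main obstacle.} The delicate ingredient is the projection factor: truncated eigenspaces are stable only away from spectral degeneracies, so the whole argument pivots on the positivity of the cutoff gap $\eigval_{k+1}-\eigval_k$ and on the clean dichotomy between the regime where the perturbation preserves that gap (Davis--Kahan applies and injects the $\nicefrac1{(\eigval_{k+1}-\eigval_k)}$ factor into $C$) and its complement (where the bound is trivial, which in turn relies on non-isomorphic binary graphs having Laplacians at Frobenius distance $\ge\nicefrac1{d_{\max}}$ and on the a~priori boundedness of $\PE$). A secondary point of care is to use the \emph{Frobenius}-norm operator-Lipschitz estimate --- which, unlike the operator-norm version, is elementary and valid for every Lipschitz scalar function --- so that the RBF operator and the softmax normalizer contribute genuinely Lipschitz rather than merely H\"older terms, the eigenvalue dependence being harmless via Weyl's inequality.
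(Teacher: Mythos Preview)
Your two–regime split is tidy, but there is a structural gap that hollows out the argument. You explicitly work with binary adjacency $\mA\in\{0,1\}^{n\times n}$ (both the $\ge 1/d_{\max}$ lower bound on off-diagonal Laplacian entries and the upper bound on $\PE$ entries invoke it). In that setting the normalized Laplacian $\mL=\mI-\mD^{-1/2}\mA\mD^{-1/2}$ is a \emph{function} of $\mA$, so your ``benign'' regime $\mA(\mL)=\mA(\mL'')$ forces $\mL=\mL''$ and is vacuous. All of the Davis--Kahan, Weyl and operator-Lipschitz machinery never actually fires; the proof collapses to the far-regime bound alone, i.e., the finite-set argument that the paper itself flags as trivial (``such a stability result would be trivial if we restrict $\mA\in\{0,1\}^{n\times n}$, since any function on a finite set is Lipschitz continuous'').

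The paper's proof instead addresses the non-trivial case of \emph{weighted} adjacency $\mA\in\R_{\ge 0}^{n\times n}$ under uniform degree bounds, and there the step you lean on---``the post-map $M\mapsto(M\odot\mA)\vec 1$ is a fixed bounded linear operator here''---fails, because $\mA$ varies with $\mL$ even when the support is unchanged. Rather than a two-regime split, the paper casts $\PE$ into the SPE framework of \citet{huang_stability_2024}: it verifies Lipschitzness of the RBF--softmax maps $\phi_\ell$, shows the aggregation $\tilde\rho(\mA,\mB_1,\dots,\mB_k)=\big\|_j[(\mB_j\odot\mA)\vec 1]$ is Lipschitz jointly in the $\mB_j$ \emph{and} in $\mA$, and then separately controls $\|\mA-\mA'\|_{\mathrm F}\lesssim\|\mL-\mL'\|_{\mathrm F}$ via Lipschitzness of the inverse degree-normalization map. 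The resulting H\"older exponent is $c=\tfrac12$ (inherited from the $\sqrt{\|\cdot\|_{\mathrm F}}$ term in Huang et~al.), not your $c=1$. Your direct factorization $\mB_j=\mP_{\le k}(\mL)\,\psi_{\eigval_j}(\mL)/Z_j$ together with Davis--Kahan and the Frobenius operator-Lipschitz estimate is a perfectly reasonable alternative route and could in principle be extended to the weighted setting, but you would have to add precisely the $\mA$-perturbation term the paper isolates; without it the benign-regime analysis either does not apply (weighted $\mA$) or proves nothing (binary $\mA$).
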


Next to their stability, our \(\PE\) can discriminate certain degree-regular graphs (e.g., \autoref{fig:pe_distance}) and, thus, make the equipped GNN bound by 1-WL, strictly more expressive than 1-WL. See \autoref{app:expressivity} for continued expressivity analyses.
\begin{restatable}[]{theorem}{theoremwlone}~\label{theorem:wl1}
    \nameacr{} are strictly more expressive than 1-WL with \(\PE\) of \autoref{eq:posenc}.
\end{restatable}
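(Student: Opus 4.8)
The plan is to prove the two halves of the statement separately: first, that \nameacr{} with the positional encodings \(\PE\) of \autoref{eq:posenc} are at least as expressive as 1-WL, and second, that there exists a pair of graphs that 1-WL cannot distinguish but an \nameacrsing{} equipped with \(\PE\) can. For the first half, I would observe that an \nameacrsing{} contains a standard MPGNN as a special case (take the spectral filter to be zero, or more carefully just the spatial branch \(\spat^{(l)}(\cdot;\mA)\) in \autoref{eq:specspat}), and by the classical result of \citet{xu_how_2019,morris_weisfeiler_2019} an MPGNN with injective aggregation/update (which a sufficiently expressive spatial parametrization like GIN realizes) is as powerful as 1-WL. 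Appending the \(\PE\) features in \autoref{eq:posenc} to the node features \(\mX\) cannot decrease expressivity, since a node-wise transformation \(f_\theta\) (or even the identity, which all our guarantees permit) can simply ignore them; hence one direction is immediate.

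For the strict-separation half, I would exhibit an explicit pair of non-isomorphic graphs \(\mathcal{G}_1, \mathcal{G}_2\) that are 1-WL-indistinguishable — the canonical candidates being a pair of degree-regular graphs with the same degree sequence, e.g. two 3-regular graphs on the same number of vertices, or the graphs depicted in \autoref{fig:pe_distance} (noting the caption's caveat that (a) and (c) are \emph{not} separated, so I must pick a separated pair such as (a) vs.\ (b) or (b) vs.\ (c)). On \(k\)-regular graphs, 1-WL assigns every node the same color at every round, so no MPGNN can tell the two graphs apart. The key step is then to show that the multiset of \(\PE\) vectors differs between \(\mathcal{G}_1\) and \(\mathcal{G}_2\). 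Unpacking \autoref{eq:posenc}, the \(j\)-th coordinate of \(\PE\) at node \(v\) is \(\big[\big(\Eigvec \hat h_j(\veigval)\Eigvec^\top\big)\odot \mA\big]\vec 1\) evaluated at \(v\), i.e. a sum over neighbors \(w\sim v\) of the matrix entry \((\Eigvec\,\mathrm{diag}(h_j(\veigval))\,\Eigvec^\top)_{vw}\), which is a "soft" spectral projection \(\approx \sum_{\lambda_i \approx \lambda_j} (\eigvec_i)_v (\eigvec_i)_w\). This quantity is a spectral-neighborhood invariant that genuinely depends on eigenvector structure, not just on the degree sequence, so two cospectral-in-Laplacian-but-not-isomorphic regular graphs (or, more robustly, two regular graphs with \emph{different} Laplacian spectra) will produce different \(\PE\) multisets. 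Feeding these distinct multisets through the injective readout of the surrounding GNN yields distinct graph-level outputs, establishing strict separation.

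I expect the main obstacle to be the verification step for the concrete pair: one must confirm that the chosen \(\mathcal{G}_1,\mathcal{G}_2\) are (i) 1-WL-equivalent — automatic for connected regular graphs of the same order and degree — and (ii) actually distinguished by \(\PE\), which requires computing (or bounding away from equality) the relevant RBF-weighted spectral sums. The cleanest route is to pick a pair with \emph{distinct} Laplacian spectra (e.g. \(C_6\) versus two disjoint triangles \(2C_3\), both 2-regular on 6 nodes): then for a width \(\sigma\) small enough relative to the minimal eigenvalue gap across the two spectra, some filter \(h_j\) centered at an eigenvalue present in one graph but absent in the other produces a nonzero \(\PE\) response in one and a (near-)zero response in the other, so the multisets differ. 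A mild technical point to handle is eigenvalue multiplicity and the choice of orthonormal eigenbasis \(\Eigvec\): since \(\PE\) in \autoref{eq:posenc} only ever uses the projection-like product \(\Eigvec\,\mathrm{diag}(h_j(\veigval))\,\Eigvec^\top\), which for small \(\sigma\) is close to the spectral projector onto an eigenspace and hence basis-independent, this is benign — but it should be stated explicitly, tying back to the stability argument of \autoref{theorem:stable}. Finally I would remark that the disconnected example \(2C_3\) can be replaced by a connected counterexample if one insists on connectivity, at the cost of a slightly more involved spectral computation; the appendix (\autoref{app:expressivity}) is the natural place for that elaboration.
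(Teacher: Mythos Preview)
Your overall strategy matches the paper's: show the at-least-as-expressive direction by the MPGNN special case, then exhibit a pair of degree-regular graphs on the same vertex count (hence 1-WL-indistinguishable) that \(\PE\) separates. The paper carries out the separation step by brute-force numerical computation on the five 3-regular graphs on \(n=8\) vertices, tabulating \(\vec 1^\top \PE(\EVD(\mL_i))\) for each and observing that four of the five vectors are distinct.

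The gap in your proposal is the \emph{reason} you give for why \(\PE\) separates your chosen pair. You write that ``some filter \(h_j\) centered at an eigenvalue present in one graph but absent in the other produces a nonzero \(\PE\) response in one and a (near-)zero response in the other.'' This misreads the definition in \autoref{eq:posenc}: \(h_j\) is the softmax-RBF centered at \(\lambda_j\) \emph{of the graph being encoded}. There is no cross-graph evaluation, so no filter is ever ``centered at an absent eigenvalue'' and no response goes to zero for that reason. For your example, \(h_2\) on \(C_6\) is centered at \(\lambda_2^{C_6}=\tfrac12\), while \(h_2\) on \(2C_3\) is centered at \(\lambda_2^{2C_3}=0\); both produce perfectly nonzero responses (\(\tfrac13\) and \(\tfrac23\) per node respectively, if you compute the projectors). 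The multisets do differ, but not by your mechanism.

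So your \(C_6\) vs.\ \(2C_3\) pair does work---one can check it by the same kind of direct spectral-projector computation the paper uses for the \(n=8\) graphs---but you need to replace the ``absent eigenvalue \(\Rightarrow\) near-zero'' argument with an explicit calculation of \([(\Eigvec\,\mathrm{diag}(h_j(\veigval))\,\Eigvec^\top)\odot\mA]\vec 1\) on both graphs, just as the paper does. The conceptual shortcut you attempted does not go through as stated.
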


\subsection{Computational Considerations}\label{sec:computational}

\begin{wrapfigure}[8]{r}{0.35\textwidth}
    \vspace{-25pt}
    \centering
    \resizebox{0.97\linewidth}{!}{\begin{tikzpicture}[
style1/.style={
  matrix of math nodes,
  every node/.append style={text width=#1,align=center,minimum height=5ex},
  nodes in empty cells,
  left delimiter=[,
  right delimiter=],
  },
style2/.style={
  matrix of math nodes,
  every node/.append style={text width=#1,align=center,minimum height=5ex},
  nodes in empty cells,
  left delimiter=\lbrace,
  right delimiter=\rbrace,
  }
]
\tikzset{style green/.style={
    set fill color=green!50!lime!60,draw opacity=0.4,
    set border color=green!50!lime!60,fill opacity=0.1,
  }
}
\matrix[style1=0.65cm] (1mat)
{
  & & & & & & & & & \\
  & & & & & & & & & \\
  & & & & & & & & & \\
  & & & & & & & & & \\
  & & & & & & & & & \\
  & & & & & & & & & \\
  & & & & & & & & & \\
  & & & & & & & & & \\
  & & & & & & & & & \\
  & & & & & & & & & \\
};

\node[fit=(1mat-1-1) (1mat-3-3), fill=blue!30, fill opacity = 0.5] {};
\node[fit=(1mat-4-4) (1mat-5-5), fill=orange!30, fill opacity = 0.5, xshift=6pt, yshift=-6pt] {};
\node[fit=(1mat-7-7) (1mat-10-10), fill=teal!30, fill opacity = 0.5] {};
  
\node[font=\Huge] at (1mat-2-2) {$\mA_{1}$};
\node[font=\Huge] at ([xshift=18pt, yshift=-18pt] 1mat-4-4) {$\mA_{2}$}; 
\node[font=\Huge] at (1mat-6-6) {$\ddots$};
\node[font=\Huge] at ([xshift=12pt, yshift=-12pt] 1mat-8-8) {$\mA_{b}$};
\node[font=\Huge] at ([xshift=12pt] 1mat-2-7) {$\mathbf{0}$};
\node[font=\Huge] at ([yshift=-12pt] 1mat-7-2) {$\mathbf{0}$};

\draw[decoration={calligraphic brace,mirror,raise=5pt},decorate,line width=0.7pt,thick]
  (1mat-10-1.south west) -- 
  node[font=\Huge, below=10pt] {$\sum_i n_i$} 
  (1mat-10-10.south east);
\draw[decoration={brace,mirror,raise=12pt},decorate]
  (1mat-1-1.north west) -- 
  node[font=\Huge, rotate=90, yshift=30pt] {$\sum_i n_i$} 
  (1mat-10-1.south west);

\matrix[style1=0.65cm,right=70pt of 1mat] (2mat)
{
  & & & & & & \\
  & & & & & & \\
  & & & & & & \\
  & & & & & & \\
  & & & & & & \\
  & & & & & & \\
  & & & & & & \\
  & & & & & & \\
  & & & & & & \\
  & & & & & & \\
};

\node[fit=(2mat-1-1) (2mat-3-2), fill=blue!30, fill opacity = 0.5] {};
\node[fit=(2mat-4-3) (2mat-5-4), fill=orange!30, fill opacity = 0.5, xshift=6pt, yshift=-6pt] {};
\node[fit=(2mat-7-6) (2mat-10-7), fill=teal!30, fill opacity = 0.5] {};
  
\node[font=\Huge] at ([xshift=12pt] 2mat-2-1) {$\Eigvec_{1}$};
\node[font=\Huge] at ([xshift=18pt, yshift=-18pt] 2mat-4-3) {$\Eigvec_{2}$};
\node[font=\Huge] at (2mat-6-5) {$\ddots$};
\node[font=\Huge] at ([xshift=12pt, yshift=-12pt] 2mat-8-6) {$\Eigvec_{b}$};

\node[font=\Huge] at ([xshift=12pt] 2mat-2-6) {$\mathbf{0}$};
\node[font=\Huge] at ([xshift=12pt, yshift=-12pt] 2mat-8-1) {$\mathbf{0}$};

\draw[decoration={calligraphic brace,mirror,raise=5pt},decorate,line width=0.7pt,thick]
  (2mat-10-1.south west) -- 
  node[font=\Huge, below=10pt] {$\sum_i k_i$} 
  (2mat-10-7.south east);
\draw[decoration={brace,mirror,raise=12pt},decorate]
  (2mat-1-1.north west) -- 
  node[font=\Huge, rotate=90, yshift=30pt] {$\sum_i n_i$} 
  (2mat-10-1.south west);

\end{tikzpicture}}\\
    \vspace{-3pt}
    \caption{Block diagonal batching for spatial and spectral filters.}
    \label{fig:batching}
\end{wrapfigure}
We use readily available eigensolvers (\texttt{scipy}) and, thus, use a fixed number of eigenvectors (typically \(k \ll n\)) instead of determining \(k\) based on \(\eigval_{\text{cut}}\). For permutation equivariance, we calculate \(k + 1\) eigenvalues and then drop trailing repeated eigenvalues (\(\lambda_j = \lambda_{k+1}\) for \(j \in \{1,2,\dots,k\}\)). The partial eigendecomposition is of complexity \(\mathcal{O}(km)\) for \(m\) edges, while the spectral filter has complexity \(\mathcal{O}(k d n)\). On a different remark, we batch multiple graphs using block diagonal matrices (\autoref{fig:batching}). In \autoref{app:method_computation} are further notes on, e.g., computationally desirable spectral readouts. %

\section{Empirical Results}\label{sec:results}

With state-of-the-art performance on the peptides tasks of the long-range benchmark~\citep{dwivedi_long_2022}, plus strong results on further benchmarks, we demonstrate that \emph{\nameprefix GCN, a GCN paired with spectral filters}, is highly capable of \textbf{modeling long-range interactions (\autoref{sec:emp_long})}. %
We assess \nameacr{}' \textbf{long sequence performance (\autoref{sec:emp_sequence})} (mechanistic in-context learning) and show that \nameprefix GCN, a graph machine learning method, can achieve competitive results to state-of-the-art sequence models, including H3, Hyena, and transformers. 
We exemplify \nameacr{}' practicality and competitiveness at scale on \textbf{large-scale benchmarks (\autoref{sec:emp_scaling})} like the TPUGraphs~\citep{phothilimthana_tpugraphs_2023} and Open Graph Benchmark (OGB) Products~\citep{hu_open_2020}. Further, in \autoref{app:arxiv_year}, we report state-of-the-art performance on the heterophilic arXiv-year~\citep{lim_large_2021}. %
We next present the main findings, refer to \autoref{app:empirical} for details and to \href{https://www.cs.cit.tum.de/daml/s2gnn}{\texttt{https://www.cs.cit.tum.de/daml/s2gnn}} for code.

\textbf{Setup.} We pair different MPGNNs with spectral filters and name the composition \nameprefix \verb|<base>|. For example, a \nameacrsing{} with GAT as base will be called \nameprefix GAT. We typically perform 3 to 10 random reruns and report the mean $\pm$ standard deviation. %
The experiments of \autoref{sec:emp_long} require <11\,GB (e.g. Nvidia GTX 1080Ti/2080Ti); for the experiments in \autoref{sec:emp_sequence} \& \ref{sec:emp_scaling} we use a 40\,GB A100. We usually optimize weights with AdamW~\cite{loshchilov_decoupled_2019}, cosine annealing scheduler~\cite{loshchilov_sgdr_2017}, and linear warmup. We use early stopping based on the validation loss/score.

\subsection{Long-Range Interactions}\label{sec:emp_long}

\textbf{Finding (\rom{1}): \nameprefix GCN outperforms state-of-the-art graph transformers, MPGNNs, and graph rewirings} on the peptides-func and peptides-struct long-range benchmarks~\citep{dwivedi_long_2022}. We remain approximately 40\% below the 500k parameter threshold and, on peptides-func, we outperform prior state-of-the-art models with a comfortable margin. For this, we extend the best configuration for a GCN of~\citet{tonshoff_where_2023} (see GCN in \autoref{tab:results_peptides}), lower the number of message passing steps from six to three, and interleave spatial and spectral filters (\autoref{eq:specspat_seq}) with \(\eigval_{\text{cut}} = 0.7\).

\textbf{Dataset contribution: Clustering} a graph, given a single seed node per cluster, measures the ability (1) to spread information within the cluster and (2) to discriminate between the clusters. %
We complement the semi-supervised task \texttt{CLUSTER} from \citet{dwivedi_benchmarking_2023} with \textbf{(our) \texttt{LR-CLUSTER}} dataset, a scaled-up version with long-range interactions (1). We closely follow \citet{dwivedi_benchmarking_2023}, but instead of using graphs sampled from Stochastic Block Models (SBMs), we sample coordinates from a Gaussian Mixture Model (GMM). %
\texttt{CLUSTER} has 117 nodes on average, while ours has 896. Our cluster dataset has an average diameter of \(\approx 33\) and may contain hub nodes that would cause over-squashing (see \autoref{fig:cluster_gmm_example_graphs}). For full details on the dataset construction, see \autoref{app:clustering}.

\begin{wraptable}[16]{r}{0.55\textwidth}
    \RawFloats
    \vspace{-15pt}
    \caption{Long-range benchmark. Our \nameacrsing{} uses \(\approx 40\%\) fewer parameters than the other models. AP is Peptides-func's and MAE peptides-struct's target metric. The best/second best is bold/underlined. \label{tab:results_peptides}}
    \centering
    \resizebox{\linewidth}{!}{
        \begin{tabular}{llrr}
    \toprule
    \multicolumn{2}{c}{\textbf{Model}} & \multicolumn{1}{c}{\textbf{peptides-func} ($\uparrow$)} & \multicolumn{1}{c}{\textbf{peptides-struct}  ($\downarrow$)} \\
    \midrule
    \multirow[c]{5}{*}{\rotatebox{90}{\resizebox{\width}{\height}{\textbf{Transformer}}}} & TIGT~\citep{choi_topology-informed_2024} &$0.6679\pm0.0074$ & $0.2485\pm0.0015$ \\
    &MGT+WPE~\citep{ngo_multiresolution_2023} &  $0.6817\pm0.0064$ & $0.2453\pm0.0025$ \\
    & G.MLPMixer~\citep{he_generalization_2023} & $0.6921\pm0.0054$ & $0.2475\pm0.0015$ \\
    & Graph ViT~\citep{he_generalization_2023} & $0.6942\pm0.0075$ & $\underline{0.2449\pm0.0016}$ \\
    & GRIT~\citep{ma_graph_2023} & $0.6988\pm0.0082$ & $0.2460\pm0.0012$ \\
    & GPS+HDSE~\citep{luo_enhancing_2024} & $0.7156\pm0.0058$ & $0.2457\pm0.0013$ \\
    \hline
    \multicolumn{2}{l}{\begin{tabular}{@{}l@{}}\textbf{Rewiring:} 
    DRew-GCN\\\qquad\citep{gutteridge_drew_2023}\end{tabular}} & $0.7150\pm0.0044$ & $0.2536\pm0.0015$ \\
    \hline
    \multicolumn{2}{l}{\begin{tabular}{@{}l@{}}\textbf{State Space Model:} 
    Graph Mamba\\\qquad\citep{behrouz_graph_2024}\end{tabular}} & $0.7071\pm0.0083$ & $0.2473\pm0.0025$ \\
    \hline
    \multirow[c]{5}{*}{\rotatebox{90}{\textbf{GNN}}} & PathNN~\citep{michel_path_2023} & $0.6816\pm0.0026$ & $0.2545\pm0.0032$ \\
    & CIN++~\citep{giusti_cin_2023} & $0.6569\pm0.0117$ & $0.2523\pm0.0013$\\
    & GCN~\citep{tonshoff_where_2023} &  $0.6860\pm0.0050$ & $0.2460\pm0.0007$ \\
    & \emph{\nameprefix GCN} (\textbf{ours})&$\underline{0.7275\pm0.0066}$ & $0.2467\pm0.0019$ \\
    & \quad + $\PE$ (\textbf{ours}) &$\mathbf{0.7311\boldsymbol{\pm}0.0066}$ & $\mathbf{0.2447\boldsymbol{\pm}0.0032}$ \\
    \bottomrule
\end{tabular}

    }
\end{wraptable}
\textbf{Dataset contribution: Distance regression} is a %
task with long-range interactions used in prior work ~\citep{geisler_transformers_2023-1, lim_sign_2023}. Here, the regression targets are the shortest path distances to the only root node (in-degree 0). We generate random trees/DAGs with \(\approx\)750 \# of nodes on average (details are in \autoref{app:source_dist}). The target distances often exceed 30. We evaluate on similarly sized graphs as in the training data, i.e., in-distribution (\textbf{ID}) samples, and out-of-distribution (\textbf{OOD}) samples that consist of slightly larger graphs. %

\begin{figure}[b]
    \RawFloats
    \centering
    \begin{minipage}{0.48\textwidth}
        \captionsetup{type=figure}
        \centering
        \subfloat[4+1 layer MP + Spec.]{\includegraphics[width=0.47\linewidth]{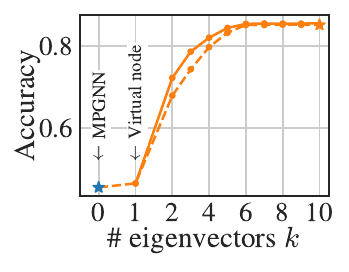}\label{fig:cluster_gmm_num_eigvecs}}
        \subfloat[\textcolor{plt-C1}{w/} one vs.\ \textcolor{plt-C0}{w/o} Spec.]{\includegraphics[width=0.49\linewidth]{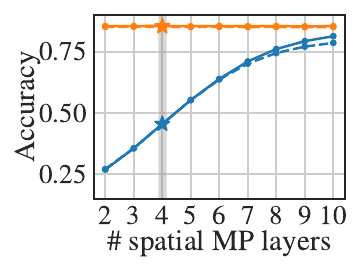}\label{fig:cluster_gmm_layers}}
        \vspace{-5pt}
        \caption{Performance on \texttt{LR-CLUSTER}. Solid lines are w/, dashed lines are w/o our $\PE$ (\autoref{sec:pos_enc}).
        \label{fig:clusteringresults}}
    \end{minipage}
    \hfill
    \begin{minipage}{0.24\textwidth}
        \captionsetup{margin=0.3cm}
        \centering
        \includegraphics[width=0.91\linewidth]{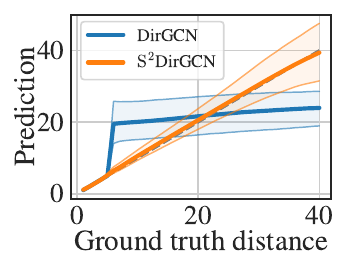}
        \caption{90\% pred.\ intervals on OOD DAGs.}
        \label{fig:source_dist_rmse_preds}
    \end{minipage}
    \hfill
    \begin{minipage}{0.24\textwidth}
        \centering
        \includegraphics[width=0.99\linewidth]{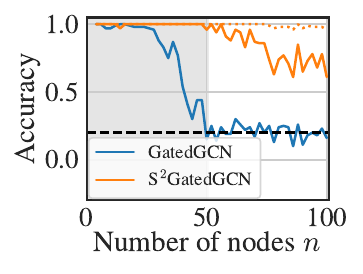}\\
        \caption{Over-sq.: 25-layer GatedGCN vs.\ 1-layer spec. ID is grey.}
        \label{fig:di_givanni_extended}
    \end{minipage}
\end{figure}

\textbf{Finding (\rom{2}): spatial MPGNNs are ineffective} for long-range interactions. This is evident for peptides \autoref{tab:results_peptides}, clustering \autoref{fig:clusteringresults}, distance regression \autoref{fig:source_dist_rmse_preds}, and over-squashing \autoref{fig:di_givanni_extended}. Specifically, if the task requires long-range interactions beyond the receptive field of MPGNNs, they return crude estimates. E.g., in \autoref{fig:source_dist_rmse_preds}, the MPGNN predicts (approx.) constantly 20 for all distances beyond its receptive field -- roughly the mean in the training data. Moreover, \nameacr{} may converge faster (see \autoref{app:clustering_orig}).

\textbf{Finding (\rom{3}): virtual nodes are insufficient.} We frequently find that including more than a single eigenvector (\(k>1\)) yields substantial gains. We make this explicit in \autoref{fig:cluster_gmm_num_eigvecs}, where we append a single spectral layer and sweep over the number of eigenvectors \(k\). 

\textbf{Finding (\rom{4}): our Positional Encodings \(\mathbf{\PE}\) consistently help}, when concatenated to the node features. While this finding is true throughout our evaluation, the differences are more pronounced in certain situations. For example, on \texttt{LR-CLUSTER} in \autoref{fig:clusteringresults}, the \(\PE\) help with spectral filter and a small \(k\) or without spectral filter and many message passing steps.

\begin{figure}[t]
    \centering
    \includegraphics[width=\linewidth]{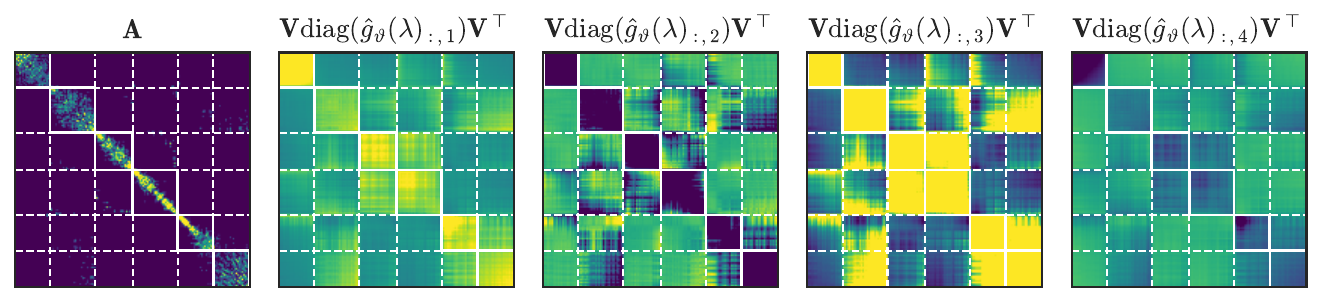}\\
    \vspace{-4pt}
    \caption{4 filters %
    on \texttt{LR-CLUSTER}. Large/small entries are yellow/blue, white lines mark clusters.}
    \label{fig:cluster_filters}
\end{figure}
\textbf{Finding (\rom{5}): spectral filters align with clusters}, as we illustrate in \autoref{fig:cluster_filters} for four arbitrary spectral filters learned on \texttt{LR-CLUSTER}. We observe that (a) the spectral filters correlate strongly with the true clustering structure, (b) some filters are smooth while others contain details, and (c) they model coarser or finer cluster structures (e.g., compare the first with third filter).

\subsection{Sequence Modelling: Mechanistic In-Context Learning}\label{sec:emp_sequence}

\begin{wrapfigure}[19]{r}{0.26\textwidth}
        \RawFloats
    \vspace{-40pt}
    \begin{minipage}{\linewidth}
        \captionsetup{type=table}
        \caption{30k token associative recall.}
        \label{tab:results_assoc_recall_large}
        \centering
        \resizebox{\linewidth}{!}{
            \begin{tabular}{lc}
    \toprule
    \textbf{Model} & \textbf{Accuracy} ($\uparrow$) \\
    \midrule
    \begin{tabular}{@{}l@{}}Transformer\\\citep{vaswani_attention_2017}\end{tabular}&\emph{OOM} \\
    \quad \begin{tabular}{@{}l@{}}w/ FlashAttention\\\citep{dao_flashattention_2022}\end{tabular}& $0.324$ \\
    \hline
    H3~\citep{fu_hungry_2023}& $0.084$ \\
    \begin{tabular}{@{}l@{}}Hyena\\\citep{poli_hyena_2023}\end{tabular}& $\mathbf{1.000}$ \\
    \emph{\nameprefix GCN} (\textbf{ours})&$\underline{0.97\pm0.05}$ \\
    \bottomrule
\end{tabular}
        }
    \end{minipage}\\
    \vspace{8pt}
    \begin{minipage}{\linewidth}
        \captionsetup{type=figure}
        \centering
        \includegraphics[width=\linewidth]{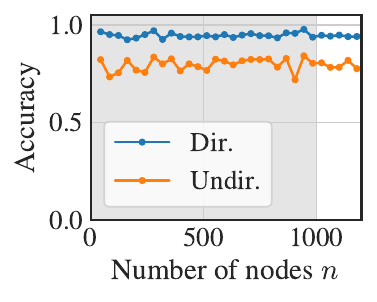}\\
        \caption{\nameprefix GCN solves associative recall for sequences varying in size by two orders of magnitude. Grey area marks \textbf{ID}.}
        \label{fig:results_assoc_recall_small}
    \end{minipage}
\end{wrapfigure}
Following the evaluation of Hyena~\citep{poli_hyena_2023} and H3~\citep{fu_hungry_2023}, we benchmark \nameprefix GCN with sequence models on the \emph{associative recall} in-context learning task, stemming from mechanistic interpretability~\citep{elhage_mathematical_2021, power_grokking_2022, zhang_unveiling_2023, olsson_-context_2022}. 
In associative recall, the model is asked to retrieve the value for a key given in a sequence. For example, in the sequence \verb|a,0,e,b,z,9,h,2,=>,z|, the target is the value for key \verb|z|, which is 9 since it follows \verb|z| in its prior occurrences. We create a sequence/path graph with a node for each ``token'' (separated by ``,'' in the example above) and label the target node with its value. We assess the performance of \nameprefix GCN on graphs that vary in size by almost two orders of magnitude and follow~\citet{poli_hyena_2023} with a vocabulary of 30 tokens. Moreover, we finetune our \nameprefix GCN on up to 30k nodes.

We list the results in \autoref{fig:results_assoc_recall_small} \& \autoref{tab:results_assoc_recall_large}, and the evaluation allows for two further \textbf{findings}: \textbf{(\rom{6})} a spectral filter for directed graphs improves generalization; and \textbf{(\rom{7})} \nameprefix GCN performs on par with state-of-the-art sequence model Hyena and even outperforms transformers here. Finding (\rom{6}) is consistent with the experiments on \texttt{LR-CLUSTER} (see \autoref{app:clustering}).

\subsection{Large-Scale Benchmarks}\label{sec:emp_scaling} 

\textbf{Finding (\rom{8}): \nameacr{} is practical and scalable.} We demonstrate this on the OGB Products graph (2.5 mio.\ nodes, \autoref{tab:results_products}) and the (directed) 10 million graphs dataset TPUGraphs (average number of nodes \(\approx\)10,000, \autoref{tab:tpu_graphs}). In both cases, we find full-graph training (without segment training~\citep{cao_learning_2023}) using 3 (Dir-) GCN layers interlayered with spectral filters, a reasonable configuration with a 40\,GB A100. However, for OGB Products, we find that batching is superior, presumably because the training nodes are drawn from a ``small'' vicinity of the graph (see \autoref{app:limitations}).

\begin{wraptable}[5]{r}{0.6\textwidth}
    \RawFloats
    \vspace{-15pt}
    \begin{minipage}{0.64\linewidth}
        \caption{OGB Products.}
        \label{tab:results_products}
        \centering
        \resizebox{0.97\linewidth}{!}{
            \begin{tabular}{llrr}
\toprule
\textbf{Split} & \textbf{Model} & \multicolumn{1}{c}{\textbf{Accuracy} ($\uparrow$)} & \multicolumn{1}{c}{\textbf{F1} ($\uparrow$)} \\
\midrule
\multirow[c]{2}{*}{Train} & GAT & 0.866$\pm$0.001 & 0.381$\pm$0.001 \\
 & S$^2$GAT & \textbf{0.902$\boldsymbol{\pm}$0.000} & \textbf{0.472$\pm$0.006} \\
\cline{1-4}
\multirow[c]{2}{*}{Val} & GAT & 0.907$\pm$0.001 & 0.508$\pm$0.002 \\
 & S$^2$GAT & \textbf{0.913$\boldsymbol{\pm}$0.002} & \textbf{0.582$\boldsymbol{\pm}$0.014} \\
\cline{1-4}
\multirow[c]{2}{*}{Test} & GAT & 0.798$\pm$0.003 & 0.347$\pm$0.004 \\
 & S$^2$GAT & \textbf{0.811$\boldsymbol{\pm}$0.007} & \textbf{0.381$\boldsymbol{\pm}$0.009} \\
\bottomrule
\end{tabular}
        }
    \end{minipage}
    \hfill
    \begin{minipage}{0.35\linewidth}
        \caption{Graph ranking on TPUGraphs ``layout''.}
        \label{tab:tpu_graphs}
        
        \resizebox{\linewidth}{!}{
        \begin{tabular}{lr}
        \toprule
        \textbf{Model} & \textbf{Kendall tau} ($\uparrow$) \\
        \midrule
        GCN & $60.09$ \\
        \nameprefix GCN & $\mathbf{65.74}$ \\
        \bottomrule
        \end{tabular}}
    \end{minipage}
\end{wraptable}
\textbf{The cost of partial \(\EVD\)} for each dataset (incl. TPUGraphs, once per distinct graph, excluding distance regression) is between 1 to 30 min.\ on CPUs. We report the detailed costs of \(\EVD\) and experiments in \autoref{app:compcost}.

\section{Related Work}\label{sec:related_work}

\textbf{Combining spatial and spectral filters}  has recently attracted attention outside of the graph domain in models like Hyena~\citep{poli_hyena_2023}, Spectral State Space Models~\citep{agarwal_spectral_2024}, etc.\ with different flavors of parametrizing the global/FFT convolution. Nevertheless, the properties of spatial and spectral filter parametrization (e.g., local vs.\ global) are well-established in classical signal processing. A combination of spectral and spatial filters was applied to (periodic) molecular point clouds~\citep{kosmala_ewald-based_2023}. %
For GNNs, \citet{stachenfeld_graph_2020} compose a spatial and spectral message passing but do not handle the ambiguity of the eigendecomposition and, thus, do not maintain permutation equivariance. Moreover, \citet{beaini_directional_2021} use the \(\EVD\) for localized anisotropic graph filters; \citet{liao_lanczosnet_2019} propose an approach that combines spatial and spectral convolution via the Lanczos algorithm; and \citet{huang_local_2022} augment message passing with power iterations. \citet{behrouz_graph_2024} apply a Mamba-like state space model to graphs via ordering the nodes and, thus, sacrifice permutation equivariance. 

\textbf{Long-range interactions on graphs.} Works that model long-range interactions can be categorized into: (a) MPGNNs on rewired graphs~\citep{gasteiger_predict_2019, gasteiger_diffusion_2019, gutteridge_drew_2023}. %
(b) Closely related are also certain higher-order GNNs~\citep{fey_hierarchical_2020, wollschlager_expressivity_2024}, e.g., due to a hierarchical message passing scheme, that may pass information to distant nodes. While approaches (a) and (b) can increase the receptive field on GNNs, they are typically still spatially bounded. In contrast, (c) alternative architectures, like graph transformers~\citep{ma_graph_2023,dwivedi_generalization_2021,kreuzer_rethinking_2021,rampasek_recipe_2022,geisler_transformers_2023-1,deng_polynormer_2024} with global attention, may model all possible \(n \times n\) interactions. We provide notes on the limitations of graph transformers with absolute positional encodings in \autoref{app:transformers}, which highlights the importance of capturing the relative relationships between nodes (as \nameacr{} do). Moreover, in a recent/contemporary non-attention model for all pair-wise interactions, \citet{batatia_equivariant_2024} use a resolvent parametrization of matrix functions relying on the LDL factorization (a variant of the Cholesky decomposition) of a matrix. However, \citet{batatia_equivariant_2024} do neither provide statements about their approximation-theoretic properties, over-squashing, expressivity on graphs, nor how to deal with directed graphs.

\textbf{Expressivity.} Laplacian eigenvectors have been used previously to construct positional encodings that improve the expressivity of GNNs or Transformers~\citep{lim_sign_2023, wang_equivariant_2022, geisler_transformers_2023-1, huang_stability_2024}. Our positional encodings are similar to the preprocessing of \citet{balcilar_breaking_2021}, where the authors design an edge-level encoding/mask to surpass 1-WL. The hierarchy of Weisfeiler-Leman (WL) tests is a common way to categorize the expressivity of GNNs~\citep{van_den_broeck_color_2021}. \citet{xu_how_2019} showed that most MPGNNs are bound by or as strong as the 1-WL test. \citet{lim_sign_2023} point out that spectral GNNs suffer from similar limitations as MPGNNs w.r.t.\ their expressivity. Generally, the development of expressive GNNs is an active research direction, and we refer to \citet{wu_expressive_2022} for a broad overview. 

\textbf{Directed graphs.} \citet{rossi_edge_2023} also extend the WL test to directed graphs and propose an MPGNN for directed graphs. How to model direction in graphs is also still an open question and various approaches were proposed~\citep{battaglia_relational_2018, tong_directed_2020, zhang_magnet_2021, rossi_edge_2023, koke_holonets_2024}. We utilize a Hermitian Laplacian for direction awareness, namely the Magnetic Laplacian, which was also used by \citet{zhang_magnet_2021} for an MPGNN and \citet{geisler_transformers_2023-1} for positional encodings.

\section{Discussion}\label{sec:discussion}

We propose \nameacr{}, adept at modeling complex long-range interactions while remaining efficient via the synergistic composition of spatially and spectrally parametrized filters (\autoref{sec:method}). We show that \nameacr{} share many properties with graph rewirings, pooling, and hierarchical message passing schemes (\autoref{fig:spectral_filter} \& \ref{fig:spectral_filter_extended}). \nameacr{} outperform the aforementioned techniques with a substantial margin on the peptides long-range benchmark (\autoref{sec:emp_long}), and we show that \nameacr{} are also strong sequence models, performing on par or outperforming state-of-the-art like Hyena or H3 in our evaluation(\autoref{sec:emp_sequence}). Even though we find \nameacr{} to be data-hungry (see \autoref{app:limitations}/\ref{app:impact} for further limitations/impact), we find that \nameacr{} are well-aligned with the trend in deep learning to transition to global models.

\begin{ack}
This research was supported by the Helmholtz Association under the joint research school “Munich School for Data Science - MUDS“.
\end{ack}

\bibliography{references_manual, references_nonsimon}
\bibliographystyle{iclr2023_conference}

\clearpage
\appendix

\section*{Appendix}

\section{Background for Directed Graphs}\label{app:background_directed}

\textbf{Undirected vs.\ directed graphs.} For spatial filtering, it is straightforward to plausibly extend the message passing (e.g.\ \citet{battaglia_relational_2018, rossi_edge_2023}). However, the spectral motivation and spectral filter on directed graphs require more care. The eigendecomposition is guaranteed to exist for real symmetric matrices. Real symmetric matrices are always diagonalizable, and the eigenvectors will then span a complete orthogonal basis to represent all possible signals \(\mX \in \R^{n \times d}\). Note that some non-symmetric square matrices are also diagonalizable and, thus, also have an eigendecomposition, albeit the eigenvectors may not be orthogonal. Thus, further consideration is required to generalize the graph Laplacian to general directed graphs.

\textbf{Magnetic Laplacian.} For the spectral filter on directed graphs, we build upon a direction-aware generalization, called Magnetic Laplacian~\citep{forman_determinants_1993, shubin_discrete_1994, de_verdiere_magnetic_2013, furutani_graph_2020, geisler_transformers_2023-1} 
\begin{equation}\label{app:eq:magnetic_laplacian}
    \mL_q = \mI - (\mD_s^{-\nicefrac{1}{2}}\mA_s\mD_s^{-\nicefrac{1}{2}}) \circ \exp[i 2 \pi q (\mA - \mA^\top)]
\end{equation}
where \(\mA_s = \mA \lor \mA^\top\) is the symmetrized graph with diagonal degree matrix \(\mD_s\). \(\circ\) denotes the element-wise product, \(\exp\) the element-wise exponential, \(i=\sqrt{-1}\) an imaginary number, and \(q\) the potential (hyperparameter). By construction \(\mL_q\) is a Hermitian matrix \(\mL_q = \mL_q^{\CT}\) where the conjugate transpose is equal to \(\mL_q\) itself. Importantly, Hermitian matrices naturally generalize real symmetric matrices and have a well-defined eigendecomposition \(\mL_q = \Eigvec \Eigval \Eigvec^{\CT}\) with real eigenvalues \(\Eigval\) and unitary eigenvectors \(\mV \mV^{\CT} = I\). For appropriate choices of the potential \(q\), the order of eigenvalues is well-behaved~\citep{furutani_graph_2020}. Recently \citet{geisler_transformers_2023-1} demonstrated the efficacy of these eigenvectors for positional encodings for transformers. Moreover, the Magnetic Laplacian was used for a spectrally designed spatial MPGNN~\citep{zhang_magnet_2021}, extending~\citet{defferrard_convolutional_2017}. Due to the real eigenvalues, one could, in principle, also apply a monomial basis \citep{chien_adaptive_2021}, or different polynomial bases stemming from approximation theory~\citep{he_bernnet_2021,wang_how_2022,he_convolutional_2022,guo_graph_2023}.

To see why \autoref{app:eq:magnetic_laplacian} describes an injection for appropriate choices of \(q\), consider that the sparsity pattern of \(\mL_q\) matches \(\mA\) up to the main diagonal. If \(\mA\) contains a self-loop the main diagonal will have a 0 instead of 1 entry at the self-loop location. \(\mA - \mA^\top\) can be directly inferred from the phase \(\exp[i 2 \pi q (\mA - \mA^\top)]\), assuming that \(q < \nicefrac{1}{(2 \max_{u,v} \emA_{u,v})}\). Thus, it is solely left to obtain \(\mA_s\) from \(\mI - \mD_s^{-\nicefrac{1}{2}}\mA_s\mD_s^{-\nicefrac{1}{2}}\), which is trivial for a binary adjacency but more involved for real-valued weights. Determining if and when \(\mL_q\) is injective for real-valued \(\mA\) is left for future work.

\textbf{Properties of the eigendecomposition.} The eigendecomposition is not unique, and thus, one should consider the result of the eigensolver arbitrary in that regard. One ambiguity becomes apparent from the definition of an eigenvalue itself \(\mL \eigvec = \eigval \eigvec\) since one can multiply both sides of the equation with a scalar \(c \in \C \setminus \{0\}\): \(\mL (c \eigvec) = \eigval (c \eigvec)\). We already implicitly normalized the magnitude of the eigenvectors \(\Eigvec\) by choosing them to be orthogonal (\(\Eigvec \Eigvec^\top = \mI\)) or unitary (\(\Eigvec \Eigvec^{\CT} = \mI\)). Thus, after this normalization, \(c\) only represents an arbitrary sign for real-valued eigenvectors or a rotation on the unit circle in the complex case. Another reason for ambiguity occurs in the case of repeated / multiple eigenvalues (e.g., \(\eigval_u = \eigval_v\) for \(u \ne v\)). In this case, the eigensolver may return an arbitrary set of orthogonal eigenvectors chosen from the corresponding eigenspace.

\section{Limitations of Graph Transformers Using Absolute Positional Encodings}\label{app:transformers}

Here, we consider a vanilla graph transformer $f(\mX)$ that solely becomes structure-aware due to the addition (or concatenation) of positional encodings: $f(\mX + \PE(\mA))$. The main point we are going to demonstrate is that a vanilla transformer with such absolute positional encodings \(\PE(\mA) \in \R^{n \times d}\) will be limited in 
its expressivity if the positional encodings are permutation equivariant \(\mP \PE(\mA) = \PE(\mP \mA\mP^\top)\) w.r.t.\ any \(n \times n\) permutation \(\mP \in \gP\).

The limitation particularly arises in the presence of automorphisms \(\mP_a \mA\mP_a^\top = \mA\) with specifically chosen permutation \(\mP_a\). To be more specific, assume that nodes \(u\) and \(v\) are automorphic to each other. That is, there exists a \(\mP_a\) that will swap the order of  \(u\) and \(v\) (among other nodes) s.t.\ \(\mP_a \mA\mP_a^\top = \mA\). By permutation equivariance, we know \(\mP_a \PE(\mA) = \PE(\mP_a \mA\mP_a^\top) = \PE(\mA)\) and, hence, \(\PE(\mA)_u = \PE(\mA)_v\).

We have just shown that automorphic nodes will have the same positional encodings \(\PE\) if the positional encodings are permutation equivariant. This implies that permutation equivariant positional encodings \(\PE\) are not even able to capture simple neighboring relationships. For example, consider an undirected sequence/path graph \verb|o-o-o-o-o| with five nodes. Here, the two end nodes, which we also all first and last node, are automorphic. So are the second and second-last nodes. Assuming the second and second last nodes have different node features (e.g., \verb|A-B-C-D-A|), that breaks the symmetry, it is still not possible for a transformer with absolute positional encodings to tell the first and last node apart. In other words, in the example, the transformer cannot tell apart the end node with neighboring feature \verb|B| from the end node with neighboring feature \verb|D|. This shows a severe limitation of architectures without additional components capturing the relative distances (e.g., as \nameacr{} can). This concern is not as critical for architectures where the positional encodings are not perfectly permutation equivariant~\citep{dwivedi_generalization_2021, kreuzer_rethinking_2021}, with relative positional  encodings~\citep{ma_graph_2023}, and might also be of lesser concern for directed graphs~\citep{geisler_transformers_2023-1}.

\section{\nameacrsing{} Generalizes a Virtual Node}\label{app:virtualnode}

Adding a fully connected virtual node \citep{gilmer_neural_2017} is among the simplest ways to add the ability for long-range information exchange. An equivalent method was proposed as a simple over-squashing remedy in the seminal work by \citet{alon_bottleneck_2020}. A single $\spec$ layer amounts to a type of virtual nodes in the special case of $f_\theta = \mI$ and
\begin{equation}
        \graphfilterfn^{(l)}(\eigval) =
        \begin{cases}
            1 \text{  for } \lambda = 0,\\
            0 \text{  for } \lambda > 0,
        \end{cases}
\end{equation}
Assuming a simply-connected graph $\mathcal{G}$, the unique normalized zero-eigenvector $\eigvec$ of the symmetrically-normalized graph Laplacian $\mL = \mI - \mD^{-\nicefrac{1}{2}}\mA\mD^{-\nicefrac{1}{2}}$ has components $\eigvec_u = \sqrt{\frac{d_u}{2|E|}}$, where $d_u$ denotes the degree of node $u \in \mathcal{G}$, and $|E|$ the number of edges in the graph. At node $u \in \mathcal{G}$, we therefore find
\begin{equation}
\label{eq:specvirtualnode}
\spec^{(l)}_u(\mH^{(l-1)}; \Eigvec, \veigval) = \frac{\sqrt{d_u}}{2|E|}\sum_{v \in \mathcal{G}} \sqrt{d_v} \vh_v^{(l-1)}    
\end{equation}
with $\vh_v^{(l-1)}$ denoting the row of $\mH^{(l-1)}$ corresponding to node $v \in \mathcal{G}$. In other words, filtering out the zero-frequency component of the signal means scattering a global, degree-weighted embedding average to all nodes of the graph. For the unnormalized graph Laplacian, \autoref{eq:specvirtualnode} instead becomes an unweighted average, which is consistent with the usual definition of a virtual node. We refer to \autoref{fig:spectral_filter} \& \ref{fig:spectral_filter_extended} for additional intuition.

\section{Existing Results on Over-Squashing}\label{app:over-squashing}
We restate two key results from \citet{di_giovanni_over-squashing_2023} using our notation. They imply the existance of a regime in which 1-hop MPNN architectures suffer from exponentially decaying Jacobian sensitivity. Meanwhile, \nameacr{} can easily learn a signal of constantly lower-bounded sensitivity, as shown by invoking its trivial subcase of a virtual node in \autoref{theorem:specspat_over-squashing}.
\begin{theorem}[Adapted from \citet{di_giovanni_over-squashing_2023}]
In an $\ell$-layer spatial MPGNN with message-passing matrix $\mS = c_r \mI + c_a \mA$ ($c_r, \, c_a \in \sR^+$) and a Lipschitz nonlinearity $\sigma$,
\begin{equation}
\label{eq:digiovannispatmpnn}
    \mH^{(l)} = \spat^{(l)}(\mH^{(l-1)}; \mA) = \sigma\left(\mS \mH^{(l-1)} \mW^{(l-1)}\right), \, 1 \leq l \leq \ell
\end{equation}
the Jacobian sensitivity satisfies the following upper bound:
\begin{equation}
\label{eq:over-squashingbound}
    \left\|\frac{\partial \mathbf{h}_v^{(\ell)}}{\partial \mathbf{h}_u^{(0)}}\right\|_{L^1} \leq(c_\sigma w d)^\ell \left(\mS^{\ell}\right)_{v u},
\end{equation}
with $\vh_u^{(0)}$, $\vh_v^{(\ell)}$ denoting the rows of $\mH^{(0)}$, $\mH^{(\ell)}$ corresponding to the nodes $v$, $u$ $\in \mathcal{G}$, 
$c_\sigma$ the Lipschitz constant of the nonlinearity, $w$ the maximum entry value over all weight matrices $\mW^{(l)}$, and $d$ the network width.
\end{theorem}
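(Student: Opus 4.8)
This statement restates (with adapted notation) a result of \citet{di_giovanni_over-squashing_2023}, so the plan is to reproduce the standard Jacobian-unrolling argument. The $\ell$-layer map $\mH^{(0)} \mapsto \mH^{(\ell)}$ defined by \autoref{eq:digiovannispatmpnn} is a composition of $\ell$ layer maps, each of which is (almost everywhere) differentiable, so by the multivariate chain rule the Jacobian $\partial \mathbf{h}_v^{(\ell)}/\partial \mathbf{h}_u^{(0)} \in \sR^{d\times d}$ factorizes into a sum of products of per-layer Jacobian blocks, indexed by intermediate node indices $i_1,\dots,i_{\ell-1}$ and intermediate feature indices. The key observation is that the node indices range precisely over the length-$\ell$ walks from $u$ to $v$, and since $\mS = c_r\mI + c_a\mA$ is entrywise nonnegative, the node-index summation collapses to $(\mS^\ell)_{vu}$.

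First I would compute the per-layer block: writing $\mZ^{(l)} = \mS\mH^{(l-1)}\mW^{(l-1)}$ and $\mH^{(l)} = \sigma(\mZ^{(l)})$ entrywise, one has $\partial\mH^{(l)}_{ia}/\partial\mH^{(l-1)}_{jb} = \sigma'(\mZ^{(l)}_{ia})\,\mS_{ij}\,\mW^{(l-1)}_{ba}$. Chaining over $l = 1,\dots,\ell$ with $i_0 = u$, $i_\ell = v$, $b_0 = b$, $b_\ell = a$ yields
\begin{equation*}
\frac{\partial\mH^{(\ell)}_{va}}{\partial\mH^{(0)}_{ub}} = \sum_{\substack{i_1,\dots,i_{\ell-1}\\ b_1,\dots,b_{\ell-1}}} \prod_{l=1}^{\ell} \sigma'\bigl(\mZ^{(l)}_{i_l b_l}\bigr)\,\mS_{i_l i_{l-1}}\,\mW^{(l-1)}_{b_{l-1} b_l}.
\end{equation*}
Then I would take absolute values, bound $|\sigma'| \le c_\sigma$ and $|\mW^{(l-1)}_{b_{l-1}b_l}| \le w$, pull out $\mS \ge 0$ entrywise, and note that the remaining node-index sum $\sum_{i_1,\dots,i_{\ell-1}}\prod_l \mS_{i_l i_{l-1}}$ equals exactly $(\mS^\ell)_{vu}$. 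The intermediate feature-index sums each contribute a factor $d$; collecting $c_\sigma^\ell$, $w^\ell$, the accumulated powers of $d$, and $(\mS^\ell)_{vu}$, and finally taking the $L^1$ norm over the remaining free feature indices (in the convention of \citet{di_giovanni_over-squashing_2023}) gives the claimed bound $(c_\sigma w d)^\ell (\mS^\ell)_{vu}$.

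The argument is essentially routine; the two points needing care — and the closest thing to an obstacle — are (i) justifying the chain rule when $\sigma$ is only Lipschitz rather than $C^1$, which one handles via Rademacher's theorem by restricting to the full-measure set on which the composition is differentiable and using $|\sigma'| \le c_\sigma$ there, and (ii) tracking the powers of $d$ so that the feature-dimension summations line up with the precise $L^1$ norm convention used for the Jacobian block. Neither constitutes a real difficulty: the bound is a direct consequence of the chain rule, and it is quoted here essentially verbatim from \citet{di_giovanni_over-squashing_2023} in order to set up the contrast with the uniform \emph{lower} bound achieved by \nameacr{} in \autoref{theorem:specspat_over-squashing}.
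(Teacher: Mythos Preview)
The paper does not prove this theorem: it appears in \autoref{app:over-squashing} purely as a restated result from \citet{di_giovanni_over-squashing_2023}, with no accompanying proof. Your chain-rule unrolling sketch is correct and is exactly the argument from the original reference (including your caveat about the $L^1$ norm convention governing the power of $d$), so there is nothing in the present paper to compare against.
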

The dependence of the upper bound on the matrix power $\left(\mS^{\ell}\right)_{v u}$ – not generally present for \nameacrsing{} by \autoref{theorem:specspat_over-squashing} – leads to a topology-dependence which becomes explicit in the following theorem. It concerns the typical shallow-diameter regime, in which the number $\ell$ of MPGNN layers is comparable to the graph diameter.
\begin{theorem}[Adapted from \citet{di_giovanni_over-squashing_2023}]
    Given an MPNN as in \autoref{eq:digiovannispatmpnn}, with $c_{\mathrm{a}} \leq 1$, let $v, u \in \mathcal{G}$ be at distance $r$. Let $c_\sigma$ be the Lipschitz constant of $\sigma, w$ the maximal entry-value overall weight matrices, $d_{\min }$ the minimal degree of $\mathcal{G}$, and $\gamma_{\ell}(v, u)$ the number of walks from $v$ to $u$ of maximal length $\ell$. For any $0 \leq k<r$, there exists $C_k>0$ \textbf{independent} of $r$ and of the graph, such that
$$
\left\|\frac{\partial \mathbf{h}_v^{(r+k)}}{\partial \mathbf{h}_u^{(0)}}\right\|_{L^1} \leq C_k \gamma_{r+k}(v, u)\left(\frac{2 c_\sigma w d}{d_{\min }}\right)^r .
$$
\end{theorem}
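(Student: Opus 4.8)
The claim is an instantiation of the general sensitivity bound of the preceding theorem, \autoref{eq:over-squashingbound}, which states $\|\partial\mathbf{h}_v^{(\ell)}/\partial\mathbf{h}_u^{(0)}\|_{L^1}\le(c_\sigma w d)^\ell(\mathbf{S}^\ell)_{vu}$ for the message-passing matrix $\mathbf{S}=c_r\mathbf{I}+c_a\hat{\mathbf{A}}$, with $\hat{\mathbf{A}}$ the degree-normalised adjacency underlying \autoref{eq:digiovannispatmpnn}. The first step is to set $\ell=r+k$; the entire remaining task is then to estimate the matrix-power entry $(\mathbf{S}^{r+k})_{vu}$ so that every factor that does \emph{not} scale with $r$ or depend on $\mathcal{G}$ can be isolated, after which the outer $(c_\sigma w d)^{r+k}=(c_\sigma w d)^k(c_\sigma w d)^r$ splits correspondingly into a $k$-only part and the claimed $r$-th power.

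For the matrix-power estimate I would expand binomially, $(\mathbf{S}^{r+k})_{vu}=\sum_{j=0}^{r+k}\binom{r+k}{j}c_r^{\,r+k-j}c_a^{\,j}(\hat{\mathbf{A}}^j)_{vu}$, and interpret $(\hat{\mathbf{A}}^j)_{vu}$ as a sum over length-$j$ walks $v\to u$ of products of normalisation factors of the form $1/\sqrt{d_\cdot d_\cdot}$, which gives $(\hat{\mathbf{A}}^j)_{vu}\le N_j(v,u)\,d_{\min}^{-j}$, where $N_j(v,u)$ is the number of such walks. Two elementary observations finish the combinatorics: (i) since $\mathrm{dist}(v,u)=r$, we have $N_j(v,u)=0$ for $j<r$, so only $r\le j\le r+k$ survive, and on that range $d_{\min}^{-j}\le d_{\min}^{-r}$; and (ii) $\sum_{j=r}^{r+k}N_j(v,u)\le\gamma_{r+k}(v,u)$ by the definition of $\gamma_{r+k}$. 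Pulling $d_{\min}^{-r}$ out of the surviving terms and bounding the remaining weights by $\max_j\binom{r+k}{j}c_r^{\,r+k-j}c_a^{\,j}\le(c_r+c_a)^{r+k}$ yields $(\mathbf{S}^{r+k})_{vu}\le(c_r+c_a)^{r+k}\,\gamma_{r+k}(v,u)\,d_{\min}^{-r}$.

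Combining this with \autoref{eq:over-squashingbound} at $\ell=r+k$ gives $\|\partial\mathbf{h}_v^{(r+k)}/\partial\mathbf{h}_u^{(0)}\|_{L^1}\le\big(c_\sigma w d(c_r+c_a)\big)^k\,\gamma_{r+k}(v,u)\,\big(c_\sigma w d(c_r+c_a)/d_{\min}\big)^r$. The hypothesis $c_a\le1$, together with the normalisation of $\mathbf{S}$ (which forces $c_r\le1$ as well), gives $c_r+c_a\le2$; substituting this into the $r$-th power produces the advertised factor $(2c_\sigma w d/d_{\min})^r$ and leaves $C_k:=(2c_\sigma w d)^k$, which depends only on $k$ and on the fixed model constants $c_\sigma,w,d$, not on $r$ or on $\mathcal{G}$, as required.

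\textbf{Main obstacle.} The substance is purely combinatorial/algebraic bookkeeping, and the one point demanding care is precisely ensuring that $C_k$ is \emph{uniform} in $r$ and in the graph: one must bound the binomial-weighted coefficients $\binom{r+k}{j}c_r^{\,r+k-j}c_a^{\,j}$ by an $(r+k)$-th power that factors cleanly into $r$- and $k$-parts, verify that the degree normalisation in $(\hat{\mathbf{A}}^j)_{vu}$ always supplies at least the $r$ copies of $1/d_{\min}$ we extract (and never forces us to extract more than the length-$r$ distance allows), and confirm $c_r+c_a\le2$ from the message-passing normalisation. Once these are settled, the statement follows by direct substitution into \autoref{eq:over-squashingbound}.
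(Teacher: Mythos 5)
First, a point of reference: the paper itself gives no proof of this statement — it is restated in \autoref{app:over-squashing} as a result adapted from \citet{di_giovanni_over-squashing_2023}, so your argument can only be judged on its own merits against the cited source. Your overall route is the natural one and matches the standard argument: instantiate \autoref{eq:over-squashingbound} at $\ell=r+k$, expand $\mS^{r+k}$ binomially, use that the terms with $j<r$ vanish because $u,v$ are at distance $r$, bound each surviving $(\hat{\mA}^j)_{vu}$ by the walk count times $d_{\min}^{-j}$, and absorb all $k$-indexed factors into $C_k$. Your reading that the $\mA$ in $\mS=c_r\mI+c_a\mA$ must be the degree-normalised adjacency is the correct (and necessary) one — with the unnormalised adjacency no factor $d_{\min}^{-r}$ can appear — and your steps (i) and (ii) are sound.

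The one genuine gap is the last step. You bound $\max_{r\le j\le r+k}\binom{r+k}{j}c_r^{\,r+k-j}c_a^{\,j}\le(c_r+c_a)^{r+k}$ and then claim $c_r+c_a\le 2$ because ``the normalisation of $\mS$ forces $c_r\le 1$''. No such constraint is in the hypotheses: the statement assumes only $c_a\le 1$, with $c_r\in\R^+$ otherwise arbitrary (cf.\ the preceding theorem). For $c_r>1$ your route leaves the base $c_\sigma w d\,(c_r+c_a)/d_{\min}$ raised to the $r$-th power, which is not of the claimed form, and the excess cannot be moved into $C_k$ because the bound $(c_r+c_a)^{r+k}$ has discarded how the powers of $c_r$ are distributed across the surviving terms. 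The repair is to keep that information: for $j\ge r$ the exponent of $c_r$ is $r+k-j\le k$, so $c_r^{\,r+k-j}\le\max(1,c_r)^k$, while $c_a^{\,j}\le 1$ and $\binom{r+k}{j}\le 2^{r+k}=2^k 2^r$ (this is exactly where the factor $2$ in the theorem originates). This gives $(\mS^{r+k})_{vu}\le 2^k\max(1,c_r)^k\,\gamma_{r+k}(v,u)\,(2/d_{\min})^r$ and hence $C_k=\left(2c_\sigma w d\,\max(1,c_r)\right)^k$, which depends only on $k$ and fixed model constants, not on $r$ or the graph, with no extra assumption on $c_r$. Everything else in your write-up goes through.
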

For 1-hop MPGNNs with $2 c_\sigma w d < d_{\min }$, we therefore identify an exponential decay of sensitivity with node distance $r$ in the weak-connectivity limit for which  $\gamma_{r+k}(v, u)$ increases sub-exponentially with $r$. As \cite{di_giovanni_over-squashing_2023} point out, sharper bounds can be derived under graph-specific information about $\left(\mS^{r}\right)_{v u}$.

\section{Construction of an explicit ground truth filter}
\label{app:groundtruthfilter}
We express the electric potential along a periodic sequence of screened 1D charges as a convolution of a corresponding graph signal with a consistently defined graph filter. This closed-form example underscores our default use of a low-pass window for the spectral part of \nameacr{} by showing how a continuous problem with a convolutional structure and quickly flattening spectral response (typical for pair interactions in physics and chemistry) discretizes into a graph problem with similar features.\par
The approach exploits the surjective mapping of Fourier modes on $[0, n]$ onto the Laplacian eigenvectors of a cycle graph $C_n$. We consider two corresponding representations of the same problem:
\begin{align}
\label{eq:continuousrep}
&\rho(x) = \sum_{l=0}^{n-1} q_l \Delta_n\left(x-l\right), \; \Delta_m(x) = \sum_{m \in \sZ} \delta(x-mn), \; V(x) = (\phi_\sigma \ast_{\sR} \rho)(x),\\
&\nonumber \phi_\sigma(x) = \left(x\text{erf}\left(\frac{x}{\sqrt{2}\sigma}\right) + \sigma\sqrt{\frac{2}{\pi}}\exp\left(-\frac{x^2}{2\sigma^2}\right)\right) - |x|, \quad \sigma > 0
\end{align}
\begin{equation}
\label{eq:consistencycondition}
   \left[V(l) = (\phi_\sigma \ast_{\sR} \rho)(l) \overset{!}{=} (g_\sigma \ast_\gG \vq)_{l}, \, 0 \leq k \leq n-1, \, \forall \vq \in \sR^n\right] \qquad  \Bigg \Updownarrow
\end{equation}
\begin{equation}
\label{eq:graphrep}
    \gG = \mathcal{C}_n, \quad \vq = (q_1, \dots, q_n)^\top
\end{equation}
\begin{itemize}
    \item A continuous representation (\autoref{eq:continuousrep}) in terms of a 1D distribution $\rho$ of $n$ point charges $q_1, \dots, q_n$ and their periodically repeating image charges, written as a sum of Dirac combs at equidistant offsets $l$ with $0 \leq l \leq n-1$, interacting via the potential profile $\phi_\sigma$ obtained from solving in Gauss' law of electrostatics for a 1D point charge screened by a Gaussian cloud of opposite background charge with standard deviation $\sigma$. The screening ensures convergence to a finite potential and its exact form is insignificant (we choose the Gaussian-type screening due to its analytical tractability). Note that $\phi_\sigma(x) \simeq \text{const.} - |x|$ for $x \to 0$ (the unscreened 1D potential in the direction normal to an infinitely wide capacitor plate), while the screening guarantees an exponential dropoff to zero as $x \to \infty$,
    \item A graph representation (\autoref{eq:graphrep}) by placing the $n$ charges $q_1, \dots, q_n$ onto a cycle graph $\mathcal{C}_n$.
\end{itemize}
We derive the graph filter $g_\sigma$ from a consistency condition (\autoref{eq:consistencycondition}) between both representations: the graph convolution $(g_\sigma \ast_\gG \vq)$ has to yield the electric potential $V$ sampled at the charge loci if we want $g_\sigma$ to act like the continuous convolution kernel $\phi_\sigma$ in the discrete graph picture.\par
The Fourier transform of $\phi_\sigma$ (in the convention without integral prefactor and with a $2\pi$ frequency prefactor) reads $\hat \phi_\sigma(\kappa) = \frac{1}{\pi \kappa^2}\left(1-\exp\left(-\frac{1}{2}\sigma^2 \kappa^2\right)\right)$. For the density, the Poisson sum formula gives $\hat \rho(\kappa) = \sum_{k=0}^{n-1} \frac{1}{\sqrt{n}} \hat q_k \Delta_1(\kappa - \frac{k}{n})$ with $\hat q_k = \frac{1}{\sqrt{n}} \sum_{j=0}^{n-1} q_i \exp\left(- i 2\pi \frac{k}{n} j\right)$. The coefficients $\hat q_k$ are precisely the components of the graph Fourier transform of $\vq$ (physically, they amount for the structure factor). By the convolution theorem, $\hat V(\kappa) = \hat \phi_\sigma(\kappa) \hat \rho(\kappa)$. By noting that all integer-shifted frequencies in the Dirac combs $\Delta_1\left(\cdot - \frac{k}{n}\right)$ (or all Brillouin zones, in physics terminology) yield the same phase $\exp\left(i 2\pi \frac{k}{n} l\right)$ if we only sample $V(x)$ at the charge loci $x = l$, $0 \leq l \leq n-1$, we can write $V(l) = \frac{1}{2\pi}\sum_{k=0}^{n-1} \hat \vq_k \left(\sum_{m \in \sZ} \hat \phi_\sigma\left(\frac{k}{n} + m\right)\right) \frac{1}{\sqrt{n}} \exp\left(i 2\pi \frac{k}{n} l\right)$. Through pattern-matching with the consistency condition of \autoref{eq:consistencycondition}, we can therefore identify that the graph filter is a sum over Brillouin zones,
\((\hat g_\sigma)(\lambda_k) = \frac{1}{2\pi}\sum_{m \in \sZ} \hat \phi_\sigma\left(\frac{k}{n} + m\right),\)
where $\lambda_k$ denotes the eigenvalues of the normalized $\mathcal{C}_n$ graph Laplacian, $\lambda_k = 1 - \cos\left(\frac{2\pi k}{n}\right)$. To fulfill this relation for all $n$, $k$ we set
\[\hat g_\sigma(\lambda) = \frac{1}{2\pi}\sum_{m \in \sZ} \hat \phi_\sigma\left(\frac{1}{2\pi}\arccos(1-\lambda) + m\right)\]
We claim now (and prove in a later paragraph) that for $\lambda > \lambda_0 > 0$ and a sufficiently large choice $\sigma > \sigma(r, \lambda_0)$, the absolute $r$-th derivative satisfies the upper bound $\lvert\dr \hat g_\sigma(\lambda)\rvert \leq \lvert\dr \hat g_\infty(\lambda)\rvert$, where we can think of $\hat g_\infty$ as the limit of taking $\sigma \to \infty$ (i.e., a constant background charge):
\[\hat g_\infty(\lambda) = \frac{1}{2\pi}\sum_{m \in \sZ} \hat \phi_\infty \left(\frac{1}{2\pi}\arccos(1-\lambda) + m\right), \quad \hat \phi_\infty(\kappa) = \frac{1}{\pi \kappa^2}\]
The merit of this is that unlike the screened $\hat g_\sigma(\lambda)$, $\hat g_\infty(\lambda)$ can be solved analytically to find closed-form bounds on the absolute derivatives $\lvert\dr \hat g_\sigma(\lambda_0)\rvert$. By invoking the sum expansion form of the trigamma function $\Psi_1(z) = \sum_{m=0}^{\infty} \frac{1}{(z + m)^2}$, the reflection identity $\psi_1(1-z) + \psi_1(z) = \frac{\pi^2}{\sin^2\pi z}$, and the half-angle formula $\sin^2\left(\frac{x}{2}\right) = \frac{1-\cos(x)}{2}$, we find
\begin{align*}
\hat g_\infty(\lambda) &= \frac{1}{2\pi^2}\left(\Psi_1\left(\frac{1}{2\pi}\arccos(1-\lambda)\right) + \Psi_1\left(1-\frac{1}{2\pi}\arccos(1-\lambda)\right)\right)\\ &=
\frac{1}{2 \sin^2\left(\frac{1}{2}\arccos(1-\lambda)\right)} = \frac{1}{\lambda},
\end{align*}
a remarkably simple result. We can now readily evaluate $\lvert\dr \hat g_\infty(\lambda)\rvert = \frac{r!}{\lambda^{r+1}}$, but it remains to prove that this upper-bounds $\lvert\dr \hat g_\sigma(\lambda)\rvert$ for any $\lambda > \lambda_0 > 0$ and sufficiently large $\sigma > \sigma(r, \lambda_0)$. For compactness, define the expressions $z(\lambda) \vcentcolon = \frac{1}{2\pi}\arccos(1-\lambda) \in \left[0, \frac{1}{2}\right]$ (strictly increasing in $\lambda$), $y_\sigma(z) \vcentcolon = \exp\left(-\frac{1}{2}\sigma^2z^2\right)$, and $\tilde z = 1-z \geq z$. Consider the series of ``term-by-term'' derivatives 
\begin{align*}
    \frac{d}{dz}\hat g_\sigma(\lambda(z)) &= -\frac{1}{\pi^2} \sum_{m = 0}^\infty \left(\frac{1}{(z+n)^3}(1-y_\sigma(z+m)) - \frac{1}{(\tilde z+n)^3}(1-y_\sigma(\tilde z+m))\right)\\
    &+ \sum_{m = 0}^\infty \mathcal{O}\left(y_\sigma(m)\right)\\
    \frac{d^2}{dz^2}\hat g_\sigma(\lambda(z)) &= \frac{3}{\pi^2} \sum_{m = 0}^\infty \left(\frac{1}{(z+n)^4}(1-y_\sigma(z+m)) + \frac{1}{(\tilde z+n)^4}(1-y_\sigma(\tilde z+m))\right)\\
    &+ \sum_{m = 0}^\infty \mathcal{O}\left(y_\sigma(m)\right)\\
    &\; \vdots
\end{align*} 
They converge uniformly on $\left[0, \frac{1}{2}\right]$ as they clearly are Cauchy sequences under uniform bound (moreover, well-definedness in $z=0$ follows by applying l'Hospital's rule – physically, this is the merit provided by including Gaussian screening in our model). Therefore, they indeed converge to the respective derivatives $\frac{d^r}{dz^r}\hat g_\sigma(\lambda(z))$ (justifying the above notation). The same holds for the corresponding series for $\frac{d^r}{dz^r}\hat g_\infty(\lambda(z))$: they are not defined in $z=0$, but otherwise still converge as they match the known series expansion of the polygamma function. Given $\lambda_0 > 0$ and thus $z(\lambda_0) > 0$, taking $\sigma$ larger than some $\sigma(r, \lambda_0)$ guarantees that $\frac{d^r}{dz^r}\hat g_\sigma(\lambda(z))$ and $\frac{d^r}{dz^r}\hat g_\infty(\lambda(z))$ are of the same sign for $\lambda > \lambda_0$ ($z(\lambda) > z(\lambda_0)$). This holds for all orders $r \in \sN$ since we see by induction that the product rule always yields one term analogous to the first respective terms above, and otherwise only terms of $\mathcal{O}\left(y_\sigma(m)\right))$. Then, observing that $0 \leq 1-y_\sigma(x) < 1$ $\forall$ $x \geq 0$ and $\tilde z \geq z$ implies $|\frac{d^r}{dz^r} \hat g_\sigma(\lambda_0(z_0))| \leq |\frac{d^r}{dz^r} \hat g_\infty(\lambda_0(z_0))|$. The same must hold for the $\lambda$-derivatives by the chain rule.\par
One interesting question is whether $\hat g_\sigma$ is also $C$-integral-Lipschitz for some constant $C > 0$. We discuss this stability-informed criterion \citep{gama_stability_2020} in the main body as a domain-agnostic prior assumption about the ``ideal'' graph filter if no other ground truth knowledge informing additional smoothness bounds (such as here) is available. While the above bound is too loose to certify this directly ($|\frac{d}{d\lambda}\hat g_\sigma(\lambda)| \leq C \lambda^{-1}$ would be needed), integral-Lipschitzness under some constant follows from the fact that $|\frac{d}{d\lambda}\hat g_\sigma(\lambda)|$ is bounded on $[0,2]$: by the uniform convergence of the term-by-term derivative series, it is continuous. Well-definedness of the product $\frac{d}{dz}\hat g_\sigma \frac{dz}{d\lambda}$ has to be checked in $\lambda=0$, where it follows by continuous extension using l'Hospital's rule. As a continuous function defined on a compact interval, $|\frac{d}{d\lambda}\hat g_\sigma|$ assumes a maximum.

\section{Proofs}\label{app:proofs}

\subsection{Proof of \autoref{theorem:permutationequivariance}}

We next prove the permutation equivariance of the spectral filter in \autoref{eq:spec}:
\theorempermutationequivariance*
for the general case of parametrizing a Hermitian ``Laplacian'' \(\mL \in \C^{n \times n}, \mL^{\CT} = \mL\). Note that this proof does not rely in any means on the specifics of \(\mL\), solely that the eigendecomposition exists \(\mL = \Eigvec \Eigval \Eigvec^{\CT}\) with unitary eigenvectors \(\Eigvec \Eigvec^{\CT} = \mI\). For practical reasons, it is suitable to define \(\mL(\mA)\) as a function of \(\mA\). A similar proof for real-valued eigenvectors is given by \citep{lim_sign_2023}. The specific spectral filter we consider is

\begin{equation}\label{eq:spec_dir}
    \spec(\mH^{(l-1)}; \Eigvec, \veigval) = h \left[ \Eigvec \Big( \graphfilter \odot \big[\Eigvec^\CT f(\mH^{(l-1)})\big] \Big) \right]
\end{equation}

with arbitrary \(f: \C^{d_1} \to \C^{d_2}\), applied row-wise to \(\mH^{(l-1)} \in \C^{n \times d_1}\). Analogously, \(h: \C^{d_2} \to \C^{d_3}\) is applied row-wise. We choose complex functions to emphasize generality, although we restrict \(\spec\) to real in- and outputs in all experiments. The graph filter is defined as element-wise function \(\graphfilterfn_{u,v}(\veigval) \coloneq \graphfilterfn_v(\eigval_u, \{\lambda_1, \lambda_2, \dots, \lambda_k\})\) that depends on the specific eigenvalue \(\eigval\) and potentially the set of eigenvalues \(\{\lambda_1, \lambda_2, \dots, \lambda_k\}\) (or its vector representation \(\veigval\)) of the partial eigendecomposition.

We need to make sure that the partial decomposition includes all eigenvalues of the same magnitude, i.e., \(\eigval_u \ne \eigval_{u'}, \forall u \in \{1,2,\dots, k\}, u' \in \{k+1,k+2,\dots, n\}\). In practice, this is achieved by choosing  large enough \(k\) to accommodate all eigenvalues \(\eigval_{\text{cut}} < \eigval_{k+1}\), or by dropping trailing eigenvalues where \(\eigval_j = \eigval_{k+1}\) for \(j \in \{1,2,\dots, k\}\). Generally, it is also not important that we consider the \(k\) \emph{smallest} eigenvalues in the spectral filter. We only need to ensure that the spectral filter is either calculated on all or no eigenvalues/-vectors of an eigenspace.

\begin{proof} Assuming functions \(\phi(\mX)\) and \(\psi(\mX)\) are permutation equivariant, then \(\phi(\psi(\mX))\) is permutation equivariant \(\phi(\psi(\mP\mX)) = \phi(\mP\psi(\mX)) = \mP\phi(\psi(\mX))\) for any \(n \times n\) permutation \(\mP \in \mathcal{P}\). Thus, it sufficies to prove permutation equivariance for \(h, f, \Eigvec ( \graphfilter \odot [\Eigvec^\CT \mX])\) independently, where \(\mX \in \C^{n \times d_2}\). 

Regardless of the complex image and domain of \(h\) and \(f\), they are permuation equivariant since they are applied row-wise
\[
    f(\mX) = \begin{bmatrix} f(\mX_1) & f(\mX_2) & \dots & f(\mX_n) \end{bmatrix}^\CT
\]
and reordering the rows in \(\mX \in \C^{n \times d_1}\) also reorders the outputs: \(f(\mP\mX) = \mP f(\mX)\).

For finalizing the proof of permutation equivariance, we first rearrange \(\mY = \Eigvec ( \graphfilter \odot [\Eigvec^\CT \mX]) = \sum_{u=1}^k \eigvec_u (\graphfilterfn_{u, :}(\eigval_u) \odot [\eigvec_u^\CT \mX]) \) and \(\mY_{:, v} = \sum_{u=1}^k \graphfilterfn_{u, v}(\eigval_u) \eigvec_u \eigvec_u^\CT \mX_{:,v}\). 

This construction (a) is invariant to the ambiguity that every eigenvector \(\eigvec_u\) can be arbitrarily rotated \(c_u \eigvec_u\) by \(\{c_u \in \C \,|\, |c_u| = 1\}\). That is, \((c_u \eigvec_u) (c_u \eigvec_u)^\CT = c_u \bar{c}_u \eigvec_u \eigvec_u^\CT = \eigvec_u \eigvec_u^\CT\). 

Moreover, (b) in the case of \(j\) repeated eigenvalues \(\{s + 1, s + 2, \dots, s + j\}\) where \(\eigval_{s + 1} = \eigval_{s + 2} = \dots = \eigval_{s + j}\), we can choose a set of orthogonal eigenvectors arbitrarily rotated/reflected from the \(j\)-dimensional eigenspace (basis symmetry). The given set of eigenvectors can be arbitrarily transformed \(\Eigvec_{:, s+1:s+j} \boldsymbol{\Gamma}_j\) by a matrix chosen from the unitary group \(\boldsymbol{\Gamma}_j \in U(j)\). Since 
\begin{equation*}
\sum_{u=s}^{s + j} \graphfilterfn_{u, v}(\eigval_u) \eigvec_u \eigvec_u^\CT \mX_{:,v} = \graphfilterfn_{s, v} (\eigval_s) \left[ \sum_{u=s}^{s + j} \eigvec_u \eigvec_u^\CT \right] \mX_{:,v} = \graphfilterfn_{s, v} (\eigval_s) \left[ \Eigvec_{:, s+1:s+j}\Eigvec_{:, s+1:s+j}^\CT \right] \mX_{:,v}
\end{equation*}
we simply need to show that the expression is invariant to a transformation by \(\boldsymbol{\Gamma}_j\):
\begin{equation*}
    \begin{aligned}
        \Eigvec_{:, s+1:s+j} \boldsymbol{\Gamma}_j (\Eigvec_{:, s+1:s+j} \boldsymbol{\Gamma}_j)^\CT
        =  \Eigvec_{:, s+1:s+j} \boldsymbol{\Gamma}_j \boldsymbol{\Gamma}_j^\CT \Eigvec_{:, s+1:s+j}^\CT
        = \Eigvec_{:, s+1:s+j} \Eigvec_{:, s+1:s+j}^\CT \\
    \end{aligned}
\end{equation*}

To see why \(\boldsymbol{\Gamma}_j \in U(j)\) is a sufficient choice in the light of repeated/multiple eigenvalues, consider the defintion of eigenvalues/vectors
\begin{equation*}
    \begin{aligned}
        \mL \Eigvec_{:, s+1:s+j} 
        &= \mL \begin{bmatrix}
            | & | &  & | \\
            \eigvec_{s+1} & \eigvec_{s+2} & \dots & \eigvec_{s+j} \\
            | & | &  & | \\
        \end{bmatrix} \\
        &= \begin{bmatrix}
            | & | &  & | \\
            \eigvec_{s+1} & \eigvec_{s+2} & \dots & \eigvec_{s+j} \\
            | & | &  & | \\
        \end{bmatrix} \begin{bmatrix}
            \eigval_{s+1} & 0 & \dots & 0 \\
            0 & \eigval_{s+2} & \dots & 0 \\
            \vdots & \vdots & \ddots & \vdots \\
            0 & 0 & \dots & \eigval_{s+j} \\
        \end{bmatrix} \\
        &= \eigval_{s+1} \begin{bmatrix}
            | & | &  & | \\
            \eigvec_{s+1} & \eigvec_{s+2} & \dots & \eigvec_{s+j} \\
            | & | &  & | \\
        \end{bmatrix} \\
        &= \eigval_{s+1} \Eigvec_{:, s+1:s+j}
    \end{aligned}
\end{equation*}
we can now multiply both sides from the right with an arbitrary matrix \(\mB \in \C^{j \times j}\). To preserve the unitary property \(\Eigvec_{:, s+1:s+j} \Eigvec_{:, s+1:s+j}^\CT = \mI\), we require \((\Eigvec_{:, s+1:s+j} \mB) (\Eigvec_{:, s+1:s+j}\mB)^\CT = \mI\). Thus, the eigenvectors can be arbitrarily transformed by \(\boldsymbol{\Gamma}_j \in U(j)\) instead of \(\mB \in \C^{j \times j}\).

This concludes the proof.
\end{proof}

\subsection{Proof of \autoref{theorem:specspat_over-squashing}}\label{subsec:over-squashingproof}

We restate \autoref{theorem:specspat_over-squashing} in more detail and also considering graphs that contain multiple connected components. The unchanged bottom line is that \nameacr{} can express signals lower-bounded by a constant that is unaffected by local properties of the graph topology, instead of suffering from exponential sensitivity decay like spatial MPGNNs.

\begin{theorem*}[\autoref{theorem:specspat_over-squashing}, formal]
Consider an $\ell$-layer \nameacrsing{} of the form \autoref{eq:specspat}. Let $(\tilde \vartheta, \vartheta, \theta)$ be parameters of the spatial GNN, spectral filters $\hat g_\vartheta\Layer$, and feature transformation $f_\theta$. Assume the existence of parameters $\tilde \vartheta$ such that $\spat^{(l)}(\mH^{(l-1)}; \mA, \tilde \vartheta) = 0$ $\forall 1 \leq l \leq \ell$ and $\theta$ such that $f_{\theta} = \mI$. Then, a filter choice $\vartheta$ exists such that the $\ell$-layer \nameacrsing{} of the additive form \autoref{eq:specspat} can express a signal $\mathbf{h}_v^{(\ell)}(\mH^{(0)};\tilde \vartheta, \vartheta, \theta)$ with uniformly lower-bounded Jacobian sensitivity,
\begin{equation}
    \left\|\frac{\partial \mathbf{h}_v^{(\ell)}(\mH^{(0)};\tilde \vartheta, \vartheta, \theta)}{\partial \mathbf{h}_u^{(0)}}\right\|_{L^1} \geq
    \begin{cases}
        \frac{d K_\vartheta^\ell}{2|E_\mathcal{C}|} \text{  if $u,v$ connected,}\\
        0 \text{  otherwise,}
    \end{cases}
\end{equation}
with $\vh_u^{(0)}$, $\vh_v^{(\ell)}$ denoting the rows of $\mH^{(0)}$, $\mH^{(\ell)}$ corresponding to the nodes $u \neq v$ $\in \mathcal{G}$, connected component $\mathcal{C} \subset \mathcal{G}$ containing $\lvert E_\mathcal{C}\rvert$ edges, network width $d$ and parameter-dependent constant $K_\vartheta$.
\end{theorem*}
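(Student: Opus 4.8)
The plan is to exhibit the virtual-node special case of the additive \nameacrsing{} (\autoref{eq:specspat}) and differentiate it in closed form. First I would fix the parameters: take $\tilde\vartheta$ so that $\spat^{(l)}(\cdot\,;\mA,\tilde\vartheta)=0$ for all $l$ and $\theta$ with $f_\theta=\mI$ (both assumed to exist), and for every layer set the spectral filter to $\graphfilterfn^{(l)}_\vartheta(\veigval)=K_\vartheta\,\mathds{1}_{\{0\}}(\veigval)$ on all $d$ channels, for some constant $K_\vartheta>0$ (one may simply take $K_\vartheta=1$). To make this filter well-defined and permutation equivariant even on a disconnected graph I would take $k$ at least the number of connected components of $\gG$, so that the truncated $\EVD$ contains the entire kernel of $\mL$; the ``drop trailing repeated eigenvalues'' rule of \autoref{sec:computational} does not interfere, since the smallest nonzero eigenvalue is strictly positive.

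With these choices \autoref{eq:specspat} reduces to $\mH^{(l)}=\spec^{(l)}(\mH^{(l-1)})=K_\vartheta\,\boldsymbol{\Pi}_0\,\mH^{(l-1)}$, where $\boldsymbol{\Pi}_0=\sum_{u:\eigval_u=0}\eigvec_u\eigvec_u^\top$ is the orthogonal projector onto $\ker\mL$. By the basis-invariance argument already used in the proof of \autoref{theorem:permutationequivariance}, $\boldsymbol{\Pi}_0$ does not depend on the eigenbasis chosen within the zero-eigenspace, so it is a fixed permutation-equivariant function of $\mA$. Since $\mL$ is block diagonal over connected components, so is $\boldsymbol{\Pi}_0$; on a component $\mathcal{C}$ the kernel of the symmetrically normalized Laplacian is one-dimensional and spanned by the unit vector with entries $\sqrt{d_w/(2\lvert E_\mathcal{C}\rvert)}$ for $w\in\mathcal{C}$ (cf.\ \autoref{app:virtualnode}), hence for $v\in\mathcal{C}$
\begin{equation*}
    (\boldsymbol{\Pi}_0)_{vw}=\frac{\sqrt{d_v d_w}}{2\lvert E_\mathcal{C}\rvert}\ \text{ if }w\in\mathcal{C},\qquad (\boldsymbol{\Pi}_0)_{vw}=0\ \text{ otherwise,}
\end{equation*}
where $d_w$ denotes the degree of node $w$.

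Next, using idempotency $\boldsymbol{\Pi}_0^\ell=\boldsymbol{\Pi}_0$, stacking $\ell$ layers gives $\mH^{(\ell)}=K_\vartheta^\ell\,\boldsymbol{\Pi}_0\,\mH^{(0)}$, i.e.\ $\vh_v^{(\ell)}=K_\vartheta^\ell\sum_w(\boldsymbol{\Pi}_0)_{vw}\vh_w^{(0)}$. Differentiating the block for node $v$ with respect to the block for node $u$ yields $\partial\vh_v^{(\ell)}/\partial\vh_u^{(0)}=K_\vartheta^\ell(\boldsymbol{\Pi}_0)_{vu}\,\mI_d$, a scaled $d\times d$ identity, whose entrywise $L^1$-norm is $K_\vartheta^\ell(\boldsymbol{\Pi}_0)_{vu}\,d$. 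This is $0$ when $u,v$ lie in different components, and when they share a component $\mathcal{C}$ it equals $K_\vartheta^\ell d\sqrt{d_v d_u}/(2\lvert E_\mathcal{C}\rvert)\ge K_\vartheta^\ell d/(2\lvert E_\mathcal{C}\rvert)$ using $d_v,d_u\ge 1$, which is exactly the claimed bound.

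I expect the only genuinely non-routine point to be justifying that the indicator $\mathds{1}_{\{0\}}$ lies in — or is approximated to arbitrary resolution by — the filter parametrization of \autoref{sec:filter_parametrization}; this is precisely the function the Gaussian-smearing construction targets, and the computation above uses only the value of the filter at the isolated eigenvalue $0$, so one may equivalently restrict the spectral band to $\{0\}$. A second, bookkeeping-level subtlety is that the per-layer factor $K_\vartheta^\ell$ comes out cleanly exactly because the chosen filter is a genuine projector: a merely approximate low-pass would let the layerwise gains compound into spurious decay, which is why the exact indicator is the right witness.
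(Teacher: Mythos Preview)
Your proposal is correct and follows essentially the same approach as the paper: zero out the spatial part, set $f_\theta=\mI$, choose the spectral filter to be $K_\vartheta\,\mathds{1}_{\{0\}}$ on every channel, identify the resulting layer with $K_\vartheta$ times the orthogonal projector onto $\ker\mL$, use idempotency to collapse the $\ell$ layers, and bound $\sqrt{d_u d_v}\ge 1$. The paper's proof is organized identically, only writing the per-component zero-eigenvector explicitly rather than the projector $\boldsymbol{\Pi}_0$; your additional remarks on the choice of $k$ for disconnected graphs and on the filter parametrization are sensible bookkeeping that the paper leaves implicit.
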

\begin{proof}
    Choose $\tilde \vartheta$ such that $\spat^{(l)}(\mH^{(l-1)}; \mA) = 0$ $\forall 1 \leq l \leq \ell$ (typically by setting all weights and biases to zero), $\theta$ such that $f_{\theta} = \mI$, and set $\vartheta$ such that
    \begin{equation}
        \graphfilterfn^{(l)}_k(\eigval; \vartheta) = K_\vartheta
        \begin{cases}
            1 \text{  for } \lambda = 0,\\
            0 \text{  for } \lambda > 0,
        \end{cases}
        \quad \forall 1 \leq l \leq \ell, \, 1 \leq k \leq d
    \end{equation}
    for some $K_\vartheta > 0$. This choice of filter parameters $\vartheta$ lets $\spec$ act like a type of virtual node across all hidden dimensions $k$: In the standard orthonormal basis of the $0$-eigenspace given by 
    \begin{equation}
        \left(\vv^{(\mathcal{C})}\right)_{u} = \sqrt{\frac{d_u}{2\lvert E_\mathcal{C}\rvert}}
        \begin{cases}
            1 \text{  for } u \in \mathcal{C},\\
            0 \text{  else},
        \end{cases}
    \end{equation}
    where $\mathcal{C}$ enumerates all connected components, and $d_u$ denotes the degree of node $u$, we find 
    \begin{equation}
        \begin{aligned}
            \mathbf{h}_v^{(\ell)}(\mH^{(0)};\tilde \vartheta, \vartheta, \theta) &= \left(\spec^{(\ell)} \circ \dots \circ \spec^{(0)}\right)_v(\mH^{(0)}; \Eigvec, \veigval) \\ &= \frac{K_\vartheta^\ell \sqrt{d_v}}{2\lvert E_{\mathcal{C}^{(v)}}\rvert}\sum_{u \in \mathcal{C}^{(v)}} \sqrt{d_{u}} \mathbf{h}_{u}^{(0)},
        \end{aligned}
    \end{equation}
    with $\mathcal{C}^{(v)}$ denoting the connected component containing $v$. Particularly, note that applying the spectral layer more than once does not affect the result since the projector onto an eigenvector is idempotent (up to $K_\vartheta$). The result must also hold in any other orthonormal basis of the $0$-eigenspace due to the invariance of $\spec$ under orthogonal eigenbasis transformations. Differentiating with respect to $\mathbf{h}_{u}^{(0)}$, taking the $L^1$ norm and using $\sqrt{d_u d_v} \geq 1 $ shows the statement.
\end{proof}

\subsection{Proof of \autoref{thm:discontinuous}}

\theoremdiscontinuous*
The proof makes use of a result by S. Bernstein \citep{nathanson_constructive}:
\begin{theorem}[Bernstein]
\label{thm:bernsteinapprox}
    Let $f\colon [0, 2\pi]  \to \sC$ be a $2\pi$-periodic function. Then $f$ is $\alpha$-Hölder continuous for some $\alpha \in (0, 1)$ if, for every $p \in \sN$, there exists a degree-$p$ trigonometric polynomial $T_p(x) = a_0 + \sum_{j=1}^p a_j \cos(jx) + \sum_{j=1}^p b_j \sin(jx)$ with coefficients $a_j, b_j \in \sC$, such that
    \[\sup_{0 \leq x \leq 2\pi} |f(x) - T_p(x)| \leq \frac{C(f)}{p^\alpha}\]
    where C(f) is a positive number depending on $f$.
\end{theorem}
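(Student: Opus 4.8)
The plan is to establish this classical Bernstein-type inverse theorem via a dyadic telescoping of the given approximants, combined with the Bernstein differentiation inequality for trigonometric polynomials. First I would note that $\sup_x|f(x)-T_p(x)|\le C(f)p^{-\alpha}\to 0$ forces $f$, a uniform limit of the continuous $T_p$, to be continuous and hence bounded on $[0,2\pi]$; write $M=\sup_x|f(x)|$. I would then pass to dyadic scales: set $Q_0=T_1$ and $Q_k=T_{2^k}-T_{2^{k-1}}$ for $k\ge 1$, so that each $Q_k$ is a trigonometric polynomial of degree at most $2^k$, $T_{2^n}=\sum_{k=0}^n Q_k$, and (since $T_{2^n}\to f$ uniformly) $f=\sum_{k=0}^\infty Q_k$ with uniform convergence. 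The triangle inequality and the approximation hypothesis give the block bound $\|Q_k\|_\infty\le\|f-T_{2^k}\|_\infty+\|f-T_{2^{k-1}}\|_\infty\le C(f)(1+2^\alpha)2^{-k\alpha}=:A\,2^{-k\alpha}$ for $k\ge 1$, and $\|Q_0\|_\infty\le A$ after enlarging $A$.

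The decisive tool I would invoke is Bernstein's inequality: a trigonometric polynomial $P$ of degree $\le m$ satisfies $\|P'\|_\infty\le m\|P\|_\infty$ (also due to S.\ Bernstein; see \citep{nathanson_constructive}). Applied to $Q_k$ this yields $\|Q_k'\|_\infty\le 2^k\|Q_k\|_\infty\le A\,2^{k(1-\alpha)}$ for all $k\ge 0$. Then I would fix $x,y$ with $0<h:=|x-y|<1$, pick $N=\floor{\log_2(1/h)}$ so that $2^{-N-1}<h\le 2^{-N}$, and split $f(x)-f(y)=\sum_{k=0}^\infty(Q_k(x)-Q_k(y))$ at index $N$. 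For $k\le N$ the mean value theorem gives $|Q_k(x)-Q_k(y)|\le\|Q_k'\|_\infty h\le A\,2^{k(1-\alpha)}h$; since $\alpha<1$, summing the geometric progression over $0\le k\le N$ is bounded by a constant depending only on $\alpha$ times $h\,2^{N(1-\alpha)}$, and $2^{-N}\ge h$ gives $2^{N(1-\alpha)}\le h^{-(1-\alpha)}$, so this part is $\le c_\alpha\,h^\alpha$. For $k>N$ the crude bound $|Q_k(x)-Q_k(y)|\le 2\|Q_k\|_\infty\le 2A\,2^{-k\alpha}$ and $\alpha>0$ make the tail $\sum_{k>N}$ bounded by a constant times $2^{-N\alpha}$, and $2^{-N}<2h$ gives $2^{-N\alpha}<(2h)^\alpha$, so this part is also $\le c_\alpha'\,h^\alpha$. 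Adding the two contributions yields $|f(x)-f(y)|\le C'\,|x-y|^\alpha$ with $C'$ depending only on $\alpha$ and $C(f)$; for $|x-y|\ge 1$ the estimate $|f(x)-f(y)|\le 2M\le 2M|x-y|^\alpha$ is trivial, so $f$ is $\alpha$-Hölder continuous, as claimed.

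I expect the only genuinely non-elementary step to be the use of Bernstein's inequality $\|P'\|_\infty\le m\|P\|_\infty$; everything else -- the dyadic decomposition, the split at the scale $h$, and the two geometric-series estimates -- is routine bookkeeping. If one wanted a self-contained argument rather than citing that inequality, a short insert via the M.\ Riesz interpolation formula for $P'$ (or via convolution of $P$ with a Fejér-type kernel) would suffice; but since the surrounding text already treats Bernstein's work as a black box, citing it is the natural choice. The one point deserving care is the range of $\alpha$: the step $\sum_{k\le N}2^{k(1-\alpha)}\le c_\alpha\,2^{N(1-\alpha)}$ relies on $\alpha<1$ to stay a convergent geometric tail in the normalization used, while $\alpha>0$ is what makes $h^\alpha$ a genuine modulus of continuity; at the endpoint $\alpha=1$ the same argument lands in the Zygmund class rather than in $\mathrm{Lip}_1$, which is why the statement is phrased for $\alpha\in(0,1)$.
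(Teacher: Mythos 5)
Your proposal is correct, but it is worth pointing out that the paper itself never proves this statement: \autoref{thm:bernsteinapprox} is imported verbatim as a classical result of S.~Bernstein, cited from \citep{nathanson_constructive}, and used purely as a black box inside the proof of \autoref{thm:discontinuous}. What you have written is the standard textbook derivation of that inverse approximation theorem, and it is sound in all essentials: the dyadic telescoping $Q_0=T_1$, $Q_k=T_{2^k}-T_{2^{k-1}}$ with $\deg Q_k\le 2^k$ and $\lVert Q_k\rVert_\infty \le C(f)(1+2^\alpha)2^{-k\alpha}$; Bernstein's derivative inequality $\lVert P'\rVert_\infty\le m\lVert P\rVert_\infty$ giving $\lVert Q_k'\rVert_\infty\le A\,2^{k(1-\alpha)}$; and the split of $\sum_k\bigl(Q_k(x)-Q_k(y)\bigr)$ at the scale $2^{-N}\asymp h$, where the low-frequency sum is controlled because $\alpha<1$ (geometric growth dominated by its top term) and the tail because $\alpha>0$. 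The only cosmetic repair is that, since $f$ and the $T_p$ are complex-valued, the mean value theorem should be replaced by $|Q_k(x)-Q_k(y)|\le \lVert Q_k'\rVert_\infty\,|x-y|$ obtained by integrating $Q_k'$ along the segment, which is the estimate you actually use; and the case of well-separated $x,y$ is handled by boundedness exactly as you say. Your closing remark that the endpoint $\alpha=1$ only lands in the Zygmund class correctly explains why the statement is phrased for $\alpha\in(0,1)$. Relative to the paper, your route buys a self-contained justification (modulo citing Bernstein's derivative inequality, or inserting a Riesz/Fej\'er-kernel proof of it), whereas the paper's choice to quote the theorem wholesale is equally legitimate given that it is classical; nothing in the paper's downstream use of the theorem depends on the internals of this argument.
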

\begin{proof}
    Given a discontinuous filter $\hat g \colon [0,2] \to \sR$, construct the function $f\colon [0, 2\pi]  \to \sC$ fulfilling the prerequisites of \autoref{thm:bernsteinapprox} by pre-composing $f \vcentcolon = \hat g \circ (\cos(\cdot) + 1)$. We proceed via contradiction. Suppose that there is an $\alpha \in (0,1)$ and a sequence of degree-$p$ polynomial filters, $\hat g_{\bm{\gamma}_p}(\lambda) = \sum_{j=0}^p \gamma_j \lambda^j$, $\bm{\gamma} = (\gamma_0, \dots, \gamma_p)^\top \in \sR^{p+1}$, such that $\lVert \hat g_{\bm{\gamma}_p} - \hat g \rVert_\infty = \mathcal{O}(p^{-\alpha})$. Then, the sequence of trigonometric polynomials $T_p \vcentcolon = \hat g_{\bm{\gamma}_p} \circ (\cos(\cdot) + 1)$ fulfills the condition of \autoref{thm:bernsteinapprox}. This would imply that $f = \hat g \circ (\cos(\cdot) + 1)$ is $\alpha$-Hölder continuous, meaning that a constant $K > 0$ exists such that 
    \[|\hat g(\cos(x) + 1) - \hat g (\cos(y) + 1)| \leq K |x-y|^\alpha \; \forall \, x, y \in [0, 2\pi]\]
    Considering $\lambda_0 \in [0,2]$, $\lambda \to \lambda_0$ and $x = \arccos(\lambda_0-1)$, $y = \arccos(\lambda-1)$ (using the $\arccos$ branch in which both $\lambda_0$, $\lambda$ eventually end up) shows a contradiction to the assumed discontinuity of $\hat g$. Therefore, no polynomial filter sequence $\left(\hat g_{\bm{\gamma}_p}\right)_{p \in \sN}$ together with an $\alpha \in (0,1)$ exist such that $\lVert \hat g_{\bm{\gamma}_p} - \hat g \rVert_\infty = \mathcal{O}(p^{-\alpha})$. In particular, for any sequence $\left(\hat g_{\bm{\gamma}_p}\right)_{p \in \sN}$, a sequence of adversarial values $(\lambda_p)_{p \in \sN}$, $\lambda_p \in [0,2]$ exists such that 
    \[\nexists \alpha \in (0,1)\colon \; |\hat g_{\bm{\gamma}_p}(\lambda_p) - \hat g(\lambda_p)| = \mathcal{O}(p^{-\alpha})\]
    The proof is finished if we can find a sequence of graphs $\left(\gG_p\right)$ such that the symmetrically-normalized graph Laplacian $\mL_p$ of $\gG_p$ contains $\lambda_p$ as an eigenvalue. In this case, we can construct adversarial input signals $\mX_p$ on the graphs $\gG_p$ by setting the first embedding channel to an eigenvector corresponding to $\lambda_p$, and the remaining channels to zero, such that $\left(g_{\bm{\gamma}_p} - g\right) \ast_{\gG_p}\mX_p = |\hat g_{\bm{\gamma}_p}(\lambda_p) - \hat g(\lambda_p)| \mX_p$. In particular, it then holds that
    \[\nexists \alpha \in \sR^+ \colon \, \sup_{0 \neq \bm X \in \sR^{|\mathcal{G}_p| \times d}}\frac{\lVert(g_{\bm{\gamma}_p} - g) \ast_{\gG_p} \bm{X}\rVert_{\mathrm{F}}}{\lVert \bm{X} \rVert_{\mathrm{F}}} = \mathcal{O}\left(p^{-\alpha}\right)\]
    If we assume only simple graphs, such a construction is unfortunately not possible since the set of all simple graphs and therefore the set of all realizable eigenvalues is countable, whereas the adversarial values $\lambda_p$ could lie anywhere in the uncountable set $[0,2]$. We can, however realize arbitrary eigenvalues by using weighted graphs with three nodes. Consider a cyclic graph structure and tune the weight of edge $(1,2)$ to $\sin^2(\theta_p)$ and the weight of edges $(2,3)$ and $(3,1)$ to $\cos^2(\theta_p)$ with $\theta_p \in \left[0, \frac{\pi}{2}\right]$. The symmetrically-normalized graph Laplacian,
    \[  \mL_p = \begin{pmatrix}
            1 & -\cos^2(\theta_p) & -\sin^2(\theta_p) \\ 
            -\cos^2(\theta_p) & 1 & -\sin^2(\theta_p) \\  
            -\sin^2(\theta_p) & -\sin^2(\theta_p) & 1    
        \end{pmatrix},
    \]
    has eigenvalues $\lambda^{(1)}_p = 1, \, \lambda^{(2)}_p = \sin^2(\theta_p), \, \lambda^{(3)}_p = 2 - \sin^2(\theta_p)$. $\lambda^{(2)}_p$ can assume all values $\lambda_p \in [0,1]$, whereas $\lambda^{(3)}_p$ can assume all values $\lambda_p \in [1,2]$. This finishes the proof.
\end{proof}
\begin{remark}
    If one wishes to restrict the set of possible adversarial graph sequences $\left(\gG_p\right)_{p \in \sN}$ to include only simple graphs, a version of \autoref{thm:discontinuous} still holds where we restrict the assumption to filters $\hat g$ which are piecewise-continuous with discontinuities on a finite set of points $\mathcal{D} \subset \mathcal{S}$, where $\mathcal{S} \subset [0,2]$ denotes the countable set of eigenvalues realizable by simple graphs. This still covers a large class of filters to which order-$p$ polynomial filters can provably converge slower than any inverse root of $p$ in the operator norm, and includes the virtual node filter (discontinuous only in $\lambda=0$) presented as an example in the main body. The proof is fully analogous up to the point of constructing $\lambda_p$. If $\lambda_p \in \mathcal{D}$, we can find a graph that realizes it exactly. Now assume $\lambda_p \notin \mathcal{D}$. We note that the set $\mathcal{S}$ is dense in $[0,2]$ (clear from considering, e.g., the cyclic graphs $\mathcal{C}_n$ with symmetrically-normalized Laplacian eigenvalues $\lambda_k = 1 - \cos\left(\frac{2 \pi k}{n}\right)$). Since we assume that $\hat g$ and therefore also $| \hat g_{\bm{\gamma}_p} - \hat g |$ is piecewise-continuous anywhere but on $\mathcal{D} \subset \mathcal{S}$ and $\mathcal{D}$ is finite, we can find an open neighborhood $\mathcal{N}(\lambda_p)$ for any $\lambda_p \notin \mathcal{D}$ on which $\hat g$ is continuous. Using that $\mathcal{S}$ is dense in $[0,2]$, we find a graph sequence $\left(\tilde \gG_p^{(l)}\right)_{l \in \sN}$ with eigenvalues $\tilde \lambda_p^{(l)} \in \mathcal{N}(\lambda_p)$ $\forall l \in \sN$, $\left(\tilde \lambda_p^{(l)}\right)_{l \in \sN} \to \lambda_p$ for which $\lVert \hat g_{\bm{\gamma}_p}(\tilde \lambda_p^{(l)}) - \hat g (\tilde \lambda_p^{(l)}) \rVert \rightarrow \lVert \hat g_{\bm{\gamma}_p}(\lambda_p) - \hat g(\lambda_p) \rVert$. Therefore, by the same reasoning as in the proof of \autoref{thm:discontinuous}, we find that there can be no $\alpha \in (0,1)$ for which $\sup_{0 \neq \bm X \in \sR^{|\mathcal{G}_p| \times d}}\frac{\lVert(g_{\bm{\gamma}_p} - g) \ast_{\gG_p} \bm{X}\rVert_{\mathrm{F}}}{\lVert \bm{X} \rVert_{\mathrm{F}}}$ is of $\mathcal{O}\left(p^{-\alpha}\right)$.
\end{remark}

\subsection{Proof of \autoref{thm:approximationtheory_basic}}\label{app:proof_approximation}
We first introduce the setting and notation to state \autoref{thm:approximationtheory_basic} in its general version. We study how well \nameacr{} can approximate ``idealized'' GNNs (\textit{IGNNs}) containing $L$ graph convolution layers $1 \leq l \leq L$, each of which can express a convolution operator $g$ with \textit{any} spectral representation $\hat g\Layer \colon \, [0,2] \to \sR^{d\Layer}$. An IGNN layer therefore has the structure
\begin{equation}
\label{eq:ignnequation}
    \bm{\mathcal{H}}\Layer = \sigma\left(g\Layer \ast_\gG [\bm{\mathcal{H}}\layer \mW \Layer]\right) = \sigma\left(\mV \hat g\Layer(\veigval) \odot [\mV^\top \bm{\mathcal{H}}\layer \mW\Layer]\right)
\end{equation}
with $\bm{\mathcal{H}}\Layer \in \sR^{n \times D\Layer}$, $\mW\Layer \in \sR^{D\Layer \times D\layer}$ and $\mV \in \sR^{n \times n}$.\par
We compare this to \nameacr{} with $\ell = (m+1)L$ layers for $m \geq 1$, in the additive form of \autoref{eq:specspat},
\begin{equation}
    \mH^{(l)} = \spec^{(l)}(\mH^{(l-1)}; \Eigvec, \veigval) + \spat^{(l)}(\mH^{(l-1)}; \mA)
\end{equation}
Each layer $1 \leq l \leq \ell$ parametrizes a spatio-spectral convolution. The spectral part satisfies \autoref{eq:spec}, 
\begin{equation}
    \spec^{(l)}(\mH^{(l-1)}; \Eigvec, \veigval) = \Eigvec \Big( \graphfilternnl \odot \big[\Eigvec^\top \mH^{(l-1)}\mW\Layer_\text{spec}\big] \Big)
\end{equation}
with embeddings $\mH\Layer \in \sR^{n \times d\Layer}$, linear feature transforms $f\Layer_\theta \coloneq \mW\Layer_\text{spec} \in \sR^{d\Layer \times d\layer}$ and a spectral filter $\hat g_\vartheta\Layer \colon [0,2] \to \sR$ that is fully supported and a universal approximator on $[0, \lambda_\text{cut}]$. Note we assume here that in every layer, there is only one spectral filter which gets reshaped as to act on every hidden component, whereas in practice, we relax this assumption to different filters per component, which can only be more expressive. The spatial part is a polynomial filter of the form
\begin{align*}
\spat^{(l)}(\mH^{(l-1)}; \mA) &= \sigma\left(\left[\sum_{j=0}^{p}\gamma\Layer_j\mL^j\right] \mH^{(l-1)} \mW_\text{spat}\Layer \right)\\
&= \sigma \left(\Eigvec\Big( \hat g_{\bm{\gamma}}\Layer(\veigval) \odot \big[\Eigvec^\top \mH^{(l-1)}\mW\Layer_\text{spat}\big] \Big)\right)
\end{align*}
with $\mW\Layer_\text{spat} \in \sR^{d\Layer \times d\layer}$, polynomial order $p$ (fixed across layers), and a spectral representation $\hat g_{\bm{\gamma}}\Layer(\lambda) = \sum_{j=0}^{p}\gamma\Layer_j\lambda^j$ with coefficients $\bm{\gamma}\Layer = (\gamma_0\Layer, \dots, \gamma_{p}\Layer)^\top \in \sR^{p+1}$. We note that \cref{thm:approximationtheory_basic} extends immediately to the case of directed graphs if the spatial part is instead a polynomial of the magnetic Laplacian (see \cref{sec:directed_spectral_filter}) over complex-valued embeddings like in \citet{zhang_magnet_2021}.\par
Note that the layer-wise hidden dimensions $D\Layer$ vs. $d\Layer$ of the IGNN vs. \nameacrsing{} do not have to agree except at the input layer, $d^{(0)} = D^{(0)}$ (of course, both networks receive the same input $\bm{\mathcal{H}}^{(0)} = \mH^{(0)} = \mX$), and at the output layer, $d^{(\ell)} = D^{(L)}$. We now state the general version of \autoref{thm:approximationtheory_basic}.
\begin{theorem*}[\autoref{thm:approximationtheory_basic}, general]
    Assume an $L$-layer IGNN with filters $\hat g\Layer$ such that $\hat g\Layer\big\lvert_{[\lambda_{\text{cut}}, 2]} \in C^r[\lambda_{\text{cut}}, 2]$ and $\left \lVert \dr \hat g\Layer\big\lvert_{[\lambda_{\text{cut}}, 2]} \right \rVert_\infty \leq K_r^{\text{max}} \left(\lambda_\text{cut}\right)$ for all $ 1 \leq l \leq L$. Let $\lVert \hat g\Layer \rVert_\infty \leq \lVert \hat g\rVert_\infty^{\text{max}}$ and $\lVert \mW\Layer \rVert_2 \leq \lVert \mW \rVert_2^{\text{max}}$ for all $1 \leq l \leq L$. Assume that $\sigma = [ \, \cdot \, ]_\geq$ is the ReLu function. Then,\par
    (1) For a fixed polynomial order $p \geq 2$, an approximating sequence $\left(\text{\nameacrsing{}}_m\right)_{m \in \sN}$ of $[(m+1)L]$-layer \nameacr{} exists such that, for arbitrary graph sequences $(\mathcal{G}_m)_{m \in \sN}$,
    \begin{align*}
        &\sup_{0 \neq \bm X \in \sR^{|\mathcal{G}_p| \times d}}\frac{\lVert\left[(\text{\nameacrsing{}}_m)_{\gG_m} - (\text{IGNN})_{\gG_m}\right](\mX) \rVert_F}{\lVert \bm{X} \rVert_F}\\ = \, 
        &\mathcal{O}\Big (C_L(\lVert \hat g\rVert_\infty^{\text{max}}, \lVert \mW \rVert_2^{\text{max}}) \; K_r^{\text{max}} \left(\lambda_\text{cut}\right) \; (pm)^{-r}\Big),\\
        & C_L(\lVert \hat g\rVert_\infty^{\text{max}}, \lVert \mW \rVert_2^{\text{max}}) = \lVert \mW \rVert_2^{\text{max}} \prod_{l=1}^{L-1} \left[\lVert \hat g\rVert_\infty^{\text{max}} \lVert \mW \rVert_2^{\text{max}} + (\lVert \hat g\rVert_\infty^{\text{max}} \lVert \mW \rVert_2^{\text{max}})^l\right]
    \end{align*}
    with a leading-order scaling constant that depends only on $r$. Here, $(\, \cdot \,)_{\gG_m}$ denotes the instantiation of all model filters on the eigenvalues of an input graph ${\gG_m}$, which maps both models onto a ${\gG_m}$-dependent function $\sR^{D^{(0)}} \to \sR^{D^{(L)}}$.\par
    (2) For fixed $m \geq 1$, an approximating sequence $\left(\text{\nameacrsing{}}_p\right)_{p \in \sN}$ of $[(m+1)L]$-layer \nameacr{} with increasing layer-wise polynomial order $p$ exists such that, for all $(\mathcal{G}_p)_{p \in \sN}$, the same bound holds.
\end{theorem*}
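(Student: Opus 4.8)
I would prove both parts at once by an explicit construction that approximates the $L$-layer IGNN block-by-block: partition the $[(m+1)L]$ \nameacrsing{} layers into $L$ consecutive \emph{blocks} of $m+1$ layers, and design the $l$-th block so that, after instantiation on any graph, it realizes \emph{exactly} the map $\mathcal{H} \mapsto \sigma\!\big(\tilde g^{(l)}(\mL)\,\mathcal{H}\,\mW^{(l)}\big)$ for a scalar filter $\tilde g^{(l)}$ uniformly close to the IGNN filter $\hat g^{(l)}$. The filter splits as $\tilde g^{(l)} = \hat g_{\vartheta}^{(l)} + \hat g_{\bm{\gamma}}^{(l)}$ into a low-frequency part $\hat g_{\vartheta}^{(l)}$, supported on $[0,\lambda_{\text{cut}}]$ and carried by the spectral filter, and a high-frequency part $\hat g_{\bm{\gamma}}^{(l)}$, a polynomial carried by the spatial parts. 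On $[0,\lambda_{\text{cut}}]$ we set $\hat g_{\vartheta}^{(l)} = \hat g^{(l)} - \hat g_{\bm{\gamma}}^{(l)}$ (up to a negligible amount, by the assumed universality of the spectral filter on that band), so $\tilde g^{(l)} \equiv \hat g^{(l)}$ there; on $[\lambda_{\text{cut}},2]$ the spectral part vanishes and $\tilde g^{(l)} = \hat g_{\bm{\gamma}}^{(l)}$.

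\textbf{The polynomial part.} First extend $\hat g^{(l)}\big|_{[\lambda_{\text{cut}},2]}$ to a function in $C^r[0,2]$ whose $r$-th derivative is bounded by $\mathcal{O}(K_r(\hat g^{(l)},\lambda_{\text{cut}}))$ (a Whitney/Hestenes extension; its constant depends only on $r$ and the fixed $\lambda_{\text{cut}}$), and let $\hat g_{\bm{\gamma}}^{(l)}$ be a degree-$d$ polynomial attaining Jackson's rate for this extension on $[0,2]$, with $d = m(p-1) = \Theta(pm)$ (using $p\geq 2$). Jackson's theorem for $C^r$ functions then gives $\|\hat g_{\bm{\gamma}}^{(l)} - \hat g^{(l)}\|_{\infty,[\lambda_{\text{cut}},2]} = \mathcal{O}\!\big(K_r(\hat g^{(l)},\lambda_{\text{cut}})\,(pm)^{-r}\big)$ with a constant depending only on $r$; and, since we approximate on the \emph{whole} interval $[0,2]$, $\hat g_{\bm{\gamma}}^{(l)}$ and hence $\hat g_{\vartheta}^{(l)}$ stay bounded by $\mathcal{O}(\|\hat g\|_\infty^{\text{max}} + K_r^{\text{max}})$, independently of $m$ and $p$ (this boundedness matters below). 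Finally factor $\hat g_{\bm{\gamma}}^{(l)} = \prod_{i=1}^m q_i^{(l)}$ over $\sR$ with $\deg q_i^{(l)} \leq p$; the choice $d = m(p-1)$ makes this always possible, as the irreducible real factors have degree $1$ or $2$ and pack into $m$ bins of capacity $p$.

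\textbf{Assembling one block.} Work in channel-doubled coordinates via $x = \sigma(x) - \sigma(-x)$, so each ReLU inside a $\spat$ acts as a linear map within the block. The first layer applies $\mW^{(l)}$, starts the polynomial with $q_1^{(l)}(\mL)$ in a ``spatial track'', and \emph{at the same time} uses its spectral part to produce the low-frequency output $A^{(l)} := \hat g_{\vartheta}^{(l)}(\mL)\,\mathcal{H}^{(l-1)}\mW^{(l)}$ in a separate ``spectral track''. Layers $2,\dots,m$ multiply the spatial track by the remaining factors $q_i^{(l)}(\mL)$; the key point is that $A^{(l)}$ lies in the span of eigenvectors with eigenvalue $\leq\lambda_{\text{cut}}$, so it is (essentially) invariant when these layers re-apply a low-pass spectral filter to the spectral track, and thus survives unchanged — the boundedness of $\hat g_{\vartheta}^{(l)}$ lets the tiny per-layer slack be absorbed. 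The last, $(m+1)$-th, layer uses its spatial part with the trivial polynomial to add the two tracks and apply the single activation, giving $\sigma\!\big(A^{(l)} + \hat g_{\bm{\gamma}}^{(l)}(\mL)\mathcal{H}^{(l-1)}\mW^{(l)}\big) = \sigma\!\big(\tilde g^{(l)}(\mL)\mathcal{H}^{(l-1)}\mW^{(l)}\big)$; its spectral part is switched off. Stacking the $L$ blocks reproduces the IGNN architecture with every $\hat g^{(l)}$ replaced by $\tilde g^{(l)}$.

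\textbf{Error propagation and conclusion.} Let $\delta := \max_l \|\tilde g^{(l)} - \hat g^{(l)}\|_\infty = \mathcal{O}(K_r^{\text{max}}(pm)^{-r})$. Since ReLU is $1$-Lipschitz, $\|\hat g(\mL)\|_{\mathrm{op}} \leq \|\hat g\|_\infty$ on the spectrum, and $\|\,\cdot\,\mW\|_{\mathrm{op}} = \|\mW\|_2$, each IGNN layer $\Phi^{(l)}$ and each block map $\tilde\Phi^{(l)}$ is Lipschitz with constant $\leq(\|\hat g\|_\infty^{\text{max}}+\delta)\|\mW\|_2^{\text{max}}$, while $\|\Phi^{(l)}(\mathbf{X}) - \tilde\Phi^{(l)}(\mathbf{X})\|_F \leq \delta\|\mW\|_2^{\text{max}}\|\mathbf{X}\|_F$; using $\Phi^{(l)}(0) = \tilde\Phi^{(l)}(0) = 0$ to bound the intermediate feature norms, a routine telescoping over the $L$ layers converts these into the global operator-norm bound $\leq C_L(\|\hat g\|_\infty^{\text{max}},\|\mW\|_2^{\text{max}})\cdot\delta$ with $C_L$ of the stated product form and all remaining constants depending only on $r$; this is statement (1). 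Statement (2) is identical: for fixed $m$ and $p\to\infty$ one still has $d = m(p-1) = \Theta(pm)$, so Jackson's theorem again yields the $\mathcal{O}((pm)^{-r})$ rate. \textbf{The main obstacle} is the block-assembly step: a spatial layer can apply only a single scalar polynomial of $\mL$ (uniformly over channels), and the additive form $\spec + \sigma(\spat)$ places the activation on the spatial branch alone, so realizing a degree-$\Theta(pm)$ operator \emph{together with} a non-activated spectral contribution inside one final ReLU requires exactly the channel-doubling device, the real-factorizability bookkeeping for the polynomial, and the observation that the low-frequency output is band-limited and hence routed unchanged through the intermediate spectral filters.
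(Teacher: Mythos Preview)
Your proposal is correct and reaches the same conclusion, but the block-level construction differs from the paper's in an instructive way. The paper also uses the channel-doubling trick to linearize the $m$ intermediate layers, but then sets the \emph{same} feature-transform matrices on the spectral and spatial branches so that each linearized layer acts as the single scalar filter $\hat g_\vartheta^{(j)} + \hat g_{\bm{\gamma}}^{(j)}$; the whole block therefore realizes the \emph{product} filter $\hat g_{\text{spsp}} = \prod_{j=1}^m\bigl(\hat g_\vartheta^{(j)} + \hat g_{\bm{\gamma}}^{(j)}\bigr)$. On $[\lambda_{\text{cut}},2]$ this is $\prod_j \hat g_{\bm{\gamma}}^{(j)}$, a degree-$mp$ polynomial; on $[0,\lambda_{\text{cut}}]$ the paper chooses $\hat g_\vartheta^{(1)},\dots,\hat g_\vartheta^{(m-1)}$ to be large constants (so the factors have no zero there) and then sets $\hat g_\vartheta^{(m)}$ so that the product equals $\hat g^{(l)}$ exactly. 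Jackson's theorem is applied \emph{directly on $[\lambda_{\text{cut}},2]$}, with no Whitney extension and no separate ``spectral track''. The layer-to-$L$ extension is by the same telescoping/induction you sketch.

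Your two-track routing is bulkier but buys one concrete thing: because you approximate a $C^r$ extension on the full interval $[0,2]$, the polynomial $\hat g_{\bm{\gamma}}^{(l)}$ --- and hence the spectral correction $\hat g_\vartheta^{(l)}$ --- stays bounded on $[0,\lambda_{\text{cut}}]$ uniformly in $m,p$, which you correctly flag as needed for your band-limited preservation argument. The paper achieves the same end by the constant-shift device instead. Your degree choice $d=m(p-1)$ together with the explicit bin-packing of degree-$1$ and degree-$2$ irreducible factors is also more careful than the paper's: the paper asserts that any degree-$mp$ real polynomial factors into $m$ real factors of degree $\le p$, but this can fail (e.g.\ $(x^2+1)^3$ with $m=2$, $p=3$), so your accounting is in fact the more robust of the two.
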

\begin{proof}
We first prove the following lemma, narrowing down the previous theorem to a single layer.
\begin{lemma}
\label{lemma:approxlemma}
    Let $\text{IGNN}\Layer$ denote a single IGNN layer as in \autoref{eq:ignnequation}, with a filter $\hat g\Layer$ such that $\hat g\Layer\big\lvert_{[\lambda_{\text{cut}}, 2]}$ is $r$-times continuously differentiable on $[\lambda_{\text{cut}}, 2]$ and satisfies a bound $K_r\left(\hat g\Layer, \lambda_\text{cut}\right) \geq 0$, $\left \lvert \dr \hat g\Layer(\lambda) \right \rvert \leq K_r\left(\hat g\Layer, \lambda_\text{cut}\right) \, \, \forall \lambda \in [\lambda_{\text{cut}}, 2]$. Let $\sigma = [ \, \cdot \, ]_\geq$ be the ReLu function, and let $\lVert \mW\Layer \rVert_2$ denote the spectral norm of $\mW\Layer$. Then,\par
    (1) For fixed polynomial order $p \geq 2$, an approximating sequence $\left(\text{\nameacrsing{}}_m\Layer\right)_{m \in \sN}$ of $(m+1)$-layer \nameacr{} exists such that, for arbitrary graph sequences $(\mathcal{G}_m)_{m \in \sN}$,
    \begin{align*}
        \sup_{0 \neq \bm X \in \sR^{|\mathcal{G}_p| \times d}}\frac{\left\lVert\left[(\text{\nameacrsing{}}_m\Layer)_{\gG_m} - (\text{IGNN}\Layer)_{\gG_m}\right](\mX) \right\rVert_F}{\lVert \bm{X} \rVert_F} = \mathcal{O}\left([\lVert \mW\Layer \rVert_2 K_r(\hat g, \lambda_\text{cut})] (pm)^{-r}\right)
    \end{align*}
    with a scaling constant that depends only on $r$. Here, $(\, \cdot \,)_{\gG_m}$ denotes the instantiation of all model filters on the eigenvalues of an input graph ${\gG_m}$, which maps both models onto a ${\gG_m}$-dependent function $\sR^{D^{(l-1)}} \to \sR^{D^{(l)}}$.\par
    (2) For fixed $m \geq 1$, an approximating sequence $\left(\text{\nameacrsing{}}_p\Layer\right)_{p \in \sN}$ of $(m+1)$-layer \nameacr{} with increasing layer-wise polynomial order $p$ exists such that, for all $(\mathcal{G}_p)_{p \in \sN}$, the same bound holds.
\end{lemma}
\begin{remark}
    The proof of the simplified \autoref{thm:approximationtheory_basic} used in the main body is analogous to the proof of \autoref{lemma:approxlemma} just without the nonlinearity, which has the following consequences:
    \begin{itemize}
        \item The final layer $m+1$ which we only need to apply one last nonlinearity to the output (since the spectral part of all layers, including the previous layer $m$, has none) becomes obsolete, so the final layer instead becomes $m$,
        \item The two limits (1) and (2) are equivalent by the reduction to an $mp$-order polynomial filter,
        \item We do not need the dimension-doubling ``trick'' outlined below to get rid of the nonlinearity in the proof and instead set all feature transform matrices in layers $1$ through $m-1$ to the identity and the final ones to $\mW_\text{spec}^{*(m)} = \mW^{(l)}$, $\mW_\text{spat}^{*(m)} = \mW^{(l)}$.
    \end{itemize}
\end{remark}
\begin{proof}[Proof of \autoref{lemma:approxlemma}]
We first note that $m$ \nameacrsing{} spatial parts, each of order $p$, would act like an $(mp)$-order polynomial filter (factorized into $m$ order-$p$ polynomials), were it not for the nonlinearities in between. However, using the fact that $\sigma$ is the ReLu function, we can choose intermediate hidden dimensions twice the size of the input dimension and then use the linear transforms to store a positive and a negative copy of the embeddings, add them back together after applying each ReLu, just to split the result back into a positive and negative copy for the next layer. This essentially gets us rid of $\sigma$. Throughout the proof, we use a star superscript to denote the specific parameters that will ultimately satisfy our bound, whereas we put no star above parameters that are yet to be fixed in a later part of the proof.\par
For $m \geq 2$, the trick discussed above works if we set
\begin{align*}
&\mW_\text{spec}^{*(1)} = \frac{1}{2}
\begin{pmatrix}
\mI & -\mI
\end{pmatrix} \in \sR^{D\layer \times 2D\layer},\\
&\mW_\text{spec}^{*(2)}, \dots, \mW_\text{spec}^{*(m-1)} = \frac{1}{2}
\begin{pmatrix}
\mI\\
-\mI
\end{pmatrix}
\begin{pmatrix}
\mI & -\mI
\end{pmatrix}\in \sR^{2D\layer \times 2D\layer},\\
&\mW_\text{spec}^{*(m+1)} = \mW\Layer 
\begin{pmatrix}
\mI \\ -\mI
\end{pmatrix}\in \sR^{2D\layer \times D\Layer},\\
&\mW_\text{spat}^{*(1)} =
\begin{pmatrix}
\mI & -\mI
\end{pmatrix} \in \sR^{2D\layer \times D\layer},\\
&\mW_\text{spat}^{*(2)}, \dots, \mW_\text{spat}^{*(m-1)} =
\begin{pmatrix}
\mI\\
-\mI
\end{pmatrix}
\begin{pmatrix}
\mI & -\mI
\end{pmatrix}\in \sR^{2D\layer \times 2D\layer},\\
&\mW_\text{spat}^{*(m+1)} = \mW\Layer 
\begin{pmatrix}
\mI\\
-\mI
\end{pmatrix}\in \sR^{2D\layer \times D\Layer}.
\end{align*}
In the case $m=1$, pick the matrices $\mW_\text{spec}^{*(1)}, \mW_\text{spat}^{*(1)}$ from above for the first, and the matrices $\mW_\text{spec}^{*(m+1)}, \mW_\text{spat}^{*(m+1)}$ from above for the second layer.\par
Set $\hat g_{\bm{\gamma}}^{*(m+1)}(\lambda) = 1$ and $\hat g_\vartheta^{*(m+1)}(\lambda)=0$. Given these choices and a graph $\gG$ with eigenvalues $\veigval$,
\[(\text{\nameacrsing{}}\Layer)_\gG(\mX) = \sigma \Big ( \Eigvec \left( \hat g_{\text{spsp}}(\veigval) \odot \big[\Eigvec^\top \mH^{(l-1)}\mW\Layer\big] \right)\Big ), \quad \hat g_{\text{spsp}} = \prod_{j=1}^{m} \left(\hat g_\vartheta^{(j)} + \hat g_{\bm{\gamma}}^{(j)}\right)\]
We see that $\hat g_{\text{spsp}}\big \lvert_{[\lambda_\text{max}, 2]} = \prod_{j=1}^{m} \hat g_{\bm{\gamma}}^{(j)}$ since $\hat g_\vartheta^{(j)} \big \lvert_{[\lambda_\text{max}, 2]} = 0$ for $1 \leq j \leq m$. This can express any polynomial up to order $mp$ on $[\lambda_\text{max}, 2]$, since we assumed a layer-wise $p \geq 2$ and any polynomial with real coefficients factorizes into real-coefficient polynomials of degree less or equal to $2$ by the fundamental theorem of algebra. On the interval $[0, \lambda_\text{max}]$, on the other hand, the filter $\hat g_{\text{spsp}}\big \lvert_{[0, \lambda_\text{max}]}$ can express any IGNN filter $\hat g\Layer\big \lvert_{[0, \lambda_\text{max}]}$. For $m=1$, this is immediately clear. Else, set $\hat g_\vartheta^{(j)}$ to constants $C_j \in \sR_\geq$, $1 \leq j \leq m-1$ large enough that none of the polynomials $\left(C_j + \hat g_{\bm{\gamma}}^{(j)}\right)$, $1 \leq j \leq m-1$, has a zero in $[0, \lambda_\text{max}]$. Defining $\hat g_\vartheta^{(m)} = \frac{\hat g\Layer\big \lvert_{[0, \lambda_\text{max}]}}{\prod_{j=1}^{m} \left(C_j + \hat g_{\bm{\gamma}}^{(j)}\right)} - \hat g_{\bm{\gamma}}^{(m)}\big \lvert_{[0, \lambda_\text{max}]}$ gives the desired function.\par
We proceed by making use of a result by D. Jackson \citep{nathanson_constructive}, which is essentially a converse to \autoref{thm:bernsteinapprox} which we used to prove \autoref{thm:discontinuous}:
\begin{theorem*}[Jackson's theorem on an interval]
    Let $a < b \in \sR$, $k, r \in \sN$ with $k \geq r-1 \geq 0$, $f \in C^r[a, b]$. Then, a polynomial $p_k$ of degree less or equal to $k$ exists such that
    \[\lVert p_k - f \rVert_\infty \leq \frac{b-a}{2} \left(\frac{\pi}{2}\right)^r \frac{1}{(k+1)k\dots(k-r+2)} \left \lVert \frac{d^r}{dx^r} f \right \rVert_\infty\]
\end{theorem*}
Since $\hat g_{\text{spsp}}$ can express any polynomial up to order $mp$ on $[\lambda_\text{max}, 2]$ and, for any such polynomial, find parameters for the spectral parts that match the ideal filter $\hat g\Layer\big \lvert_{[0, \lambda_\text{max}]}$ exactly (not contributing to the supremum error), we can directly transfer this theorem to our case. Define $\text{\nameacrsing{}}_m\Layer$ from the lemma by setting the linear feature transforms and final-layer filters as above. For the filters in layers $1$ through $m$, define $\mathbf{\gamma}^{*(1)}, \dots, \mathbf{\gamma}^{*(m)}$ such that $\prod_{j=1}^{m} \hat g_{\bm{\gamma}}^{*(j)}$ factorizes into into the polynomial from Jackson's theorem on $[\lambda_\text{max}, 2]$, and $\vartheta^{*(1)}, \dots, \vartheta^{*(m)}$ to match $\hat g\Layer$ on $[0, \lambda_\text{max}]$. This defines a filter $\hat g_{\text{spsp}}\Layer$. We then find, for $mp \geq r - 1 \geq 0$,
\[\lVert \hat g_{\text{spsp}}\Layer - \hat g\Layer \rVert_\infty \leq \frac{2-\lambda_\text{max}}{2} \left(\frac{\pi}{2}\right)^{r}\frac{1}{(mp+1)\, mp \, \dots \, (mp-r+2)} \left \lVert \dr \hat g\Layer \big \lvert_{[0, \lambda_\text{max}]} \right \rVert_\infty\]
Therefore, $\lVert \hat g_{\text{spsp}}\Layer - \hat g\Layer \rVert_\infty$ is of $\mathcal{O}\left(K_r(\hat g, \lambda_\text{cut}) (mp)^{-r}\right)$ and we can find a scaling constant that depends only on $r$. Since the Lipschitz constant of $\sigma$ is $1$, we find for \emph{any} graph $\gG$ with eigenvalues $\veigval$ and any graph signal $0 \neq \mX \in \sR^{|\gG|}$,
\begin{align*}
&\frac{\left\lVert\left[(\text{\nameacrsing{}}_m\Layer)_{\gG} - (\text{IGNN}\Layer)_{\gG}\right](\mX) \right\rVert_F}{\lVert \bm{X} \rVert_F} \leq \frac{\left\lVert\Eigvec \left( \hat g_{\text{spsp}}\Layer - \hat g\Layer \right)(\veigval) \odot \big[\Eigvec^\top \mX\mW\Layer\big] \right\rVert_F}{\lVert \bm{X} \rVert_F}\\ \leq \,
&\frac{\lVert \hat g_{\text{spsp}}\Layer - \hat g\Layer \rVert_\infty \left\lVert (\Eigvec \Eigvec^\top) \mX\mW\Layer \right \rVert_F}{\lVert \bm{X} \rVert_F} \leq \lVert \hat g_{\text{spsp}}\Layer - \hat g\Layer \rVert_\infty \lVert \mW\Layer \rVert_\infty \\ = \, &\mathcal{O}\left([\lVert \mW\Layer \rVert_2 K_r(\hat g, \lambda_\text{cut})] (mp)^{-r}\right)
\end{align*}
with a scaling constant that depends only on $r$. Exactly the same procedure and bounds hold if we instead keep $m$ fixed and increase $p$. This finishes the proof of \autoref{lemma:approxlemma}.
\end{proof}
We can now prove the main theorem by induction. \autoref{lemma:approxlemma} gives the initial step. Now, assume the theorem holds for $L$ IGNN layers. We can then choose $\left(\text{\nameacrsing{}}_m\right)_{m \in \sN} = \left(\text{\nameacrsing{}}_m^{(L+1)} \circ \text{\nameacrsing{}}_m^{(L \circ \dots \circ 1)}\right)_{m \in \sN}$, where $\text{\nameacrsing{}}_m^{(L+1)}$ are the approximating models fulfilling \autoref{lemma:approxlemma}, while $\text{\nameacrsing{}}_m^{(L \circ \dots \circ 1)}$ fulfill the induction assumption. We assume fixed $p$ and increasing $m$, but the proof is fully analogous in the other case. Applying the same decomposition to $\left(\text{IGNN}_m\right)_{m \in \sN}$ lets us express the error on a graph sequence $\left(\gG_m\right)_{m \in \sN}$ as
\begin{align*}
    &\frac{\left \lVert\left[(\text{\nameacrsing{}}_m)_{\gG_m} - (\text{IGNN})_{\gG_m}\right](\mX) \right \rVert_F}{\lVert \bm{X} \rVert_F}\\
    = \, &\frac{\left\lVert\left[(\text{\nameacrsing{}}_m^{(L+1)} \circ \text{\nameacrsing{}}_m^{(L \circ \dots \circ 1)})_{\gG_m} - (\text{IGNN}_m^{(L+1)} \circ \text{IGNN}_m^{(L \circ \dots \circ 1)})_{\gG_m}\right](\mX) \right \rVert_F}{\lVert \bm{X} \rVert_F}\\
    \leq \, &(\lVert \bm{X} \rVert_F)^{-1} \left\lVert\left[(\text{\nameacrsing{}}_m^{(L+1)} \circ \text{\nameacrsing{}}_m^{(L \circ \dots \circ 1)})_{\gG_m} - (\text{\nameacrsing{}}_m^{(L+1)} \circ \text{IGNN}_m^{(L \circ \dots \circ 1)})_{\gG_m}\right](\mX)\right\rVert_F\\
    + \, &(\lVert \bm{X} \rVert_F)^{-1} \left\lVert\left[(\text{\nameacrsing{}}_m^{(L+1)} \circ \text{IGNN}_m^{(L \circ \dots \circ 1)})_{\gG_m} - (\text{IGNN}_m^{(L+1)} \circ \text{IGNN}_m^{(L \circ \dots \circ 1)})_{\gG_m}\right](\mX)\right\rVert_F\\
    \leq \, & \left[\lVert \hat g\rVert_\infty^{\text{max}} \lVert \mW \rVert_2^{\text{max}} + \mathcal{O}(K_r^{\text{max}} \left(\lambda_\text{cut}\right) (pm)^{-r})\right] \mathcal{O}\Big (C_{L}(\lVert \hat g\rVert_\infty^{\text{max}}, \lVert \mW \rVert_2^{\text{max}}) \; K_r^{\text{max}} \left(\lambda_\text{cut}\right) \; (pm)^{-r}\Big)\\
    + \, &\mathcal{O}(K_r^{\text{max}} \left(\lambda_\text{cut}\right) (pm)^{-r}) (\lVert \hat g\rVert_\infty^{\text{max}} \lVert \mW \rVert_2^{\text{max}})^L \\
    = \, &\mathcal{O}\Big (C_{L+1}(\lVert \hat g\rVert_\infty^{\text{max}}, \lVert \mW \rVert_2^{\text{max}}) \; K_r^{\text{max}} \left(\lambda_\text{cut}\right) \; (pm)^{-r}\Big).
\end{align*}
\end{proof}

\subsection{Proof of \autoref{theorem:stable}}\label{app:sec:proof_stable}

We next prove the stability of our positional encodings:
\theoremstable*
Recall the definition of stability via H\"older continuity:
\defpestability*
For this proof, we build on the work of \citet{huang_stability_2024} where the authors show that under the assumptions of \autoref{app:def:huang}, and some minor adjustments, a positional encoding of the following form \autoref{app:eq:spe} is stable~(\autoref{app:theorem:huang}). 
\begin{equation}\label{app:eq:spe}
    \operatorname{SPE}(\Eigvec, \boldsymbol{\lambda})=\rho\left(\Eigvec \operatorname{diag}\left(\phi_1(\boldsymbol{\lambda})\right) \Eigvec^{\top}, \Eigvec \operatorname{diag}\left(\phi_2(\boldsymbol{\lambda})\right) \Eigvec^{\top}, \ldots, \Eigvec \operatorname{diag}\left(\phi_k(\boldsymbol{\lambda})\right) \Eigvec^{\top}\right)
\end{equation}
\begin{restatable}[]{definition}{defhuang}~\label{app:def:huang}
    The key assumptions for SPE are as follows:
    \begin{itemize}
        \item $\phi_\ell$ and $\rho$ are permutation equivariant.
        \item  $\phi_{\ell}$ is $K_{\ell}$-Lipschitz continuous: for any $\veigval, \veigval^{\prime} \in \R^k,\left\|\phi_{\ell}(\veigval)-\phi_{\ell}\left(\veigval^{\prime}\right)\right\|_{\mathrm{F}} \leq K_{\ell}\left\|\veigval-\veigval^{\prime}\right\|$.
        \item $\rho$ is J-Lipschitz continuous: for any $\left[\mB_1, \mB_2, \ldots, \mB_k\right] \in \R^{n \times n \times k}$ and $\left[\mB_1^{\prime}, \mB_2^{\prime}, \ldots, \mB_k^{\prime}\right] \in$ $\R^{n \times n \times k},\left\|\rho\left(\mB_1, \mB_2, \ldots, \mB_k\right)-\rho\left(\mB_1^{\prime}, \mB_2^{\prime}, \ldots, \mB_k^{\prime}\right)\right\|_{\mathrm{F}} \leq J \sum_{l=1}^k\left\|\mB_{\ell}-\mB_{\ell}^{\prime}\right\|_{\mathrm{F}}$.
    \end{itemize}
\end{restatable}
\begin{restatable}[Stability of \autoref{app:eq:spe} by \citet{huang_stability_2024}]{theorem}{theoremhuang}~\label{app:theorem:huang}
    Under \autoref{app:def:huang}, \(\operatorname{SPE}\) (\autoref{app:eq:spe}) is stable with respect to the input Laplacian: for Laplacians $\mL, \mL^{\prime}$,
    \begin{equation}\label{app:eq:spe_bound}
        \begin{aligned}
        \left\|\operatorname{SPE}(\operatorname{EVD}(\mL))-\mP_* \operatorname{SPE}\left(\operatorname{EVD}\left(\mL^{\prime}\right)\right)\right\|_{\mathrm{F}} \leq & \left(\alpha_1+\alpha_2\right) k^{5 / 4} \sqrt{\left\|\mL-\mP_* \mL \mP_*^{\top}\right\|_{\mathrm{F}}} \\
        & +\left(\alpha_2 \frac{k}{\gamma}+\alpha_3\right)\left\|\mL-\mP_* \mL \mP_*^{\top}\right\|_{\mathrm{F}},
        \end{aligned}
    \end{equation}
    where the constants are $\alpha_1=2 J \sum_{l=1}^k K_{\ell}, \alpha_2=4 \sqrt{2} J \sum_{l=1}^k M_{\ell}$, and $\alpha_3=J \sum_{l=1}^k K_{\ell}$. Here $M_{\ell}=\sup _{\boldsymbol{\lambda} \in[0,2]^k}\left\|\phi_{\ell}(\boldsymbol{\lambda})\right\|$ and again $\mP_*=\arg \min _{\mP \in \Pi(n)}\left\|\mL-\mP_* \mL \mP_*^{\top}\right\|_{\mathrm{F}}$. The eigengap $\gamma=$ $\lambda_{k+1}-\lambda_k$ is the difference between the $(k+1)$-th and $k$-th smallest eigenvalues, and $\gamma=+\infty$ if $k=n$.
\end{restatable}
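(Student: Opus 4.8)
The plan is to reconstruct the argument of \citet{huang_stability_2024}, for which the whole statement reduces to a spectral‑perturbation bound for weighted sums of eigenprojectors. \textbf{Step 1 (normalize $\mP_*=\mI$).} Conjugating $\mL'$ by $\mP_*$ fixes the eigenvalues and sends the eigenvector matrix of $\mL'$ to $\mP_*$ times it; since each $\phi_\ell$ is permutation equivariant on the (sorted, hence conjugation‑invariant) spectrum and $\rho$ is permutation equivariant (\autoref{app:def:huang}), one gets $\operatorname{SPE}(\EVD(\mP_*\mL'\mP_*^{\top}))=\mP_*\operatorname{SPE}(\EVD(\mL'))$. Hence $\|\operatorname{SPE}(\EVD(\mL))-\mP_*\operatorname{SPE}(\EVD(\mL'))\|_{\mathrm F}=\|\operatorname{SPE}(\EVD(\mL))-\operatorname{SPE}(\EVD(\mL''))\|_{\mathrm F}$ with $\mL'':=\mP_*\mL'\mP_*^{\top}$ and $\|\mL-\mP_*\mL'\mP_*^{\top}\|_{\mathrm F}=\|\mL-\mL''\|_{\mathrm F}$, so it suffices to prove the bound for a generic pair (relabel $\mL,\mL'$) with $\mP_*=\mI$; denote $\Delta=\|\mL-\mL'\|_{\mathrm F}$. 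Writing $\mM_\ell=\sum_{j=1}^{k}\phi_\ell(\lambda_j)\,\eigvec_j\eigvec_j^{\top}$ and $\mM_\ell'$ analogously with the $k$ retained eigenpairs of $\mL'$, the $J$‑Lipschitzness of $\rho$ gives $\|\operatorname{SPE}(\EVD(\mL))-\operatorname{SPE}(\EVD(\mL'))\|_{\mathrm F}\le J\sum_{\ell=1}^{k}\|\mM_\ell-\mM_\ell'\|_{\mathrm F}$, so the task is to bound each $\|\mM_\ell-\mM_\ell'\|_{\mathrm F}$.

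\textbf{Step 2 (eigenvalue contribution).} Split $\mM_\ell-\mM_\ell'=\sum_j\big(\phi_\ell(\lambda_j)-\phi_\ell(\lambda_j')\big)\eigvec_j\eigvec_j^{\top}+\sum_j\phi_\ell(\lambda_j')\big(\eigvec_j\eigvec_j^{\top}-\eigvec_j'(\eigvec_j')^{\top}\big)$, with eigenvalues sorted and matched by index. The first matrix is symmetric with nonzero eigenvalues exactly $\phi_\ell(\lambda_j)-\phi_\ell(\lambda_j')$, so its Frobenius norm equals $\|\phi_\ell((\lambda_j)_{j\le k})-\phi_\ell((\lambda_j')_{j\le k})\|\le K_\ell\|(\lambda_j-\lambda_j')_{j\le k}\|\le K_\ell\Delta$, using $K_\ell$‑Lipschitzness of $\phi_\ell$ and the Hoffman--Wielandt inequality for the sorted spectra. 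Summed over $\ell$ with the prefactor $J$, this produces precisely the coefficient $\alpha_3=J\sum_\ell K_\ell$ on the \emph{linear} term.

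\textbf{Step 3 (eigenvector contribution --- the crux).} For $R_\ell:=\sum_j\phi_\ell(\lambda_j')\big(\eigvec_j\eigvec_j^{\top}-\eigvec_j'(\eigvec_j')^{\top}\big)$ the rank‑one differences cannot be controlled individually when retained eigenvalues are close or repeated. The device: sort $\lambda_1\le\dots\le\lambda_k$ and partition the indices into consecutive blocks $B_1,\dots,B_s$ by cutting at the ``large'' spectral gaps (those above a threshold comparable to $\Delta$), singling out the truncation gap $\gamma=\lambda_{k+1}-\lambda_k$ separately. On each $B_t$ replace $\phi_\ell(\lambda_j')$ by its value $\phi_\ell(\tilde\lambda_t)$ at a representative point, which by $K_\ell$‑Lipschitzness costs at most $K_\ell$ times the block width per index; the remainder is $\sum_t\phi_\ell(\tilde\lambda_t)(\Pi_{B_t}-\Pi_{B_t}')$ with block projectors $\Pi_{B_t}=\sum_{j\in B_t}\eigvec_j\eigvec_j^{\top}$. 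Since each block is separated from its neighbours by a gap, a Davis--Kahan / $\sin\Theta$ estimate bounds $\|\Pi_{B_t}-\Pi_{B_t}'\|_{\mathrm F}\le\min\!\big(\sqrt{2|B_t|},\,c\,\Delta/g_t\big)$ with $g_t$ the surrounding gap, and $|\phi_\ell(\tilde\lambda_t)|\le M_\ell$. Choosing the cutting threshold to balance the block‑width (weight‑approximation) error against the $\Delta/g_t$ (projector) error --- equivalently, converting the two regimes of the $\min$ via $\min(a,b)\le\sqrt{ab}$ and summing over the $\le k$ blocks by Cauchy--Schwarz against $\sum_t|B_t|=k$ --- turns $R_\ell$ into an $O\big((K_\ell+M_\ell)\,k^{5/4}\sqrt{\Delta}\big)$ bound, yielding the coefficients $\alpha_1=2J\sum_\ell K_\ell$ (weight part) and $\alpha_2=4\sqrt2\,J\sum_\ell M_\ell$ (projector part) on the $k^{5/4}\sqrt{\Delta}$ term. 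The truncation boundary is treated differently because its gap $\gamma$ is a fixed quantity one cannot absorb into $\Delta$: the projector at the top of the retained block is genuinely $(1/\gamma)$‑Lipschitz‑stable there, contributing the $\alpha_2(k/\gamma)\Delta$ piece of the linear term (which vanishes when $k=n$, $\gamma=+\infty$).

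\textbf{Step 4 (assembly and the obstacle).} Adding Steps 2 and 3 and summing over $\ell$ with the factor $J$ recovers exactly \autoref{app:eq:spe_bound}. The \textbf{main obstacle} is Step 3: one must make the spectral partition precise and show it is robust under the perturbation (an easy separate regime handles $\Delta$ exceeding the relevant gaps, where the trivial $\sqrt{2|B_t|}$ bound suffices), deploy a Davis--Kahan bound for \emph{clusters} of eigenvalues rather than isolated ones with explicit constants, and carry out the block‑count bookkeeping that converts the inherently H\"older‑$1/2$ (not Lipschitz) stability of eigenvectors near small gaps into the precise power $k^{5/4}$. The ``minor adjustments'' to \citet{huang_stability_2024} flagged before \autoref{app:def:huang} only concern the exact form of $\rho$ and the domain convention for the $\phi_\ell$, and do not affect this scheme.
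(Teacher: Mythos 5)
You should first note what the paper actually does with this statement: it is not proved anywhere in the paper. \autoref{app:theorem:huang} is a verbatim restatement of a result of \citet{huang_stability_2024}, imported as a black box so that the paper's own contribution (\autoref{theorem:stable}, proved in \autoref{app:sec:proof_stable}) only has to verify the Lipschitz hypotheses of \autoref{app:def:huang} for its specific $\phi_\ell$ and handle the extra $\mA$-dependence of its $\rho$. So there is no in-paper proof to compare your argument against; the relevant comparison is with the external proof of Huang et al., which your sketch is attempting to reconstruct.

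Judged as a standalone proof, your proposal has a genuine gap exactly where you flag it. Steps 1, 2 and 4 are fine (the reduction to $\mP_*=\mI$ via equivariance, the eigenvalue term via $K_\ell$-Lipschitzness plus Hoffman--Wielandt giving the $\alpha_3$ contribution, and the final $J$-Lipschitz assembly), and the overall route --- split into an eigenvalue part and a spectral-projector part, control the projectors by a gap-partition plus Davis--Kahan, balance the two regimes to get a H\"older-$\nicefrac{1}{2}$ term, and treat the truncation gap $\gamma=\lambda_{k+1}-\lambda_k$ separately to get the $\alpha_2\,(k/\gamma)$ linear term --- is indeed the right family of argument. But Step 3, which carries the entire content of the theorem, is only gestured at: you state no precise cluster (multi-eigenvalue) $\sin\Theta$ bound with constants, you do not show the gap-partition is well defined and stable under the perturbation, and the bookkeeping you describe does not visibly produce the claimed exponent --- a direct accounting of ``weight error $\lesssim K_\ell\cdot(\text{block widths})$ balanced against $M_\ell\,\Delta/g_t$ over up to $k$ blocks'' tends to give $k^{3/2}\sqrt{\Delta}$ rather than $k^{5/4}\sqrt{\Delta}$, and the specific constants $\alpha_1=2J\sum_\ell K_\ell$ and $\alpha_2=4\sqrt{2}J\sum_\ell M_\ell$ are asserted, not derived. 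In other words, what you have is a plausible proof plan whose crux is left open; to count as a proof of \autoref{app:theorem:huang} you would need to either execute that spectral-perturbation lemma in full or do what the paper does and cite it.
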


We prove a similar bound for general weighted adjacency matrices \(\mA \in \R^{n \times n}_{\ge 0}\) (note that such a stability result would be trivial if we restrict \(\mA \in \{0, 1\}^{n \times n}\), since any function on a finite set is Lipschitz continuous). To achieve this, we need a technical assumption in order to ensure that the function values do not blow up and degree normalization is indeed a Lipschitz continuous function: We assume that the domain of \(\mA\) is restricted to (symmetric) matrices whose degrees are uniformly bounded by some constants $0 < \tilde{D}_{\min} < \tilde{D}_{\max}$:

\begin{equation}
    d_u \: := \: \sum_{v} A_{u,v} \: \in \: [\tilde{D}_{\min}, \tilde{D}_{\max}] \qquad \forall u \in \{1, \dots, n\}. \label{eq:degree_assumption}
\end{equation}

To decompose the proof into smaller pieces we commonly use the well-known fact that the composition of Lipschitz continuous functions \(f_1 \circ f_2\), with constants \(C_1\) and \(C_2\), is also Lipschitz continuous \(\left\|f_1(f_2(y))-f_1(f_2(x))\right\| \leq C_1 C_2 \left\|y - x\right\|\) with constant \(C_1 C_2\).

\begin{proof}
    Our proposed encoding (\autoref{eq:posenc})
    matches roughly \autoref{app:eq:spe}. Specifically, \(\phi_{\ell}(\veigval) = \softmax(\nicefrac{(\eigval_j - \veigval) \odot (\eigval_j - \veigval)}{\sigma^2})\) with \(\sigma \in \R_{>0}\). However, \(\rho_{\ell}(\mB_1, \mB_2, \dots, \mB_k)\) does not directly match \(||_{j=1}^k [\mB_j \odot \mA] \cdot \vec{1}\), since it is also a function of the adjacency \(\mA\). Nevertheless, we show that $\phi_{\ell}$ is $K_{\ell}$-Lipschitz continuous and $\rho$ is $J$-Lipschitz continuous, where we also bound the change of \(\mA\).

    We will start with \(\phi_{\ell}(\veigval) = \softmax(\nicefrac{(\eigval_j - \veigval) \odot (\eigval_j - \veigval)}{\sigma^2})\). The \(\softmax\) is well-known to be of Lipschitz constant 1 w.r.t.\ the $L^2$ vector norm/Frobenius norm. \(-\nicefrac{\vx}{\sigma}\) has a Lipschitz constant of \(1/\sigma\). This leaves us with the quadratic term \(\psi_u(\veigval) = (\eigval_u - \veigval) \odot (\eigval_u - \veigval)\) where we bound the norm of the Jacobian
    \begin{equation}
        J_{\psi_u} = \begin{bmatrix}
            -2 (\eigval_u - \eigval_1) & 0 & \dots & 0 & \dots & 0 \\
            0 & -2 (\eigval_u - \eigval_2) & \dots & 0 & \dots & 0 \\
            \vdots & \vdots & \ddots & \vdots & \vdots & \vdots \\
            0 & 0 & \dots & 0 & \dots & 0 \\
            \vdots & \vdots & \ddots & \vdots & \vdots & \vdots \\
            0 & 0 & \dots & 0 & \dots & -2 (\eigval_u - \eigval_k) \\
        \end{bmatrix}
    \end{equation}
    that is zero everywhere except for the diagonal entries, excluding its \(u\)-th entry. Thus, \(\|J_{\psi_u}\|_{\mathrm{F}} \le 2 k \max_{v \in \{1,2,\dots,k\}} (\lambda_v - \eigval_u) \le 2k (\eigval_k - \eigval_1) \le 4k\), as $0 = \lambda_1 \le \lambda_k \le 2$. We can therefore use \(K_{\ell} :=  4k / \sigma\).

    Now we continue with \(\tilde{\rho}_{\ell}(\mA, \mB_1, \mB_2, \dots, \mB_k)\). For \(f(\mA, \mB) = (\mB \odot \mA) \cdot \vec{1}\) with a general weighted adjacency \(\mA \in \R^{n \times n}\), we consider
    \begin{equation}
        \begin{aligned}
            \|(\mB \odot \mA) \cdot \vec{1} - (\mB' \odot \mA') \cdot \vec{1}\|_{\mathrm{F}}
            \underset{(A)}{\le}& \|\vec{1}\|_2 \|\mB \odot \mA - \mB' \odot \mA'\|_{\mathrm{F}} \\
            =& \sqrt{n} \|\mB \odot \mA - \mB' \odot \mA'\|_{\mathrm{F}} \\
            =& \sqrt{n} \|\mB \odot \mA - \mB' \odot \mA + \mB' \odot \mA - \mB' \odot \mA'\|_{\mathrm{F}} \\
            \underset{(B)}{\le}& \sqrt{n} \|\mB \odot \mA - \mB' \odot \mA\|_{\mathrm{F}} + \sqrt{n} \|\mB' \odot \mA - \mB' \odot \mA'\|_{\mathrm{F}} \\
            =& \sqrt{n} \|(\mB - \mB') \odot \mA\|_{\mathrm{F}} + \sqrt{n} \|\mB' \odot (\mA - \mA')\|_{\mathrm{F}} \\
            \underset{(C)}{\le}& \sqrt{n} \underbrace{\max_{u,v} \emA_{u,v}}_{\underset{(D)}{\le} \tilde{D}_{\max}} \|\mB - \mB'\|_{\mathrm{F}} + \underbrace{\max_{u,v} \emB'_{u,v}}_{\underset{(E)}{\le} 1} \sqrt{n} \underbrace{\|\mA - \mA'\|_{\mathrm{F}}}_{(F)}.
        \end{aligned}
    \end{equation}
    (A) holds by Cauchy-Schwarz, (B) by triangle inequality, (C) by Cauchy-Schwarz, (D) follows from the domain of \(\mA\), and (E) is true since the largest eigenvalue of \(\mB  = \Eigvec \phi_{\ell}(\veigval) \Eigvec^\top\) is \(1\) because \(\phi_{\ell}(\veigval)_j \le 1, \forall 1 \le j \le k\).

    To further bound (F), i.e. $\|\mA - \mA'\|_{\mathrm{F}}$, note that $\|\mL - \mL'\|_{\mathrm{F}} = \|\mD^{-1/2}\mA\mD^{-1/2} - \mD'^{-1/2}\mA'\mD'^{-1/2}\|_{\mathrm{F}}$. For $g(\mA) := \mD^{1/2} \mA \mD^{1/2}$, our initial assumption from~\autoref{eq:degree_assumption} yields the existence of a Lipschitz constant $C_{\tilde{D}_{\min},\tilde{D}_{\max}}$ for $g$, which can be verified by computing the partial derivatives of $g$. Thus, we can bound

    \begin{equation}
        \begin{aligned}
        \|\mA - \mA'\|_{\mathrm{F}} &= \| g(\mD^{-1/2}\mA\mD^{-1/2}) - g(\mD'^{-1/2}\mA'\mD'^{-1/2}) \|_{\mathrm{F}} \\
        &\le C_{\frac{\tilde{D}_{\min}}{\tilde{D}_{\max}},\frac{\tilde{D}_{\max}}{\tilde{D}_{\min}}} \|\mD^{-1/2}\mA\mD^{-1/2} - \mD'^{-1/2}\mA'\mD'^{-1/2}\|_{\mathrm{F}} \\
        &= C_{\frac{\tilde{D}_{\min}}{\tilde{D}_{\max}},\frac{\tilde{D}_{\max}}{\tilde{D}_{\min}}} \|\mL - \mL'\|_{\mathrm{F}} \: =: \: \alpha_4 \|\mL - \mL'\|_{\mathrm{F}}.
    \end{aligned} \label{eq:AtoL}
    \end{equation}
    
    As concatenation of \(k\) vectors \(||_{j=1}^k \vx\) has a Lipschitz constant of 1, we have \(J = \sqrt{n} \tilde{D}_{\max}\). Moreover, we have an additional term for the RHS of \autoref{app:eq:spe_bound} with constant \(\alpha_4 \sqrt{n} k\), coming from (F) and \autoref{eq:AtoL}.
    
    To finalize the proof, we restate the beginning of the proof of \citet{huang_stability_2024} and incorporate the additional \(\mA\)-dependency of \(\tilde{\rho}_{\ell}(\mA, \mB_1, \mB_2, \dots, \mB_k)\) with \(\mB_j = \Eigvec \operatorname{diag}\left(\phi_j(\boldsymbol{\lambda})\right) \Eigvec^{\top}\) for \(1 \le j \le k\).
    \begin{equation}%
        \begin{aligned}
            & \left\|\operatorname{SPE}(\operatorname{EVD}(\mL), \mL)-\mP_* \operatorname{SPE}\left(\operatorname{EVD}\left(\mL^{\prime}\right), \mL\right)\right\|_{\mathrm{F}} \\
            &= \left\| \tilde{\rho}_{\ell}(\mA, \mB_1, \mB_2, \dots, \mB_k)-\mP_* \tilde{\rho}_{\ell}(\mA', \mB'_1, \mB'_2, \dots, \mB'_k)\right\|_{\mathrm{F}} \\
            &= \left\| \tilde{\rho}_{\ell}(\mA, \mB_1, \mB_2, \dots, \mB_k)- \tilde{\rho}_{\ell}(\mP_*\mA'\mP_*^\top, \mP_*\mB'_1\mP_*^\top, \mP_*\mB'_2\mP_*^\top, \dots, \mP_*\mB'_k\mP_*^\top)\right\|_{\mathrm{F}} \\
            &\le \underbrace{\left[J \sum_{l=1}^k \left\| \mB_l - \mP_*\mB'_l\mP_*^\top\right\|_{\mathrm{F}}\right]}_{\text{subject of \citet{huang_stability_2024}}} + \alpha_4 \sqrt{n} k \left\| \mL - \mP_*\mL'_l\mP_*^\top\right\|_{\mathrm{F}}.
        \end{aligned}
    \end{equation}
    
    Including the extra term stemming from our \(\mA\)-dependent \(\tilde{\rho}_{\ell}(\mA, \mB_1, \mB_2, \dots, \mB_k)\), the stability guarantee reads
    \begin{equation}%
        \begin{aligned}
        \left\|\operatorname{SPE}(\operatorname{EVD}(\mL), \mL)-\mP_* \operatorname{SPE}\left(\operatorname{EVD}\left(\mL^{\prime}\right), \mL\right)\right\|_{\mathrm{F}} \leq & \left(\alpha_1+\alpha_2\right) k^{5 / 4} \sqrt{\left\|\mL-\mP_* \mL \mP_*^{\top}\right\|_{\mathrm{F}}} \\
        & +\left(\alpha_2 \frac{k}{\gamma}+\alpha_3+\alpha_4 \sqrt{n}k\right)\left\|\mL-\mP_* \mL \mP_*^{\top}\right\|_{\mathrm{F}}
        \end{aligned}
    \end{equation}
    with the newly introduced \(\alpha_4\) arising as Lipschitz constant of (inverse) degree normalization. The proof is complete.
\end{proof}

\textbf{Windowing for ``eigengap'' independent bounds.} Note that \(C\) depends on the eigengap between \(\nicefrac{1}{\eigval_{k+1} - \eigval_{k}}\) at the frequency cutoff. One should be able to improve upon this bound with windowing (see \autoref{fig:smearing})), effectively lowering the Lipschitz constant of \(\hat{h}_j(\veigval)\) around \(\eigval_{k}\). We leave a formal treatment of this insight to future work.

\subsection{Proof of \autoref{theorem:wl1}}

We next prove the expressivity of a GNN/\nameacrsing{} in combination with our positional encodings:
\theoremwlone*
For this, we assume that the positional encodings are the only node attributes, subsuming a constant feature or that there is a linear transformation on the raw features. We require that the choice of spatial MPGNN / spectral filter is at least as expressive as the 1-WL test, which is the case, e.g., for GIN. Moreover, we assume that the node-level embeddings are aggregated to the graph level using summation.

\begin{proof}
    To show that \(\operatorname{GNN}(\PE(\Eigvec, \eigval))\) is strictly more expressive as 1-WL. For all graphs that 1-WL can distinguish, the GNN may learn to ignore the \(\PE\). Thus, we only need to prove that the positional encodings/node features of \(\PE(\Eigvec, \eigval)\) suffice to distinguish some graphs that 1-WL could not distinguish. For all graphs that 1-WL can distinguish we know, by assumption, that the \(\operatorname{GNN}\) can distinguish the graphs.
    
    As \citet{li_distance_2020} point out, 1-WL (and MPGNN that are as capable as 1-WL) cannot distinguish degree-regular graphs with the same number of nodes and degrees. A degree regular graph is a graph where each node has the same degree. This is closely related to \autoref{app:theorem:wl3}.
    
    We next show that our \(\PE\) alone distinguishes certain degree-regular graphs. In this construction, we consider all 3-regular graphs with \(n=8\) nodes for this (see \autoref{app:fig:pen3d3}). The encodings \(\vec{1} \PE\) result in the following values with \(\sigma=0.001\) and  rounded to max 2 decimal places:
    \begin{equation}\label{app:eq:pen3d3}
        \begin{aligned}
            \vec{1}^\top \PE(\EVD(\mL_1)) &= \begin{bmatrix} 3 &  1.73 &  1 & 0.41 & -1 & -1 & -1.73 & -2.41\end{bmatrix}\\
            \vec{1}^\top \PE(\EVD(\mL_2)) &= \begin{bmatrix} 3 &  1.56 &  0.62 & 0.62 & 0 & -1.62 & -1.62 & -2.41\end{bmatrix}\\
            \vec{1}^\top \PE(\EVD(\mL_3)) &= \begin{bmatrix} 3 &  1.73 &  1 & 0.41 & -1 & -1 & -1.73 & -2.41\end{bmatrix}\\
            \vec{1}^\top \PE(\EVD(\mL_4)) &= \begin{bmatrix} 3 &  1 &  1 & 0.41 & 0.41 & -1 & -2.41 & -2.41\end{bmatrix}\\
            \vec{1}^\top \PE(\EVD(\mL_5)) &= \begin{bmatrix} 3 & 1 &  1 & 1 & -1 & -1 & -1 & -3 \end{bmatrix}\\
        \end{aligned}
    \end{equation}

    \begin{figure}[t]
        \centering
        \includegraphics[width=\linewidth]{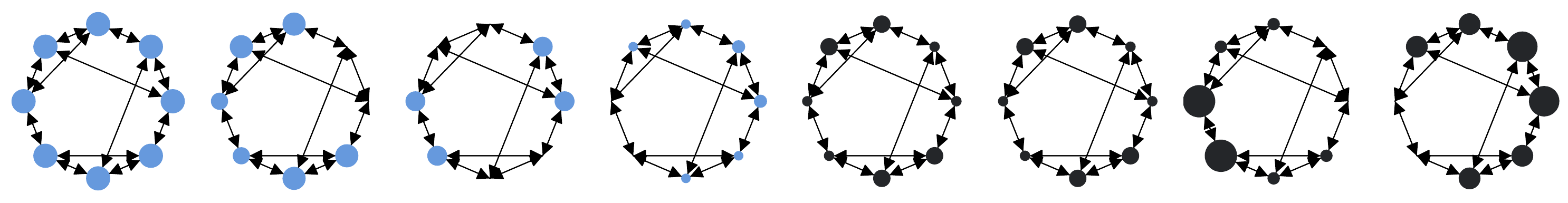}
        \includegraphics[width=\linewidth]{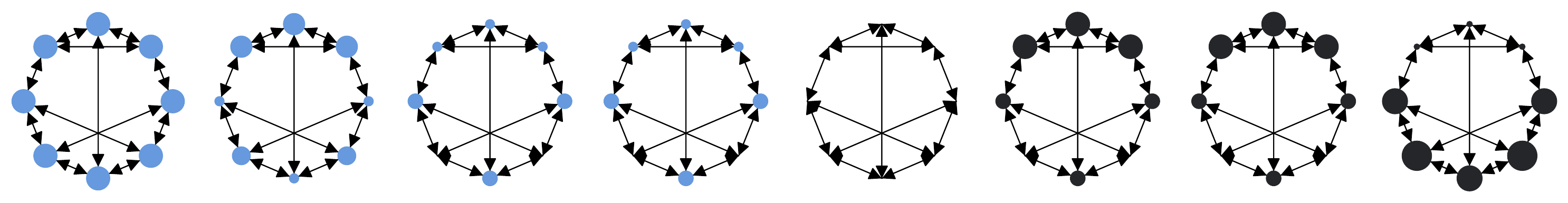}
        \includegraphics[width=\linewidth]{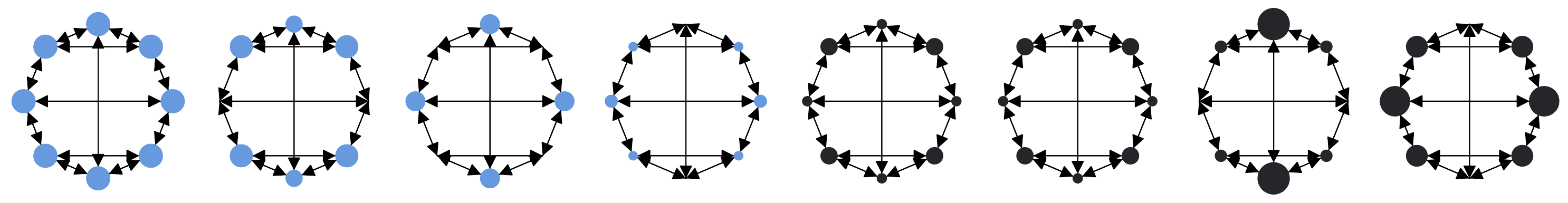}
        \includegraphics[width=\linewidth]{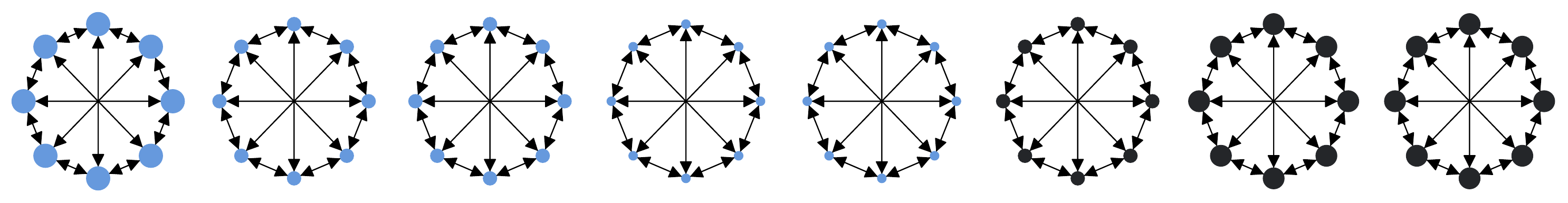}
        \includegraphics[width=\linewidth]{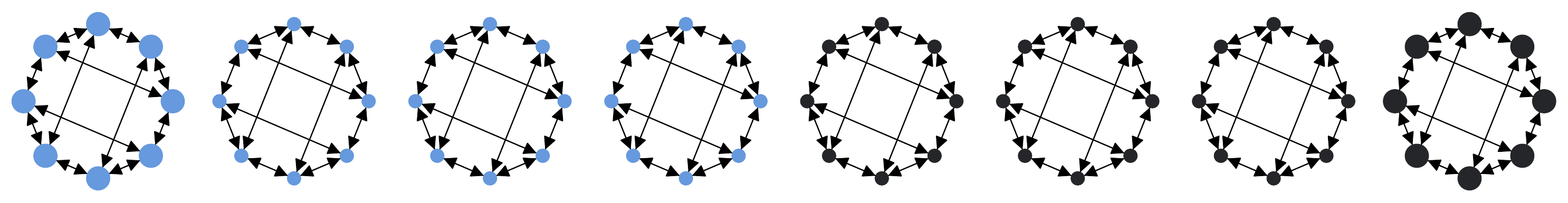}
        \caption{Our positional encodings \(\PE\) \autoref{eq:posenc} illustrated in the node colors and sizes. We plot all 5 (rows) 3-regular graphs with 8 nodes and all possible dimensions of the encoding (columns). We use \(\sigma = 0.001\). Color denotes the sign, and size encodes the absolute value. We hypothesize that the visual ``smoothness'' between graphs and dimensions is due to our \(\PE\)'s stability (\autoref{theorem:stable}).}
        \label{app:fig:pen3d3}
    \end{figure}
    
    By constructing examples, this shows that our \(\PE\) can distinguish 4 out of the 5 3-regular graphs with 8 nodes. Thus, our \(\PE\) may distinguish at least some graphs that 1-WL cannot. This concludes the proof.
    
\end{proof}

\section{Expressivity of Spectral Filters and Spectrally Designed Spatial Filters}\label{app:expressivity}

While it is well-known that common spatial MPGNNs are at most as expressive as 1-WL and that spectrally designed GNNs can be more expressive than 1-WL (Theorem 2 of \citet{balcilar_breaking_2021}), we show that spectral GNNs are not able to distinguish degree-regular graphs. This upper bound was not known/formalized prior to our work~\citep{bo_survey_2023}. Fortunately, our \(\PE\) largely mitigates the limitation. The improved expressivity of our positional encodings, along with their efficiency, stems from the element-wise product with \(\mA\) (see also \citet{geerts_expressive_2021}).
\begin{restatable}[]{theorem}{theoremwlthree}~\label{app:theorem:wl3}
    Spectral filters \(\Eigvec \operatorname{diag}(\graphfilter) \Eigvec^\top \vec{1}\) are strictly less expressive than 3-WL with Laplacian \(\mL = \mD - \mA\),  \(\mL = \mI - \mD^{-1} \mA\), or  \(\mL = \mI - \mD^{-1/2} \mA \mD^{-1/2}\).
\end{restatable}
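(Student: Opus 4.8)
The statement is the conjunction of an upper bound — no spectral filter $\Eigvec \operatorname{diag}(\graphfilter) \Eigvec^\top \vec{1}$ separates a pair of graphs that 3-WL fails to separate — and a strictness witness. For the upper bound I would prove the stronger fact that such a filter output is already determined by the 1-WL colouring, and then invoke that 1-WL is no stronger than 3-WL. The reduction is: $\Eigvec \operatorname{diag}(\graphfilterfn(\veigval)) \Eigvec^\top = \graphfilterfn(\mL)$ is a matrix function of $\mL$, and since $\mL$ has at most $n$ distinct eigenvalues, $\graphfilterfn(\mL) = \sum_\mu \graphfilterfn(\mu)\,\Pi_\mu$ with each spectral projector $\Pi_\mu$ a polynomial in $\mL$ (Lagrange interpolation at the spectrum); hence the output $\graphfilterfn(\mL)\vec{1}$ is a fixed $\R$-linear combination of vectors $\mL^j \vec{1}$, $j \geq 0$. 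When $\graphfilterfn$ is a fixed-coefficient polynomial this is immediate; for arbitrary $\graphfilterfn \colon [0,2] \to \R$ the interpolating polynomial depends on the graph's spectrum, which is why the comparison below also invokes cospectrality.

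\textbf{The 1-WL bound.} The core lemma is that for each of the three normalizations $\mL \in \{\mD - \mA,\ \mI - \mD^{-1}\mA,\ \mI - \mD^{-1/2}\mA\mD^{-1/2}\}$, the entry $(\mL^j\vec{1})_u$ depends only on the stable 1-WL colour of $u$. I would prove this by induction on $j$: writing $g_j(u) := (\mL^j\vec{1})_u$, the recursion expresses $g_j(u)$ through $g_{j-1}(u)$ and a degree-weighted sum of $g_{j-1}(v)$ over neighbours $v \sim u$, and a node's 1-WL colour determines its degree and the multiset of (colour, degree) pairs of its neighbours. For $\mL = \mD - \mA$ and $\mL = \mI - \mD^{-1}\mA$ one even has $\mL\vec{1} = 0$, so the output collapses to the constant $\graphfilterfn(0)\vec{1}$; only the symmetrically-normalized case genuinely needs the induction. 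Next, 1-WL-indistinguishable graphs are cospectral for $\mL$: the closed-walk counts $(\mL^j)_{uu}$ are again 1-WL-determined (unfolding-tree argument), so $\operatorname{tr}(\mL^j)$ and thus the characteristic polynomials agree. Cospectrality lets the \emph{same} interpolating polynomial represent $\graphfilterfn(\mL)$ on both graphs, so by the lemma the two outputs have equal multisets of entries, and any permutation-invariant graph readout then assigns them the same representation. Since 3-WL is at least as powerful as 1-WL, 3-WL-indistinguishability implies 1-WL-indistinguishability, giving the upper bound. (An alternative, less elementary route uses the Weisfeiler--Leman coherent closure $\mathcal W_\gG$, which contains $\mI$, $\vec{1}\vec{1}^\top$, $\mA$ and $\mD^{\pm 1/2}$, hence $\mL$, hence $\graphfilterfn(\mL)$ and $\graphfilterfn(\mL)\vec{1}\vec{1}^\top$, forcing $\graphfilterfn(\mL)\vec{1}$ to be constant on the 2-WL vertex fibres, which are matched across 2-WL-equivalent graphs; and 2-WL is dominated by 3-WL.)

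\textbf{Strictness and the main obstacle.} For strictness take $\gG_1 = \mathcal{C}_6$ and $\gG_2 = \mathcal{C}_3 \cup \mathcal{C}_3$, both 2-regular on six vertices ($K_{3,3}$ versus the triangular prism works just as well). Being regular, $\vec{1} \in \ker \mL$ for all three normalizations, so every filter returns exactly $\graphfilterfn(0)\vec{1}$ on both graphs and no spectral filter separates them; 3-WL, however, does, as it counts triangles, e.g.\ via $\operatorname{tr}(\mA^3)$, which is $0$ for $\mathcal{C}_6$ and $12$ for $\mathcal{C}_3 \cup \mathcal{C}_3$ \citep{morris_weisfeiler_2019, van_den_broeck_color_2021}. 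This also recovers the informal claim preceding \autoref{app:theorem:wl3} that such spectral filters cannot distinguish degree-regular graphs — the same obstruction that limits 1-WL and that motivates \autoref{theorem:wl1} via \citet{li_distance_2020}. The main obstacle is the upper-bound half for a general (non-polynomial) $\graphfilterfn$ uniformly over all three Laplacians: one must make the polynomial reduction of $\graphfilterfn(\mL)$ \emph{consistent} across the compared graphs — exactly where the ``1-WL $\Rightarrow$ cospectral'' step enters — and carry the degree weights correctly through the induction in the symmetrically-normalized case; the strictness half and the polynomial-filter case are routine. Incidentally the argument yields the sharper ``spectral filters $\preceq$ 1-WL'', but 3-WL is the pertinent benchmark here, since spectrally \emph{designed} message-passing GNNs can instead exceed 1-WL \citep{balcilar_breaking_2021}.
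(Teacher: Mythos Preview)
Your proof is correct and in fact establishes more than the paper does. The paper's argument is a one-line eigenvector observation: for $\mL_{\text{u}}=\mD-\mA$ and $\mL_{\text{rw}}=\mI-\mD^{-1}\mA$ the all-ones vector is the zero-eigenvector, and for $\mL_{\text{s}}=\mI-\mD^{-1/2}\mA\mD^{-1/2}$ the same holds on degree-regular graphs; by orthogonality $\Eigvec^\top\vec{1}=[\sqrt{n},0,\dots,0]^\top$, so the filter output is identically $\hat g(0)\vec{1}$ on every such graph, and the claim follows because 3-WL separates some pair of regular graphs. No polynomial reduction, no cospectrality, no induction over $\mL^j\vec{1}$ is needed --- the specific input $\vec{1}$ lands in a single eigenspace and the whole filter collapses. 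Your route through ``$(\mL^j\vec{1})_u$ is 1-WL-determined'' plus Lagrange interpolation at the spectrum and the 1-WL$\Rightarrow$cospectral step is substantially more machinery, but it buys something the paper's proof does not: a genuine upper bound \emph{spectral filters $\preceq$ 1-WL} valid on \emph{all} graphs, in particular for $\mL_{\text{s}}$ on non-regular graphs where $\vec{1}$ is not an eigenvector and the paper's trick does not apply. Under the two-sided reading of ``strictly less expressive'' (i.e.\ $\preceq$ plus a separating witness), the paper's proof literally covers the upper bound only for $\mL_{\text{u}}$, $\mL_{\text{rw}}$, and the restriction of $\mL_{\text{s}}$ to regular graphs, whereas yours covers all three uniformly; your $C_6$ versus $C_3\cup C_3$ witness is the same kind of example the paper gestures at.
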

\begin{proof}
The proof relies on properties of the eigenvectors for the different choices \(\mL_{\text{u}} = \mD - \mA\),  \(\mL_{\text{rw}} = \mI - \mD^{-1} \mA\), or  \(\mL_{\text{s}} = \mI - \mD^{-1/2} \mA \mD^{-1/2}\). For \(\mL_{\text{u}} \vec{1} = \eigval_0 \vec{1} = 0\) and  \(\mL_{\text{rw}} \vec{1} = \eigval_0 \vec{1} = 0\) the first eigenvector is constant. The first eigenvector of \(\mL_{\text{s}}\) is \(\mD^{1/2} \vec{1}\) (ignoring normalization). Thus, for degree-regular graphs, the first eigenvector of \(\mL_{\text{s}}\) is also constant.

By the orthogonality of eigenvectors, \(\eigvec_u \perp \eigvec_v\) if \(u \ne v\), we know that all other eigenvectors are orthogonal to constant node features. Consequently, the ``Fourier transformed'' node features are \(\Eigvec^\top \vec{1} = \begin{bmatrix}\sqrt{n} & 0 & \dots & 0 \end{bmatrix}\) for all three choices \(\mL_{\text{u}}\), \(\mL_{\text{rw}}\), and \(\mL_{\text{s}}\). Since this is true for all degree-regular graphs, spectral GNNs cannot distinguish degree-regular graphs with the same number of nodes.

Since the 3-WL test can distinguish some degree-regular graphs, 3-WL is strictly more expressive than a spectral GNN.
\end{proof}

\begin{restatable}[]{corollary}{corollarywlthreespat}~\label{app:corollary:wl3spat}
    ``Spectrally designed'' MPGNNs that use a polynomial parametrization of filter \(\operatorname{diag}(\graphfilter)\) are strictly less expressive than 3-WL with the same choices for \(\mL\).
\end{restatable}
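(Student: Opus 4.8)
The plan is to establish both halves of ``strictly less expressive than 3-WL'': (i) every pair of graphs separated by such an MPGNN is also separated by 3-WL, and (ii) some pair is separated by 3-WL but by no such MPGNN. Throughout I use the standing assumptions of this section that the inputs are constant node features and the graph-level representation is obtained by summation over node embeddings.

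For (i), note that a polynomial parametrization of \(\operatorname{diag}(\graphfilter)\) means precisely \(\Eigvec\operatorname{diag}(\graphfilter)\Eigvec^\top = \hat{g}(\mL)\) with \(\hat{g}(\lambda)=\sum_{j=0}^{p}\gamma_j\lambda^j\), so a layer \(\sigma(\hat{g}(\mL)\mH\mW)\) unrolls into \(p\) ordinary neighbourhood-aggregation steps with aggregation operator \(\mL\), interleaved with a node-wise linear map and a pointwise nonlinearity. Hence the network is a message-passing GNN, so it distinguishes no more pairs of graphs than the 1-WL test \citep{xu_how_2019}, which is itself dominated by 3-WL; this is the ``at most as expressive'' direction.

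For (ii), by (i) it suffices to exhibit one pair of graphs on which the MPGNN is blind while 3-WL is not. I take two non-isomorphic degree-regular graphs on the same number of nodes, concretely \(C_6\) and \(C_3\sqcup C_3\), both \(2\)-regular on six vertices. That no such MPGNN distinguishes them follows from the argument in the proof of \autoref{app:theorem:wl3}: for each of \(\mL_{\mathrm u},\mL_{\mathrm{rw}},\mL_{\mathrm s}\) on a degree-regular graph the constant signal \(\vec{1}\) is a \(0\)-eigenvector, so \(\hat{g}(\mL)(c\,\vec{1})=\hat{g}(0)\,c\,\vec{1}\) for any polynomial \(\hat{g}\). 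An induction over layers -- a constant-row signal stays constant-row under \(\hat{g}(\mL)\), under the node-wise linear map, and under the pointwise nonlinearity -- shows that every hidden representation is a constant-row matrix whose value depends only on the node count, so the sum readout returns identical graph embeddings for \(C_6\) and \(C_3\sqcup C_3\). On the other hand, 3-WL (equivalently, 2-FWL) separates them because it determines the triangle count, which is \(0\) for \(C_6\) and \(2\) for \(C_3\sqcup C_3\); 1-WL of course does not, consistently with (i). Items (i) and (ii) together prove the corollary.

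The one step that needs care is the layer-wise induction for the different normalizations: under \(\mL_{\mathrm s}=\mI-\mD^{-\nicefrac{1}{2}}\mA\mD^{-\nicefrac{1}{2}}\) the constant vector is a \(0\)-eigenvector only by virtue of degree-regularity (for a general graph it is \(\mD^{\nicefrac{1}{2}}\vec{1}\)), and for the non-symmetric \(\mL_{\mathrm{rw}}\) the relevant ``Fourier transform'' uses the inverse rather than the transpose of the eigenvector matrix -- but these are exactly the cases already handled in \autoref{app:theorem:wl3}, so nothing new is required beyond invoking that result and pushing it through the depth of the network; the reduction of polynomial filters to message passing and the triangle-count separation under 2-FWL are standard. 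A connected separating example, e.g.\ two connected \(3\)-regular graphs on eight vertices with different triangle counts, works identically if one prefers to avoid disconnected graphs.
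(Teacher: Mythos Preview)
Your proof is correct and follows essentially the same approach as the paper's: both rely on the equivalence \(\Eigvec\operatorname{diag}(\graphfilter)\Eigvec^\top = \hat g(\mL)\) for polynomial \(\hat g\) and on the fact that \(\vec 1\) is a \(0\)-eigenvector of each of \(\mL_{\mathrm u},\mL_{\mathrm{rw}},\mL_{\mathrm s}\) on degree-regular graphs, so constant-row signals are preserved layer by layer and degree-regular graphs with the same node count are indistinguishable, while 3-WL separates some of them. The paper's argument is terser---it simply invokes the preceding \autoref{app:theorem:wl3} and does not write out the ``at most 3-WL'' direction---whereas you make both halves explicit, carry the induction over layers, and supply the concrete pair \(C_6\) versus \(C_3\sqcup C_3\) with the triangle-count separation under 2-FWL; that additional detail is fine but not a genuinely different route.
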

\begin{proof}
    With a polynomial parametrization of the spectral filter \(\graphfilter\), we know \(\Eigvec (\graphfilter \odot [\Eigvec^\top \vx]) = \Eigvec \operatorname{diag}(\graphfilter) \Eigvec^\top \vx = \graphfilterfn(\mL) \vx = \sum_{j=0}^p \gamma_j \mL^j \vx\) (see \autoref{sec:background}). Due to this equivalence between a spectral and spatial filter and the constant node features \(\vx = \vec{1}\), any polynomial filter \(\sum_{j=0}^p \gamma_j \mL^j \vec{1}\) cannot distinguish degree-regular graphs. This argument also holds if the polynomial filter is normalized by the maximum eigenvalue as done by ChebNet~\citep{defferrard_convolutional_2017}.
\end{proof}

\section{Further Remarks on \nameacr{}}\label{app:method}

\begin{figure}[t]
        \centering
        \sidesubfloat[]{\includegraphics[width=0.7\linewidth]{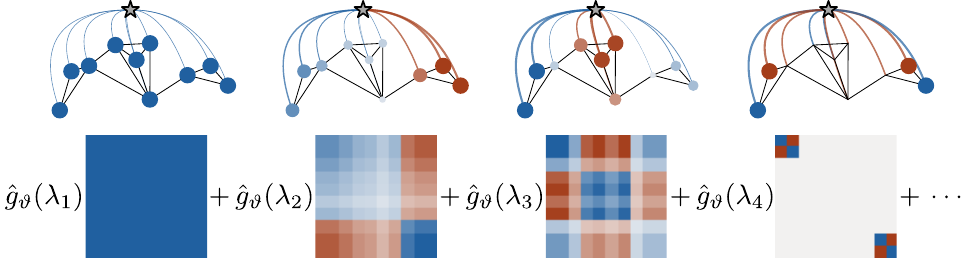}}\\
        \vspace{9pt}
        \sidesubfloat[]{\includegraphics[width=0.7\linewidth]{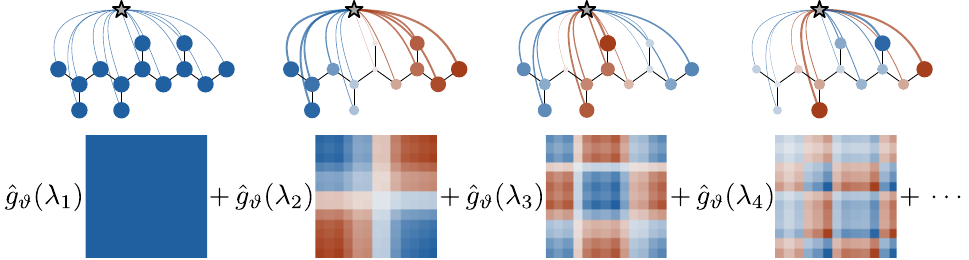}}\\
        \vspace{9pt}
        \sidesubfloat[]{\includegraphics[width=0.7\linewidth]{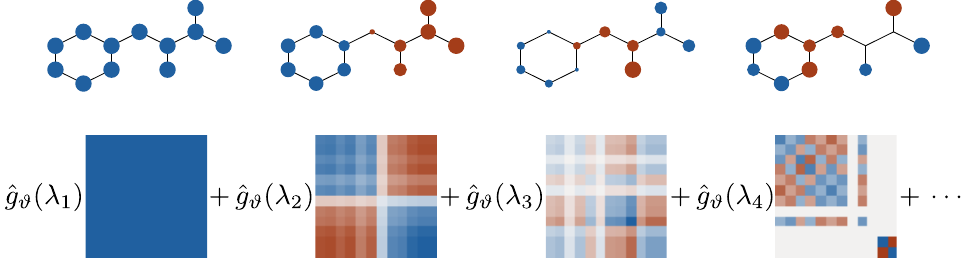}}\\
        \vspace{9pt}
        \sidesubfloat[]{\includegraphics[width=0.7\linewidth]{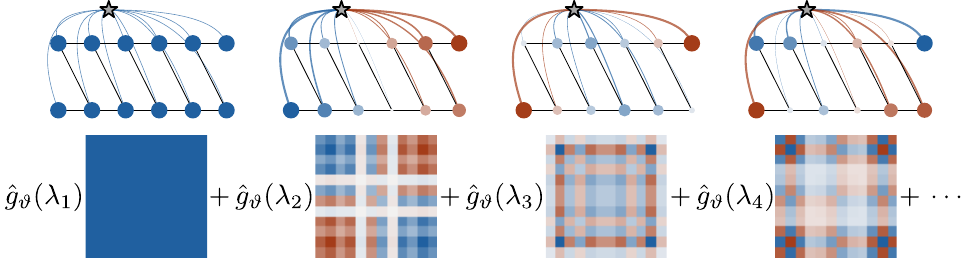}}\\
        \vspace{9pt}
        \sidesubfloat[]{\includegraphics[width=0.7\linewidth]{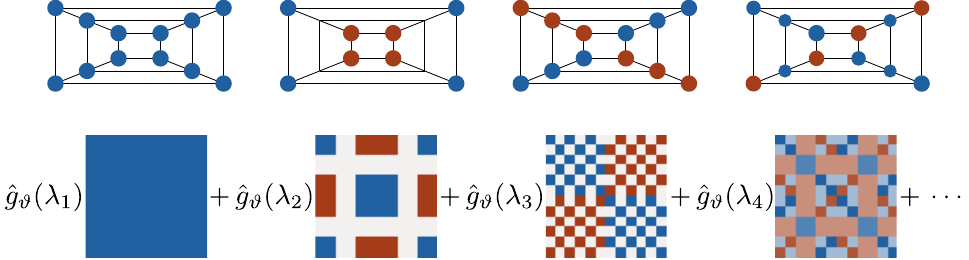}}
        \vspace{3pt}
        \caption{Sketch of intra- and inter-cluster message passing capabilities \(\Eigvec (\graphfilternn \odot [ \Eigvec^\top \mX] = [\sum_{j=1}^k \graphfilterfn_\vartheta(\eigval_j) \eigvec_j\eigvec_j^\top] \mX\). The ``star'' node reflects the \emph{global} Fourier coefficient and colors/widths illustrate its signed and weighted message passing. We show the first four eigenvectors, order nodes left to right in \(\eigvec_j\), and sum repeated eigenvalues.}
        \label{app:fig:spectral_filter_extended}
\end{figure}

We next provide insights, details, and remarks on the details and variants of \nameacr{}, accompanying the main section \autoref{sec:method}. The structure roughly follows the main body.

\subsection{Conceptual Visualization of Spectral Filters}\label{app:method_visualization}
In \autoref{app:fig:spectral_filter_extended}, we provide further examples of hierarchies/eigenspaces spectral filters have access to, complementing \autoref{fig:spectral_filter} \& \ref{fig:spectral_filter_extended}. Here and in the main part, we use the main diagonal of \(\sum_{j \text{ s.t. } \lambda_j = \lambda_u}\eigvec_j\eigvec_j^\top\) for deciding on the edge weights of the graph structures, potentially summing over multiple eigenvectors with identical values \(\lambda_j = \lambda_u\). We take the product \(\prod_{j \text{ s.t. } \lambda_j = \lambda_u} \operatorname{sign}(\eigvec_j)\) for visualizing the sign of the \(n\) edges for the global aggregation.

For all graphs, the first eigenvector denotes the constant signal (here for \(\mL = \mD - \mA\)).
For (a-d), we observe that the second eigenvectors roughly describe a half oscillation, i.e., the left vs.\ right part of the graph. Conversely, the third eigenvectors separate the middle parts. For (a), the fourth eigenspace models the interactions between the extremal nodes. For (b-d), the frequency increments again, effectively clustering the graph in four roughly equal pieces. For (e), the eigenspaces model the interplay between (automorphic) inner and outer structures, as well as the vertical and horizontal symmetry

\subsection{Composition of Filters}\label{app:method_composition}

Composing a \emph{residual connection} with a graph filter \(\mG = \graphfilterdiag \in \R^{n \times n} \) yields \(\mY = \Eigvec \mG \Eigvec^\top \mH + \mH = \Eigvec (\mG + \mI) \Eigvec^\top \mH\), \emph{chaining multiple filters} (without nonlinearities) results in \(\Eigvec \mG_2 \Eigvec^\top \Eigvec \mG_1 \Eigvec^\top \mH = \Eigvec \mG_2 \mG_1 \Eigvec^\top \mH\). Chaining and residual connections resolve to \(\Eigvec (\mG_2 \mG_1 + \mG_2 + \mG_1 + \mI) \Eigvec^\top \mH\). Hence, an arbitrary sequence of graph filters (\autoref{eq:specspat_seq}) can be more flexible due to the interactions between filters. Note that this composition is only true in the absence of nonlinearities. Nevertheless, the main intuition about how filters interact remains approximately the same also in the light of nonlinearities.

\subsection{Scaling to Graphs of Different Magnitude}\label{app:method_scaling}
For scaling a single to graphs of different orders of magnitude, it can be beneficial to rescale the eigenvalues before learning the filter \(\graphfilternn\). That is, we use \(\graphfilterfn\Layer_{\vartheta}(\tilde{\veigval})\) with rescaled \(\tilde{\veigval}\). 

For example, the eigenvalues for a path/sequence are \(\eigval_j \approx (1 - \cos(\nicefrac{\pi j}{n}))\). Thus, the resolution is poor, especially for the eigenvalues close to zero since \(\cos\) approaches slope 0. For this reason, we consider rescaling the eigenvalues with 
\begin{equation}\label{app:eq:acostransform}
    \tilde{\eigval}_j = \nicefrac{1}{\pi} \cos^{-1}(1 - \eigval_j)
\end{equation}
 or 
\begin{equation}\label{app:eq:nacostransform}
    \tilde{\eigval}_j = \nicefrac{n}{\pi} \cos^{-1}(1 - \eigval_j)
\end{equation}
The latter is, e.g., convenient for identifying the second lowest eigenvalue regardless of \(n\). Due to the poor numerical properties of these relations, we evaluate \(\cos^{-1}(1 - \eigval_j) = \tan^{-1}(\nicefrac{\sqrt{2 \eigval_j - \eigval_j^2}}{1 - \eigval_j})\) instead.

\subsection{Spectral Normalization}\label{app:method_spectral_norm}
While the GFT and its inverse preserve the norm of the input (e.g., \(\|\hat{\vx}\|_2 = \|\Eigvec^\top \vx\|_2 = \|\vx\|_2\)),  this is not true if operating on a truncated frequency spectrum or if the filter \(\graphfilternn\) suppresses certain frequencies. For example, in the example of a virtual node (for simplicity here with \(\mL = \mD - \mA\)), a signal \(\vx\) that is zero at every node but one at a single node, then the signal will be equally scattered to every frequency. Then, suppressing all frequencies but \(\eigval = 0\), yields \(\|\Eigvec \mathds{1}_{\{0\}} \Eigvec^\top \vx\|_2 = \nicefrac{1}{\sqrt{n}}\).

Motivated by this unfortunate scaling, we also consider normalization in the spectral domain.  Specifically, we normalize \(\hat{\mH} = \graphfilternn \odot \big[\Eigvec^\top f_\theta(\mH)\big] \in \R^{k \times d}\) s.t. \(\hat{\mH}_j \leftarrow (1 - \eva_j) \hat{\mH}_j + \eva_j \nicefrac{\hat{\mH}_j}{\|\hat{\mH}_j\|_2}\) with learnable \(\va \in [0,1]^{d}\). This allows, e.g., broadcasting a signal from one node without impacting its scale. However, we empirically find that this normalization only helps only marginally in the over-smoothing experiment~\citep{di_giovanni_over-squashing_2023} and otherwise can destabilize training. We also consider variants where the norm in the spectral domain is scaled with the norm of the signal in the spatial domain with more or less identical results. We hypothesize that such normalization is counter-productive for, e.g., a bandpass filter if the signal does not contain the corresponding frequencies.

\subsection{Adjusting \nameacr{} to Directed Graphs}\label{app:method_directed}

For the spectral filter of \autoref{eq:spec}, we use \(f_\theta^{(l)}(\hat{\mH}^{(l)}) = \mH^{(l)} \odot [\sigma(\mH^{(l)}\mW_{G,\Re}^{(l)}) + i \cdot \sigma(\mH^{(l)}\mW_{G,\Im}^{(l)})]\) and subsequently map the result of \(\spec\) back the real domain, e.g., using \(\vw_{\Re}^{(l)} \Re(\spec^{(l)}(\mH^{(l-1)})) + \vw_{\Im}^{(l)}\Im(\spec^{(l)}(\mH^{(l-1)})) \), with learnable weights \(\vw_{\Re}^{(l)}, \vw_{\Im}^{(l)} \in \R^d\) and real \(\Re(\cdot)\) as well as imaginary component \(\Im(\cdot)\). For the positional encodings \(\PE(\mV, \veigval)\) of \autoref{sec:pos_enc}, we use \(\mA_s\) in \autoref{eq:posenc} and concatenate real as well as imaginary components. The neural network for the spectral domain \(s_\zeta\) of \autoref{sec:nn_spec} generalizes without adjustment. Similar to \citet{koke_holonets_2024}, one could also employ complex weights; however, we do not. 

\subsection{Computational Remarks}\label{app:method_computation}

\textbf{Spectral graph-level readouts.} The key insight is that frequencies are a global concept, and hence, the GFT can be used for global readouts in graph-level tasks. With \(k \ll n\), such a readout is practically free in the presence of intermediate spectral layers and of \(\mathcal{O}(k n)\) otherwise. Thus, there is the opportunity for a computationally convenient aggregation of global information, including a sort of graph-level ``jumping knowledge''~\citep{xu_representation_2018}. The only caveat is that the Fourier coefficients are not unique due to the ambiguity in the eigendecomposition. To maintain permutation equivariance, we take the absolute value and aggregate over dimension \(k\) in \autoref{eq:spec_with_s} instead of the multiplication with \(\Eigvec\). We observe that such intermediate readout can improve performance slightly, e.g., on TPUGraphs. However, we leave a systematic evaluation of its benefits for future work.

\textbf{Linear bottle necks.} To circumvent overfitting, we commonly replace the linear transformations \(\mW\mX\) in \(f_\theta^{(l)}(\hat{\mH}^{(l)})\) and \(\graphfilternn\) with low-rank bottlenecks \(\mW_2\mW_1\mX\), s.t. \(\mW \in \R^{d \times d}\), \(\mW_2 \in \R^{d \times d'}\), \(\mW_1 \in \R^{d' \times d}\), and \(d' < d\).

\section{Limitations}\label{app:limitations}

We expect that many common graph benchmarks do not have or only insignificant long-range interactions. We observe that MPGNNs are less likely to overfit, perhaps since locality is a good inductive bias in many circumstances~\citep{bronstein_geometric_2021}. Moreover, we observe that the spectral filter (\autoref{sec:filter_parametrization}) may converge slowly and get stuck in local optima. We find that a sufficient amount of randomly initialized filters mitigates this issue to a large extent. Further, one can introduce inductive biases via windowing functions (\autoref{fig:smearing}), like the exponential window used by Hyena~\citep{poli_hyena_2023}. 

Even if the true causal model generating the target consists of long-range interactions, it might be sufficient to model the training data solely using (potentially spurious) local interactions. This might be especially true if the training nodes are samples from a ``small'' vicinity of the graph (e.g., OGB Products~\citep{hu_open_2020}).

Closely related to the previous point is the amount of available training data. We hypothesize that \nameacr{} are more \emph{data-hungry} than their purely spatial counterpart. That is, to reliably detect (non-spurious) long-range interactions in the training data, a sufficient amount of data is required. Similar findings have been made, e.g., in the image domain~\citep{dosovitskiy_image_2021}.

Except for heterophilic graphs, direction plays a small role in graph machine learning even though many benchmark tasks actually consist of directed graphs~\citep{rossi_edge_2023}. Moreover, there is a lack of benchmarks involving directed graphs, which require long-range interactions. Note that most of the theoretical findings generalize to directed graphs under appropriate modeling decisions/assumptions. However, we do not make this discussion explicit since MPGNNs for directed graphs are still actively researched. 

Since a lot of the previous points hover around the insufficiency of the available benchmarks, we propose two new tasks~\autoref{sec:emp_long} and derive further datasets, e.g., for associative recall.

While we demonstrate the practicality of \nameacr{} in \autoref{sec:emp_scaling} on large-scale benchmarks, the partial eigendecomposition \(\EVD\) starts to become costly on the largest graphs we use for evaluation. Even though we did not experiment with lowering the requested precision, etc., we expect that for scaling further, na\"ive approaches might not be sufficient. One direction could be to utilize GPUs instead of CPUs or to adapt concepts, e.g., from spectral clustering~\citep{von_luxburg_tutorial_2007}.

Even though there are many important reasons why we should utilize a spectral filter on the low end of the spectrum, there might be tasks for which this choice is suboptimal. One way to estimate the frequency band to which one should apply a spectral filter is via a polynomial regression and then determine where the derivative is maximal. Note that it is efficient to calculate the eigenvectors around an arbitrary location of the spectrum, e.g., with the ``shift-invert mode'' of \texttt{scipy}/\texttt{ARPACK}~\citep{lehoucq_arpack_1998}.

Due to the many possible design decisions of spectrally parametrized filters, the neglect of spectral filters in prior work, and the lack of appropriate benchmarks, it was not possible to ablate all the details. We expect that future work will discuss the specific building blocks in greater detail.

\section{Broader Impact}\label{app:impact}

We expect that \nameacr{} will have similar societal implications as other model developments like Convolutional Neural Networks (CNNs)~\citep{lecun_backpropagation_1989}, LSTMs~\citep{hochreiter_long_1997}, transformers~\citep{vaswani_attention_2017}, or modern Graph Neural Networks~\citep{gilmer_neural_2017}. Since such models may be used as building blocks in architectures for predictive tasks, generative modeling, etc., they have a wide range of positive and negative implications. Nevertheless, we expect that \nameacr{} will not have more negative implications than other machine learning model innovations.

\section{Experimental Results}\label{app:empirical}

This section provides further details on the experimental setup (\autoref{app:setup}), the computational cost (\autoref{app:compcost}), and graph constructions with additional experimental results for the clustering tasks (\autoref{app:clustering}); likewise we provide details for the distance regression (\autoref{app:source_dist}), arXiv-year (\autoref{app:arxiv_year}), and provide nodes on the graph construction in TPUGraphs (\autoref{app:tpu_graphs_construction}). Note that the sections on clustering (\autoref{app:clustering}) and distance regression (\autoref{app:source_dist}) also contain ablations and further insights.

\subsection{Experimental Details}\label{app:setup}

\textbf{Implementation.} The code base is derived from \citet{cao_learning_2023}, which on the other hand derive the code of \citet{rampasek_recipe_2022}. The implementation heavily relies on PyTorch geometric~\citep{fey_fast_2019}.

\textbf{Datasets.} We collect the main statistics, including licenses, for the datasets in \autoref{app:tab:datasets}. The provided code will download all datasets along with the experiment execution, except for TPUGraphs, where one should follow the official instructions. Due to the high variation in results, we merge all ``layout'' datasets and present the results on this joint dataset. We use the fixed public splits for all experiments and proceed accordingly for our datasets (see \autoref{app:clustering} and \autoref{app:source_dist}).

\begin{table}[t]
    \small
    \centering
    \resizebox{\linewidth}{!}{
        \begin{tabular}{lrrrcc}
    \toprule
    \textbf{Name} & \textbf{\# of graphs} & \textbf{Average \# of nodes} & \textbf{Average \# of edges} & \textbf{Task} & \textbf{License} \\
    \midrule
    Peptides func~\citep{dwivedi_long_2022} & 15,535 & 150.9 & 307.3 & \begin{tabular}{@{}l@{}}graph multi-label\\classification\end{tabular} & CC BY-NC 4.0 \\
    Peptides struct~\citep{dwivedi_long_2022} & 15,535 & 150.9 & 307.3 & graph regression & CC BY-NC 4.0 \\
    \texttt{CLUSTER}~\citep{dwivedi_benchmarking_2023} & 12,000 & 117.2 & 4,301.7 & node classification & CC-BY 4.0 \\
    \texttt{LR-CLUSTER} (\textbf{ours}) & 12,000 & 896.9 & 6,195.1 & node classification & CC-BY 4.0 \\
    Tree Distance regression (\textbf{ours}) & 55,000 & 749.2 & 748.2 & node regression & CC-BY 4.0 \\
    DAG Distance regression (\textbf{ours}) & 55,000 & 748.6 & 821.8 & node regression & CC-BY 4.0 \\
    \begin{tabular}{@{}l@{}}Oversquashing extended\\\quad(derived from~\citep{di_giovanni_over-squashing_2023})\end{tabular} & 730 & 43.8 & 231.9 & node classification & CC-BY 4.0 \\
    \begin{tabular}{@{}l@{}}Associative recall small\\\quad(derived from~\citep{poli_hyena_2023})\end{tabular} & 26,000 & 524.7 & 523.7 & node classification & CC-BY 4.0 \\
    \begin{tabular}{@{}l@{}}Associative recall 30k\\\quad(derived from~\citep{poli_hyena_2023})\end{tabular} & 11,000 & 30,003.8 & 30,002.8 & node classification & CC-BY 4.0 \\
    OGB arXiv~\citep{hu_open_2020} & 1 & 169,343 & 1,166,243 & node classification & MIT \\
    OGB Products~\citep{hu_open_2020} & 1 & 2,449,029 & 61,859,140 & node classification & MIT \\
    TPUGraphs~\citep{phothilimthana_tpugraphs_2023} & \(\approx\)31,000,000 & \(\approx\)6,100 & NA & graph ranking & Apache License \\
    \bottomrule
\end{tabular}

    }
    \vspace{5pt}
    \caption{Dataset statistics and licenses.}
    \label{app:tab:datasets}
\end{table}

\textbf{Hyperparameters.} While we provide full parameters for all experiments and models in our code, we gather an overview of the used \nameacr{} variants here. The parameters were determined through cascades of random search throughout the development of the method. We list the most important parameters in \autoref{app:tab:hyperparams}.

\begin{table}[b]
    \small
    \centering
    \resizebox{\linewidth}{!}{
        
\begin{tabular}{L{3cm}cccccccccC{2.7cm}}
\toprule
\textbf{Dataset} & \begin{tabular}[c]{@{}l@{}}\textbf{\# MP}\\\textbf{ layers}\end{tabular} & \begin{tabular}[c]{@{}l@{}}\textbf{\# spec.}\\\textbf{ layers}\end{tabular} & \textbf{Dim.\ $d$} & \begin{tabular}[c]{@{}l@{}}\textbf{\# spec. filters}\\\textbf{per layer}\end{tabular} & \textbf{\begin{tabular}[c]{@{}l@{}}\# eigenvectors $k$ / \\ frequency cutoff $\lambda_{\text{cut}}$\end{tabular}} & \begin{tabular}[c]{@{}c@{}}\textbf{Spectral}\\\textbf{ NN}\end{tabular} & \begin{tabular}[c]{@{}l@{}}\textbf{Train}\\\textbf{time}\end{tabular} & $\EVD$ \textbf{time} & \textbf{GPU} & \textbf{Notes} \\
\midrule
Peptides-Func & 3 & 3 & 224 & 128 & $\lambda_{\text{cut}}=0.7$ & \xmark & 1\,h  & 2\,min & 1080Ti &  \\
Peptides-Struct & 3 & 1 & 260 & 260 & $\lambda_{\text{cut}}=0.7$ & \xmark & 1\,h & 2\,min & 1080Ti &  \\
\texttt{CLUSTER} & 18 & 17 & 64 & 32 & $\lambda_{\text{cut}} = 1.3$ & \xmark & 1.2\,h & 4\,min & 1080Ti & \\ %
\texttt{LR-CLUSTER} (\textbf{ours}) & 4 & 1 & 128 & 128 & $k = 10, \lambda_{\text{cut}} = 0.05$ & \xmark & 20\,min & 4\,min & 1080Ti & \\ %
Distance regression (\textbf{ours}) & 5 & 4 & 236 & 236 & $k = 50, \lambda_{\text{cut}} = 0.1$ & \xmark & 3\,h & 1.5\,h & A100 & 1080Ti possible with smaller batch size \\ %
Oversquashing extended & 0 & 1 & 16 & 16 & $k=20, \lambda_{\text{cut}}=0.05$ & \xmark & 3\,min & 3\,s & 1080Ti & \\ %
Associative recall & 3 & 3 & 224 & 224 & $k=10,\tilde{\lambda}_{\text{cut}}=10$ & \cmark & 3\,h & closed form & 1080Ti & eigenvalue transform (\autoref{app:eq:nacostransform}) \& exponential window \\ %
arXiv-year & 4 & 2 & 256 & 256 & $k=100, \lambda_{\text{cut}} = 0.05$ & \cmark & 1 h & 5 min & 1080Ti & \\
Open Graph Benchmark Products & 6 & 2 & 256 & 164 & $k=100 \rightarrow \lambda_{\text{cut}} \approx 0.056$ & \cmark & 11\,h & 26\,min & A100 & eigenvalue transform (\autoref{app:eq:acostransform}) \\
TPU Graphs & 3 & 1 & 128 & 64 & $k=50,\lambda_{\text{cut}}=0.05$ & \cmark & 40\,h & 22 min & A100 & \\ %
\bottomrule
\end{tabular}
    }
    \vspace{5pt}
    \caption{Important \nameacr{} specific hyperparameters and runtimes. The times for the \(\EVD\) cover the respective dataset entirely.}
    \label{app:tab:hyperparams}
\end{table}

\textbf{Usage of external results.} The performance of baselines is commonly taken from leaderboards and the respective accompanying papers. This specifically includes the results in \autoref{tab:results_peptides}, \autoref{tab:results_assoc_recall_large}, and \autoref{tab:cluster_sbm}.

\textbf{Setup.} For clustering (\autoref{app:clustering}), distance regression (\autoref{app:source_dist}), and arXiv-year (\autoref{app:arxiv_year}) we report the detailed setup in the respective sections. For the other tasks, the relevant details are:
\begin{itemize}
    \item \textbf{Peptides:} We follow the setup and implementation of \citet{rampasek_recipe_2022}. That is, we train for 250 epochs with a batch size of 200. We rerun experiments on 10 random seeds.
    \item \textbf{Over-squashing:} We derive the setup from \citet{di_giovanni_over-squashing_2023}. In the main part (\autoref{fig:di_givanni}), for the GCN, we report the numbers of their Figure 3 for a GCN on ``Clique Path'' graphs. For the spectral filter, we actually consider the more challenging setting where we do not train one model per graph size. Instead, we train one model for all sequence lengths. The task is to retrieve the correct of five possible classes on the other end of the graph. In the extended experiment of \autoref{fig:di_givanni_extended}, we compose the dataset of  ``Clique Path'' and ``Ring'' graphs (see \citet{di_giovanni_over-squashing_2023}). To avoid \(m = \mathcal{O}(n^2)\), we limit the fully connected clique to 15 nodes. For training and validation, we enumerate all graphs with even \(n \in \{4, 6, \dots, 50\}\) and train for 500 epochs. For test, we enumerate the graphs with even \(n \in \{52, 54, \dots, 100\}\). We rerun experiments on 10 random seeds.
    \item \textbf{Associative recall:} We construct one dataset consisting of key-value sequences of length 20 to 999. As \citet{poli_hyena_2023}, we use a vocabulary of 30. We sample 25,000/500 random graphs for train/validation. For the test set, we randomly generate 500 graphs for the sequence lengths of 1,000 to 1,199. We train for 200 epochs. In the experiment with validation/test sequence length 30k (\autoref{tab:results_assoc_recall_large}), we generate 10,000 training graphs of length 29,500 to 30,499 and finetune \nameacr GCN from the smaller setup. We rerun experiments on 10 random seeds.
    \item \textbf{OGB Products:} Even though full-graph training with 3 layers GCN plus one spectral layer fits into a 40\,GB A100 GPU, we find that batched training works better. We randomly divide the graph during training into 16 parts and train a 6-layer \nameprefix GAT with spectral layers after the second and last message passing step. Inference is performed on the entire graph at once. We rerun experiments on 5 random seeds.
    \item \textbf{TPUGraphs:} Due to the large variation of results, we merge all ``layout'' tasks into a single dataset. Since the default graph construction is not able to express all relevant information, we adapt it as detailed in \autoref{app:tpu_graphs_construction}, however, the empirical impact was small. TPUGraphs ``layout'' consists of a few hundred distinct graph structures with a large variation on the node-level configuration/features. We sample 10,000 configurations for each graph structure of each ``layout'' sub-split. Here, we introduce two batch dimensions: (1) batching over multiple graphs and (2) batching over the configurations. In each training step of the 1,000 epochs, we sample a small subset of configurations per graph structure and apply a pairwise hinge loss to rank the configurations. We do not perform random reruns due to the computational cost.
\end{itemize}

\subsection{Qualitative Experiments}\label{app:qualitative}

In \autoref{fig:spec_spat_contrast} and \autoref{fig:ringing}, we provide qualitative insights about the approximation of filters and ringing. 

In \autoref{fig:spec_spat_contrast}, we construct a true filter by adding a discontinuous filter at \(\lambda = 0\) and a polynomial filter of order 3. For the spectral part, we use the true filter values and fit a Chebyshev polynomial on the remaining part. We then plot the response of the true filter and its approximations on a path graph with 21 nodes and \(\mL = \mI - \mD^{-1}\mA\).

Similarly, in \autoref{fig:ringing}, we use a path graph with 100 nodes and \(\mL = \mI - \mD^{-1}\mA\). We then construct a perfect low pass (\(k=25\)) and approximate a rectangular wave.

\subsection{Computational Cost}\label{app:compcost}

We report the computational cost for the experiments in 
\autoref{app:tab:hyperparams} for a single random seed. On top of the pure cost of reproducing our numbers, we conducted hyperparameter searches using random search. Partially, we required 100s of runs to determine good parameter ranges. A generally well-working approach was first to reproduce the results of the best available MPGNN in prior work. Thereafter, we needed to assess how likely additional capacity would lead to overfitting. Usually, we reduced the number of message-passing steps, added the spectral filter, and determined appropriate values for the number of eigenvectors \(k\). In the light of overfitting, it is a good idea to lower the number of Gaussians in the smearing of the filter parametrization (\autoref{sec:filter_parametrization}), introduce bottle-neck layers (\autoref{app:method}), and use fewer spectral filters than hidden dimensions.

\begin{figure}[H]
    \centering
    \includegraphics[width=0.35\linewidth]{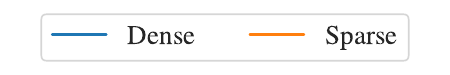}\\
    \vspace{-12pt}
    \subfloat[\texttt{scipy} (CPU)]{\includegraphics[width=0.282\linewidth]{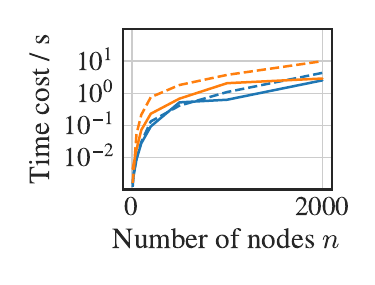}\label{fig:runtime_scipy}}
    \hspace{5pt}
    \subfloat[\texttt{PyTorch} (GPU)]{\includegraphics[width=0.294\linewidth]{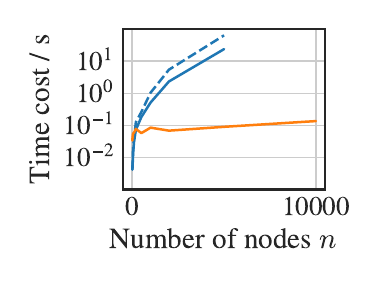}\label{fig:runtime_pytorch}}
    \caption{Runtime of partial eigendecomposition \(k=25\) of Erd\H{o}s R\'enyi graph with average degree 5. Dashed mark directed/Hermitian Laplacian.\label{fig:runtime}}
\end{figure}

\textbf{Large-scale benchmarks.} On the large-scale datasets OGB Products and TPUGraphs, we perform full-graph training (without, e.g., segment training~\citep{cao_learning_2023}) using 3 DirGCN layers interlayered with spectral filters targeting a pair-wise hinge loss. The spectral GNN uses the Magnetic Laplacian to incorporate direction. The spatial MPGNN closely resembles the model of \citet{rossi_edge_2023}, except that we half the dimension for the forward and backward message passing and concatenate the result. We shrink the dimensions to model the direction at a very low cost.  We conclude that \nameacr{} can be very practical even if applied at scale and can effectively model long-range interactions also on large graphs.

\textbf{Eigendecompositon.} We show the computational cost for the eigendecomposition of a random Erd\H{o}s R\'enyi graph (every edge has equal likelihood to be drawn). We use \texttt{scipy} (CPU) and \texttt{PyTorch} (GPU) with default arguments. For the sparse decomposition with \texttt{PyTorch}, we use the \texttt{svd\_lowrank} method. Note that the default parameters for \texttt{PyTorch} are usually leading to large numerical errors. \autoref{fig:runtime} demonstrates that the cost of the eigendecomposition is manageable. For large graphs like ogbn-products (2.5 mio.\ nodes), the \(\EVD\) takes around 30 minutes with \(k=100\) on 6 CPU cores of an AMD EPYC 7542. Note that the default parameters of the eigensolver allow for 1000s of iterations or until the error in the 32-bit float representation achieves machine precision.

\FloatBarrier

\subsection{Clustering Tasks}\label{app:clustering}

\begin{figure}[H]
    \centering
    \begin{adjustbox}{center}
        \subfloat[\label{fig:cluster_gmm_example_graphs:a}]{\includegraphics[width=0.32\linewidth]{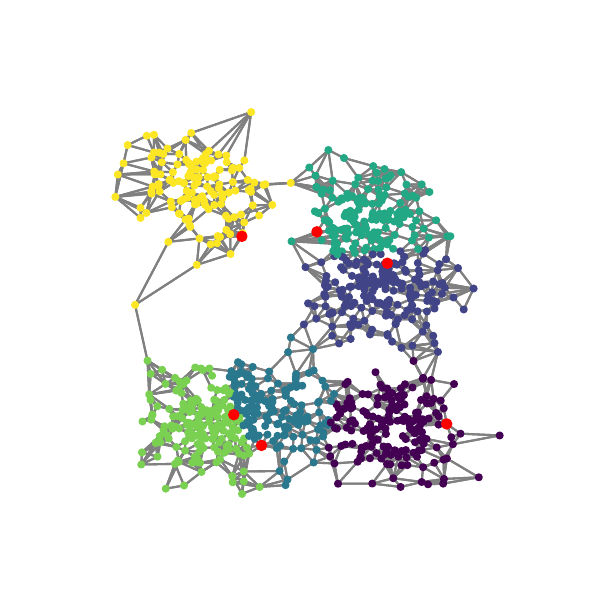}}

        \subfloat[]{\includegraphics[width=0.32\linewidth]{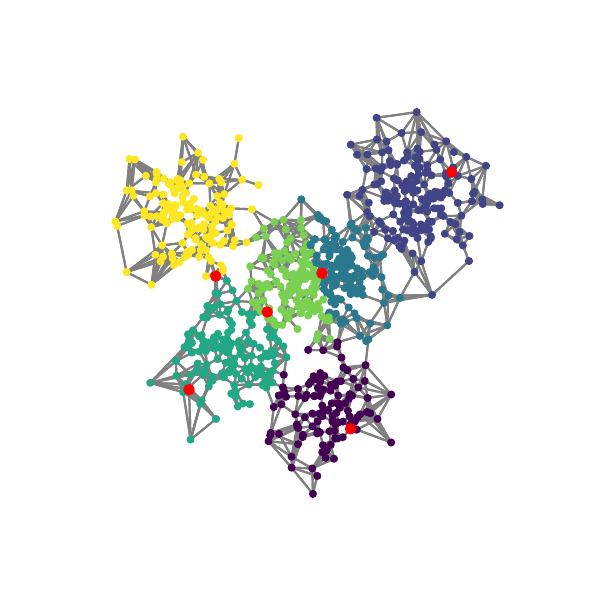}}

        \subfloat[]{\includegraphics[width=0.32\linewidth]{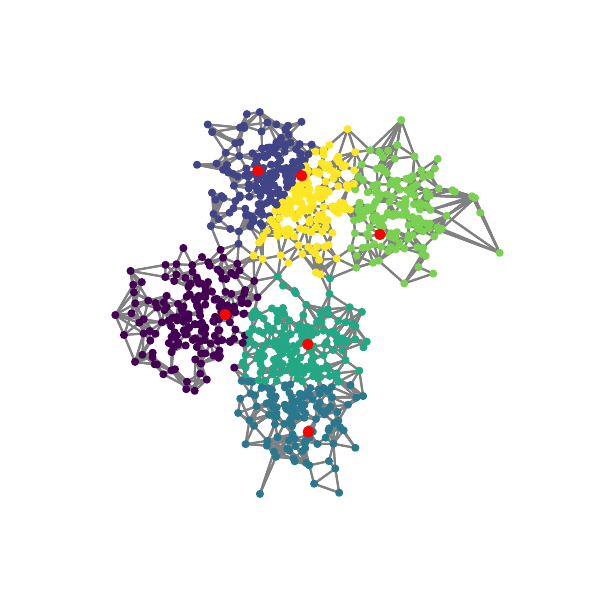}}
    \end{adjustbox}
    \caption{Examples of generated graphs for the \texttt{LR-CLUSTER} task (GMM). Labeled nodes are marked \textcolor{red}{red}.\label{fig:cluster_gmm_example_graphs}}
\end{figure}

\begin{figure}[H]
    \centering
    \subfloat[\label{fig:cluster_sbm_example_graphs:a}]{\includegraphics[width=0.32\linewidth]{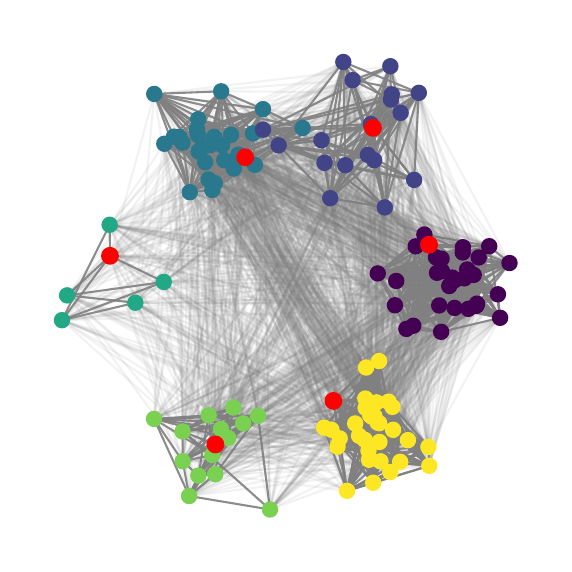}}
    \subfloat[]{\includegraphics[width=0.32\linewidth]{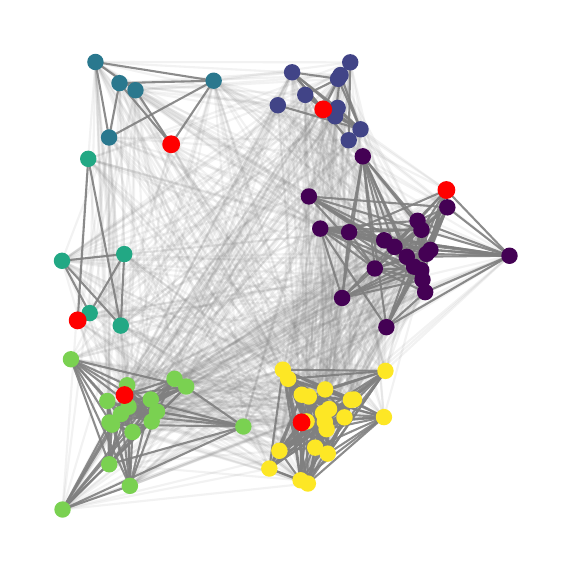}}
    \subfloat[]{\includegraphics[width=0.32\linewidth]{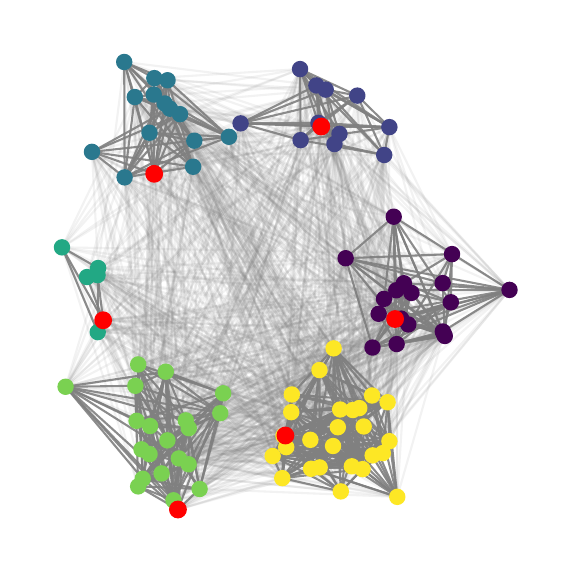}}
    \caption{Examples of generated graphs for the \texttt{CLUSTER} task (SBM). Labeled nodes are marked \textcolor{red}{red}. Edges within clusters are highlighted.\label{fig:cluster_sbm_example_graphs}}
\end{figure}

We use a clustering task \texttt{LR-CLUSTER} based on Gaussian Mixture Models (GMMs), which requires long-range interactions to measure the ability of \nameprefix GNN to spread information within clusters and consider the original \texttt{CLUSTER} task from \citet{dwivedi_benchmarking_2023} based on Stochastic Block Models (SBMs) in order to measure the ability to discriminate between the clusters. The differences are apparent from an illustration of some exemplary graphs in \autoref{fig:cluster_gmm_example_graphs} \& \ref{fig:cluster_sbm_example_graphs}. While \texttt{LR-CLUSTER} has long-range interactions, the challenge of \texttt{CLUSTER} is to discriminate between the clusters. Without the arrangement of nodes, colors, and different edge weights, for \texttt{CLUSTER}, it is virtually impossible to discriminate the clusters by visual inspection.

Nevertheless, we find that the spectral filter is well aligned with the cluster structure in these tasks. We plot this some exemplary filter in \autoref{app:fig:cluster_filters}. The findings match the explanations of \autoref{sec:emp_long} also for \texttt{CLUSTER}.

\begin{figure}[t]
    \centering
    \sidesubfloat[]{\includegraphics[width=0.93\linewidth]{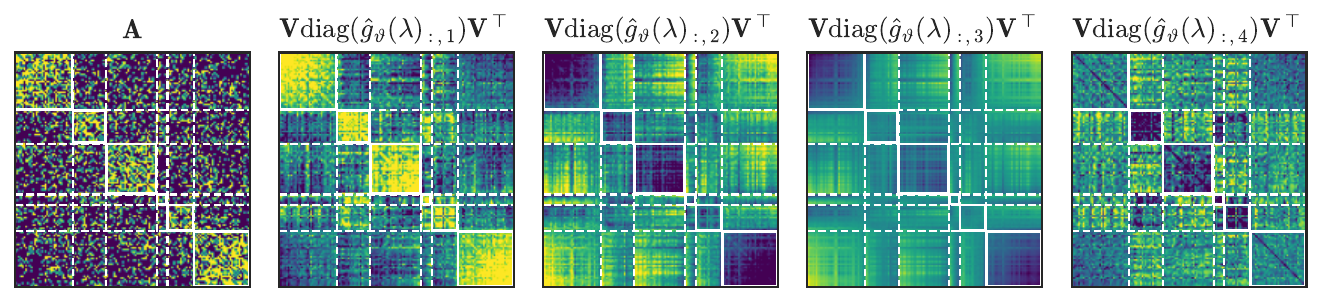}}
    \hfill
    \sidesubfloat[]{\includegraphics[width=0.93\linewidth]{resources/cluster_gmm_filters.pdf}}
    \caption{SBM-based (a), visualized in \autoref{fig:cluster_gmm_example_graphs:a}, and \emph{our} GMM-based (b), visualized in \autoref{fig:cluster_sbm_example_graphs:a}, graphs along with four learned filters. Large entries are yellow, small are blue, and white lines denote clusters.}
    \label{app:fig:cluster_filters}
\end{figure}

In the remainder of the section, we provide full details of the experiment setups. Moreover, we provide additional results not presented in the main text, including ablations.

\subsubsection{GMM Clustering \texttt{LR-CLUSTER}}

\textbf{Setup.} To sample an input graph, we start by generating $C=6$ $p$-dimensional cluster centers $\mu_c \sim U[0, 10]^p$ for $c \in \{0, \dots, C-1\}$ (we use $p=2$). Next, we draw $n_c \in \{100, \dots, 199\}$ points $x_{ic} \sim \mathcal{N}(\mu_c, 4 I_p)$ which will represent the nodes of the graph. Subsequently, we update the class memberships such that every point is in its most likely class according to the underlying probabilistic model. Finally, we connect each node $v$ to its $e_v \sim U(\{1, \dots, 10\})$ closest neighbors by Euclidean distance $\|\cdot\|_2$. This whole procedure is repeated until the generated graph is connected. We then discard the location information and only keep the graph structure. In this way, we generate graphs of an average diameter of $\approx 33$. See \autoref{fig:cluster_gmm_example_graphs} for depictions of example graphs.

Apart from the graph generation procedure, we adhere closely to \citet{dwivedi_benchmarking_2023}: We introduce input features in $\{0, 1, 2, \dots, C\}$, where a feature value of $c = 1, \dots, C$ corresponds to the node being in class $c-1$ and a feature value of $0$ means that the class is unknown and has to be inferred by the model. Only one node $v_c$ per class is randomly chosen to be labeled and all remaining node features are set to $0$. The output labels are defined as the class labels. We use weighted cross entropy loss for training and class-size-weighted accuracy as a target metric. We generate 10,000 training and 1,000 val/test graphs each and report the average $\pm$ standard deviation over 3 random reruns.

\textbf{Models.} As an underlying spatial model baseline, we use a vanilla GCN~\citep{kipf_semi-supervised_2017}. We compare this to  \nameprefix GCN, only applying one spectral convolution immediately before the last spatial layer. We investigate the influence of the number $k \in \{0, 1, \dots, 10\}$ of eigenvectors to be taken into account with 4 spatial layers, with $k=0$ indicating the absence of a spectral layer (see \autoref{fig:cluster_gmm_num_eigvecs}, and \autoref{tab:cluster_gmm_num_eigvecs_app} for the underlying data). We also vary the number of spatial MP layers from 2 to 10 and compare the performance of a purely spatial GCN to the corresponding \nameprefix GCN with one spectral convolution (see \autoref{fig:cluster_gmm_layers}, and \autoref{tab:cluster_gmm_layers_app} for the underlying data). 

Throughout all evaluations, we maintain a consistent hyperparameter configuration: Specifically, we use an inner dimension of 128, GELU~\citep{hendrycks_gelu_2016} as an activation function, no dropout, and residual connections for all spatial and spectral layers. For the spectral layer, we implement the gating mechanism \(f_\theta^{(l)}\), but abstain from a neural network in the spectral domain (\autoref{sec:nn_spec}), bottlenecks, or parameter sharing. We train for $50$ epochs with a batch size of $50$, using the AdamW optimizer~\citep{loshchilov_decoupled_2019} with a base learning rate of $0.003$, a weight decay of $0.0001$, a cosine scheduler and $5$ warmup epochs.

\begin{table}
    \small
    \centering
    \resizebox{0.8\linewidth}{!}{
        
\begin{tabular}{lrrrrr}
\toprule
$k$ & \textbf{0 (MPGNN)} & \textbf{1 (Virtual Node)} & \textbf{2} & \textbf{3} & \textbf{4} \\
\midrule
\nameprefix GCN & $0.4546 \pm 0.0002$ & $0.4646 \pm 0.0001$ & $0.6786 \pm 0.0010$ & $0.7429 \pm 0.0026$ & $0.7971 \pm 0.0008$ \\
\nameprefix GCN $(+\PE)$ & $0.4546 \pm 0.0002$ & $0.4642 \pm 0.0007$ & $0.7221 \pm 0.0008$ & $0.7860 \pm 0.0005$ & $0.8202 \pm 0.0011$ \\
\bottomrule
\end{tabular}
    }
    \resizebox{0.8\linewidth}{!}{
        
\begin{tabular}{rrrrrr}
\toprule
\textbf{5} & \textbf{6} & \textbf{7} & \textbf{8} & \textbf{9} & \textbf{10} \\
\midrule
$0.8322 \pm 0.0004$ & $0.8511 \pm 0.0008$ & $0.8510 \pm 0.0008$ & $0.8519 \pm 0.0005$ & $0.8517 \pm 0.0006$ & $0.8513 \pm 0.0018$ \\
$0.8440 \pm 0.0006$ & $0.8538 \pm 0.0012$ & $0.8548 \pm 0.0011$ & $0.8546 \pm 0.0002$ & $0.8545 \pm 0.0005$ & $0.8554 \pm 0.0005$ \\
\bottomrule
\end{tabular}
    }
    \vspace{10pt}
    \captionsetup{margin=1.2cm}
    \caption{Accuracy on the GMM clustering task for varying number of eigenvectors $k$, using 4 GCN layers and one spectral layer in the end.}
    \label{tab:cluster_gmm_num_eigvecs_app}
\end{table}

\begin{table}
    \small
    \centering
    \resizebox{0.8\linewidth}{!}{
        \begin{tabular}{lrrrr}
\toprule
 & \textbf{2} & \textbf{3} & \textbf{4} & \textbf{5} \\
\midrule
GCN & $0.2700 \pm 0.0002$ & $0.3557 \pm 0.0000$ & $0.4544 \pm 0.0003$ & $0.5521 \pm 0.0001$ \\
GCN $(+\PE)$ & $0.2684 \pm 0.0005$ & $0.3552 \pm 0.0015$ & $0.4550 \pm 0.0004$ & $0.5526 \pm 0.0006$ \\
\midrule
\nameprefix GCN & $0.8517 \pm 0.0003$ & $0.8520 \pm 0.0008$ & $0.8518 \pm 0.0005$ & $0.8512 \pm 0.0002$ \\
\nameprefix GCN $(+\PE)$ & $0.8547 \pm 0.0007$ & $0.8550 \pm 0.0010$ & $0.8552 \pm 0.0015$ & $0.8539 \pm 0.0010$ \\
\bottomrule
\end{tabular}
    }
    \resizebox{0.8\linewidth}{!}{
        \begin{tabular}{rrrrr}
\toprule
\textbf{6} & \textbf{7} & \textbf{8} & \textbf{9} & \textbf{10} \\
\midrule
$0.6367 \pm 0.0001$ & $0.7013 \pm 0.0001$ & $0.7448 \pm 0.0003$ & $0.7708 \pm 0.0007$ & $0.7860 \pm 0.0004$ \\
$0.6387 \pm 0.0012$ & $0.7104 \pm 0.0011$ & $0.7609 \pm 0.0009$ & $0.7931 \pm 0.0005$ & $0.8135 \pm 0.0007$ \\
\midrule
$0.8512 \pm 0.0008$ & $0.8509 \pm 0.0008$ & $0.8511 \pm 0.0003$ & $0.8504 \pm 0.0009$ & $0.8509 \pm 0.0006$ \\
$0.8552 \pm 0.0008$ & $0.8542 \pm 0.0004$ & $0.8545 \pm 0.0008$ & $0.8536 \pm 0.0013$ & $0.8542 \pm 0.0008$ \\
\bottomrule
\end{tabular}
    }
    \vspace{10pt}
    \captionsetup{margin=1.2cm}
    \caption{Accuracy on the GMM clustering task for varying number of MP layers, while comparing a purely spatial GCN model to \nameprefix GCN with one spectral layer added in the end.}
    \label{tab:cluster_gmm_layers_app}
\end{table}

\textbf{Further discussion.} The clustering task comes naturally to \nameprefix GCN, as a spectral layer can simulate certain variations of spectral clustering~\citep{von_luxburg_tutorial_2007}: Suppose $\mH^{(l-1)} \in \R^{n \times C}$ is a one-hot encoding of the cluster labels, i.e. $\mH^{(l-1)}_{v,c} = \delta_{v,v_c}$, with $c \in \{1, \dots, C\}$ and $v_c$ being the unique labeled node per class. In its simplest form, taking $\graphfilternnl \equiv 1$ and $f_\theta^{(l)} \equiv \mathrm{id}$, the spectral layer $\spec^{(l)}$ from \autoref{eq:spec} turns into $\mH^{(l)} = \Eigvec \Eigvec^\top \mH^{(l-1)}$. Hence, $\mH^{(l)}_{v,c} = \Eigvec_{v,:}^\top \Eigvec_{v_c,:}$ encodes a notion of similarity between a node $v$ and each labeled node $v_c$. This relates to the Euclidean distance $\| \Eigvec_{v,:} - \Eigvec_{v_c,:} \|_2 = \sqrt{\| \Eigvec_{v,:}\|_2^2 + \| \Eigvec_{v_c,:}\|_2^2 - 2 \Eigvec_{v,:}^\top \Eigvec_{v_c,:}}$ which is more typically used for spectral clustering.

\FloatBarrier

\subsubsection{SBM Clustering \texttt{CLUSTER} \citep{dwivedi_benchmarking_2023}}\label{app:clustering_orig}

\textbf{Setup.} We conduct an ablation study on the original \texttt{CLUSTER} task~\citep{dwivedi_benchmarking_2023}, which uses a similar setup to our GMM clustering task, however drawing from a SBM instead: For each cluster, $n_c \in \{5, \dots, 35\}$ nodes are sampled. Nodes in the same community are connected with a probability of $p=0.55$, while nodes in different communities are connected with a probability of $q=0.25$. While there is no need for long-range interactions in this task, considering that the average diameter of the graphs is just $\approx 2.17$, separating the clusters is much harder than in the GMM clustering task (see \autoref{app:fig:cluster_filters} for example adjacency matrices from the SBM and GMM models). We use weighted cross entropy loss for training and class-size-weighted accuracy as a target metric. We report the average $\pm$ standard deviation over 3 random reruns.

\textbf{Models.} In our ablation study, we consider GCN~\citep{kipf_semi-supervised_2017}, GAT~\citep{velickovic_graph_2018}, and GatedGCN~\citep{bresson_residual_2018} as MPGNN baselines, following a setup similar to \citet{dwivedi_benchmarking_2023}. We consider models with 4 layers (roughly 100k parameters) and 16 layers (roughly 500k parameters), while keeping most hyperparameters the same as in the benchmark, including inner dimension, dropout, and the number of heads for GAT. However, our reported baseline results and parameter counts differ slightly as we are using a different post-MP head, where we maintain a constant dimension until the last layer, in contrast to \citet{dwivedi_benchmarking_2023} who progressively shrink the inner dimension. We construct the corresponding \nameprefix GNNs by modifying each baseline model, replacing the $3^{\mathrm{rd}}$ and the $5^{\mathrm{th}}$/$15^{\mathrm{th}}$ layers with spectral layers, ensuring a roughly equivalent parameter count. Additionally, each model is optionally supplemented by our positional encodings $\PE$~(\autoref{sec:pos_enc}). 

We further conduct a hyperparameter search on the most promising base MPGNN candidate, GatedGCN, which leads to an optimized version of \nameprefix GNN. This optimized model has 18 spatial MPGNN layers, spectral layers between all spatial layers, and additional $\mathrm{RWSE}$ encodings. The inner dimension is adjusted to keep the model well below a parameter budget of 500k. Finally, we also evaluate \nameprefix GCN and \nameprefix GAT using these hyperparameter settings.

\begin{wrapfigure}[13]{r}{0.7\textwidth}
    \centering
    \vspace{-10pt}
    \includegraphics[width=\linewidth]{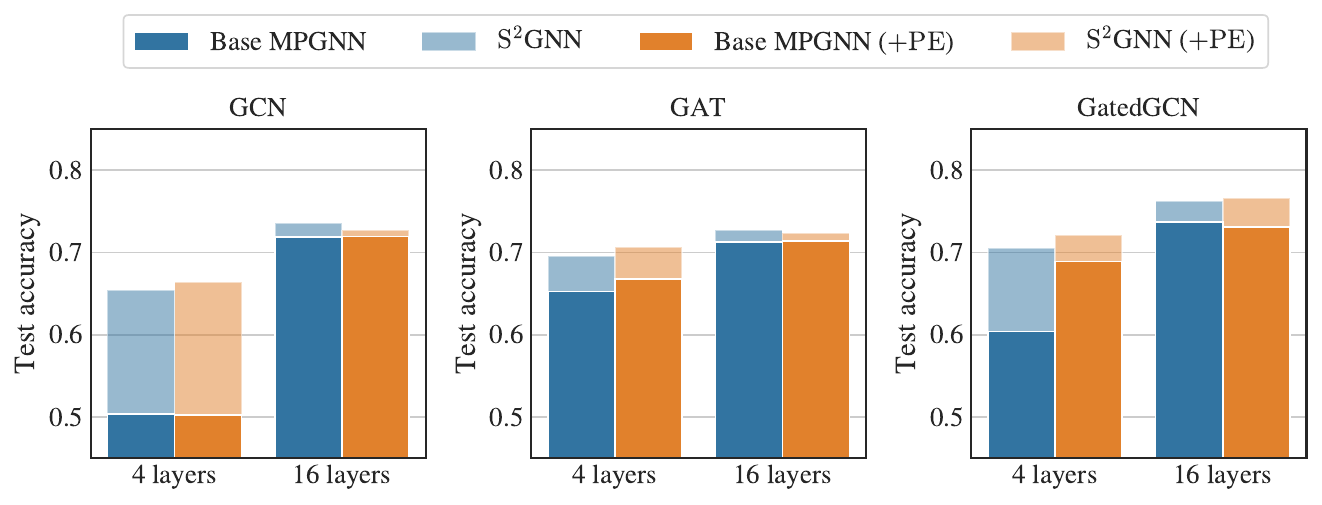}
    \vspace{-20pt}
    \caption{Effects of the spectral part on SBM clustering performance for different base architectures.}
    \label{fig:cluster_sbm_comparison}
\end{wrapfigure}

Throughout all evaluations, we use GELU~\citep{hendrycks_gelu_2016} as an activation function, residual connections for all spatial and spectral layers, and implement the gating mechanism \(f_\theta^{(l)}\) without employing a neural network in the spectral domain. We use a batch size of 128 for training the 4-layer models and 64 for all other models. For the spectral layer, we use the partial eigendecomposition corresponding to the lowest $k=50$ eigenvalues ($k=100$ for the optimized \nameprefix GNN versions), spectral normalization, and $\eigval_{\text{cut}} = 1.3$. For the optimized models, we employ parameter sharing with 128 heads, and a bottleneck of 0.25 in feature gating. We use 8 attention heads for all GAT versions in accordance with \citet{dwivedi_benchmarking_2023} (inner dimension is not expanded but split up), except for the optimized version, which uses 4 heads. For the purely spatial models, we use $p = 0.0$ as dropout (similar to \citet{dwivedi_benchmarking_2023}). We observe this to lead to overfitting for models with spectral layers, for which we set $p \in \{0.1, 0.2\}$.
Hyperparameters differing between the compared models are listed in~\autoref{tab:cluster_sbm_results_app}. We train for 100 epochs using the AdamW optimizer~\citep{loshchilov_decoupled_2019} with a base learning rate of 0.001, no weight decay, and a cosine scheduler with 5 warmup epochs.

\begin{table}
    \small
    \centering
    \resizebox{\linewidth}{!}{
        \begin{tabular}{ccccccrrr}
\toprule
MPGNN & \# total & Inner & Spec. & Pos.  & Dropout & \# params & Train accuracy ($\uparrow$) & Test accuracy ($\uparrow$) \\
base & layers & dim. & filters & enc.  & & &  \\
\midrule
\multirow[c]{9}{*}{\rotatebox{90}{\textbf{GCN}}} & \multirow[c]{4}{*}{4} & \multirow[c]{4}{*}{146} & \xmark & \xmark & 0.0 & 109k & $0.5059 \pm 0.0018$ & $0.5037 \pm 0.0023$ \\
 &  & & \xmark & \cmark & 0.0 & 117k & $0.5053 \pm 0.0010$ & $0.5026 \pm 0.0006$ \\
 &  & & 1 & \xmark & 0.1 & 117k & $0.6492 \pm 0.0009$ & $0.6545 \pm 0.0013$ \\
 &  & & 1 &  \cmark & 0.1 & 125k & $0.6663 \pm 0.0020$ & $0.6640 \pm 0.0021$ \\
 \cmidrule{2-9}
 & \multirow[c]{4}{*}{16} & \multirow[c]{4}{*}{172} & \xmark & \xmark & 0.0 & 508k & $0.7354 \pm 0.0009$ & $0.7190 \pm 0.0010$ \\
 &  & & \xmark &  \cmark & 0.0 & 517k & $0.7378 \pm 0.0017$ & $0.7194 \pm 0.0010$ \\
 &  & & 2 & \xmark & 0.1 & 527k & $0.7535 \pm 0.0011$ & $0.7359 \pm 0.0017$ \\
 &  & & 2 &  \cmark & 0.1 & 536k & $0.7526 \pm 0.0021$ & $0.7269 \pm 0.0011$ \\
 \cmidrule{2-9}
 & 18 & 124 & 17 & \scriptsize$\PE + \mathrm{RWSE}$ & 0.2 & 491k & $0.8022 \pm 0.0147$ & \underline{$0.7711 \pm 0.0020$} \\
 \midrule
\multirow[c]{9}{*}{\rotatebox{90}{\textbf{GAT}}} & \multirow[c]{4}{*}{4} & \multirow[c]{4}{*}{152} & \xmark & \xmark & 0.0 & 120k & $0.6705 \pm 0.0008$ & $0.6525 \pm 0.0010$ \\
 &  & & \xmark &  \cmark & 0.0 & 128k & $0.7167 \pm 0.0001$ & $0.6680 \pm 0.0020$ \\
 &  & & 1 & \xmark & 0.1 & 128k & $0.7093 \pm 0.0007$ & $0.6960 \pm 0.0010$ \\
 &  & & 1 &  \cmark & 0.1 & 136k & $0.7398 \pm 0.0006$ & $0.7065 \pm 0.0007$ \\
 \cmidrule{2-9}
 & \multirow[c]{4}{*}{16} & \multirow[c]{4}{*}{176} & \xmark & \xmark & 0.0 & 541k & $0.8537 \pm 0.0025$ & $0.7126 \pm 0.0014$ \\
 &  & & \xmark & \cmark & 0.0 & 549k & $0.8740 \pm 0.0014$ & $0.7139 \pm 0.0022$ \\
 &  & & 2 & \xmark & 0.1 & 558k & $0.8723 \pm 0.0013$ & $0.7277 \pm 0.0005$ \\
 &  & & 2 &  \cmark & 0.1 & 567k & $0.8836 \pm 0.0005$ & $0.7232 \pm 0.0010$ \\
 \cmidrule{2-9}
 & 18 & 120 & 17 & \scriptsize$\PE + \mathrm{RWSE}$ & 0.1 & 469k & $0.8071 \pm 0.0262$ & $0.7681 \pm 0.0003$ \\
 \midrule
\multirow[c]{9}{*}{\rotatebox{90}{\textbf{GatedGCN}}} & \multirow[c]{4}{*}{4} & \multirow[c]{4}{*}{70} & \xmark & \xmark & 0.0 & 106k & $0.6181 \pm 0.0020$ & $0.6039 \pm 0.0019$ \\
 &  & & \xmark &  \cmark & 0.0 & 110k & $0.7292 \pm 0.0031$ & $0.6889 \pm 0.0027$ \\
 &  & & 1 & \xmark & 0.1 & 90k & $0.6933 \pm 0.0003$ & $0.7050 \pm 0.0001$ \\
 &  & & 1 &  \cmark & 0.1 & 94k & $0.7245 \pm 0.0002$ & $0.7217 \pm 0.0018$ \\
 \cmidrule{2-9}
 & \multirow[c]{4}{*}{16} & \multirow[c]{4}{*}{78} & \xmark & \xmark & 0.0 & 505k & $0.8667 \pm 0.0019$ & $0.7369 \pm 0.0011$ \\
 &  & & \xmark &  \cmark & 0.0 & 509k & $0.8753 \pm 0.0257$ & $0.7314 \pm 0.0058$ \\
 &  & & 2 & \xmark & 0.1 & 464k & $0.8086 \pm 0.0016$ & $0.7627 \pm 0.0010$ \\
 &  & & 2 &  \cmark & 0.1 & 468k & $0.8302 \pm 0.0011$ & $0.7659 \pm 0.0003$ \\
 \cmidrule{2-9}
 & 18 & 64 & 17 & \scriptsize$\PE + \mathrm{RWSE}$ & 0.2 & 460k & $0.8202 \pm 0.0024$ & $\mathbf{0.7808 \boldsymbol{\pm} 0.0005}$ \\
\bottomrule
\end{tabular}
    }
    \vspace{5pt}
    \caption{Ablation results on the SBM clustering task~\citep{dwivedi_benchmarking_2023}. The best mean test accuracy is bold, second is underlined.}
    \label{tab:cluster_sbm_results_app}
\end{table}

\begin{table}[H]
    \vspace{-5pt}
        \captionsetup{type=table}
        \caption{Results on the \texttt{CLUSTER} task~\citep{dwivedi_benchmarking_2023}. Transformer models that outperform our \nameprefix GatedGCN are underlined. \label{tab:cluster_sbm}}
        \centering
        \resizebox{0.8\width}{!}{\begin{tabular}{llrr}
    \toprule
    \multicolumn{2}{c}{\textbf{Model}} & \multicolumn{1}{c}{\textbf{Accuracy} ($\uparrow$)} \\
    \midrule
    \multirow[c]{7}{*}{\rotatebox{90}{\textbf{Transformer}}} & ARGNP~\cite{cai_automatic_2022} &  $0.7735\pm0.0005$ \\
    & GPS~\cite{rampasek_recipe_2022} &  $0.7802\pm0.0018$ \\
    & TIGT~\cite{choi_topology-informed_2024} &  $0.7803\pm0.0022$ \\
    & GPTrans-Nano~\cite{chen_graph_2023} &  $0.7807\pm0.0015$ \\
    & Exphormer~\citep{shirzad_exphormer_2023} & $0.7807\pm0.0004$ \\
    & EGT~\citep{hussain_global_2022} & \underline{$0.7923\pm0.0035$} \\
    & GRIT~\citep{ma_graph_2023} & \underline{$0.8003\pm0.0028$} \\
    \midrule
    \multirow[l]{2}{*}{\rotatebox{90}{\textbf{GNN}}} & GatedGCN &  $0.7608\pm0.0020$ \\
    & \emph{\nameprefix GatedGCN (\textbf{ours})} & $0.7808 \pm 0.0005$ \\
    \bottomrule
\end{tabular}}
\end{table}

\textbf{Results.} Results for the \texttt{CLUSTER} task are presented in \autoref{tab:cluster_sbm_results_app}, \autoref{tab:cluster_sbm} and \autoref{fig:cluster_sbm_comparison}. Introducing a spectral layer significantly enhances performance on the 4-layer architectures, both with and without positional encodings. The effect is most pronounced on GCN, where replacing just a single GCN layer by a spectral layer boosts accuracy from $0.504$ to $0.655$. Notably, introducing two spectral layers still has a consistent positive effect on all 16-layer architectures.

\begin{figure}
    \centering
    \includegraphics[width=0.8\linewidth]{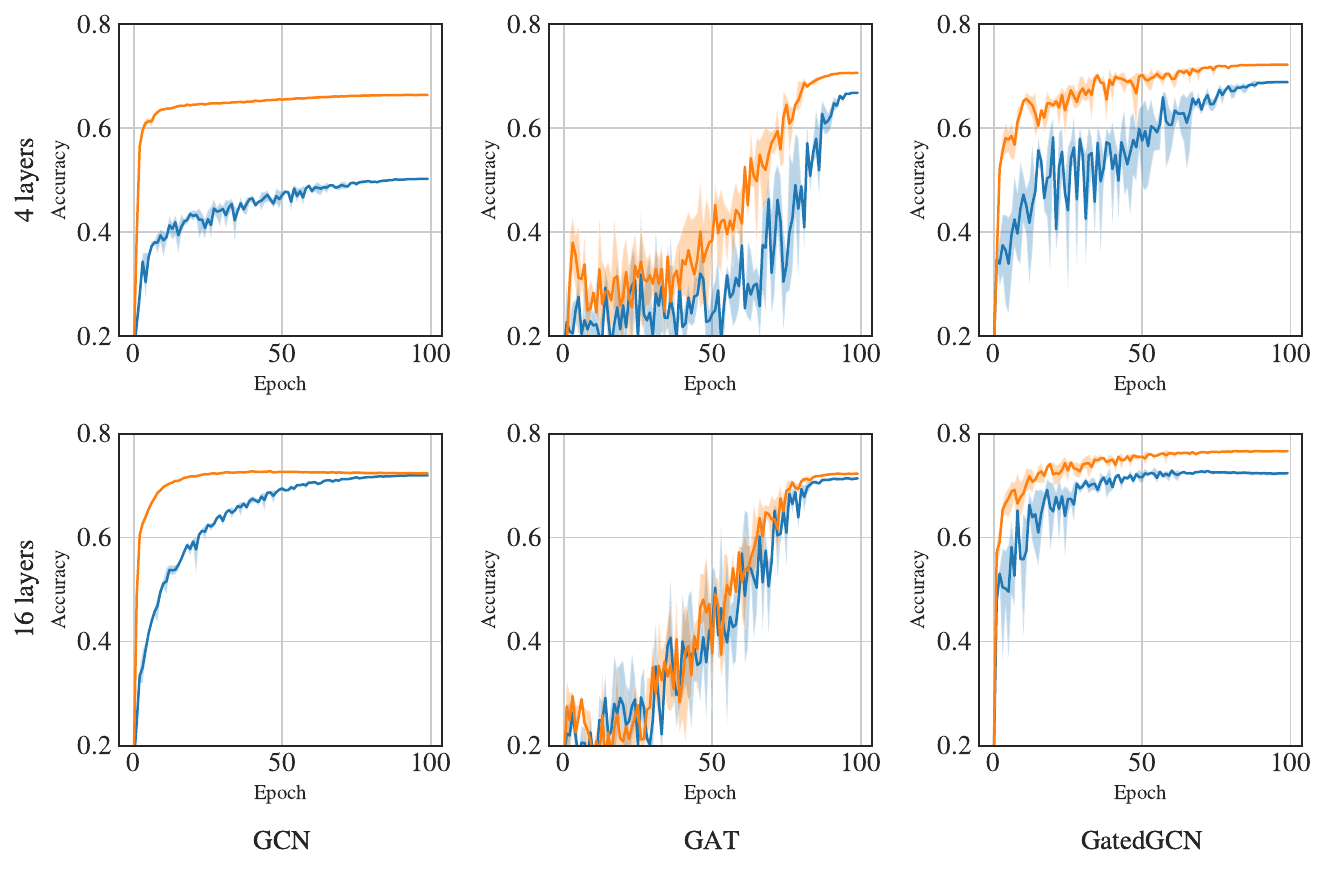}
    \captionsetup{margin=1.2cm}
    \caption{Test accuracy curves for the SBM clustering task. Curves are shown for the models from \autoref{tab:cluster_sbm_results_app} with $\PE$, with the \textcolor{plt-C0}{MPGNN baseline} and the respective \textcolor{plt-C1}{\nameprefix GNN}.}
    \label{app:fig:cluster_sbm_loss_curves}
\end{figure}

\FloatBarrier

\subsection{Distance Regression}\label{app:source_dist}

\textbf{Setup.} We generate directed random trees with one source by sampling trees with $n \in \{500, \dots, 999\}$ nodes, picking one node at random to declare as a source and introducing edge directions accordingly. To construct random DAGs with long distances, we start from such directed random trees and proceed by adding $\left \lfloor{n/10}\right \rfloor$ edges at random, choosing each edge direction such that the resulting graph is still a DAG. Additionally, we mark the source node with a node feature. Besides evaluating all models in an in-distribution regime, we also assess the generalization power of the methods by drawing out-of-distribution val/test splits from slightly larger graphs of $n \in \{1000, \dots, 1099\}$ and $n \in \{1100, \dots, 1199\}$ nodes each. We use $L^2$ loss for training and $R^2$ as a target metric. We sample 50,000 training and 2,500 val/test graphs each and report the average $\pm$ standard deviation over 3 random reruns.

\textbf{Models.} As a MPGNN baseline, we use a five-layer directed version of GCN, DirGCN~\citep{rossi_edge_2023}, with three post-message-passing layers, and concatenating instead of averaging over the source-to-target and target-to-source parts. We compare these baselines to \nameprefix DirGCN of the form \autoref{eq:specspat_seq} with four spectral layers, alternating spatial and spectral convolutions and employing residual connections. We benchmark versions of \nameprefix DirGCN that ignore edge direction in the spectral convolution against directed versions in which we set $q=0.001$. In all cases, we use the partial eigendecomposition corresponding to the $k=50$ lowest eigenvalues. All models are optionally enriched by the positional encodings from \autoref{sec:pos_enc}. Throughout all evaluations, we use an inner dimension of 236, GELU~\citep{hendrycks_gelu_2016} as an activation function, and dropout $p = 0.05$. For the spectral layers, we utilize the gating mechanism $f_\theta^{(l)}$, not employing a neural network in the spectral domain, we use spectral normalization, $\eigval_{\text{cut}} = 0.1$, and a bottleneck of $0.03$ in the spectral layer. We train for 50 epochs, using a batch size of 36 and the AdamW optimizer~\citep{loshchilov_decoupled_2019} with a base learning rate of 0.001, a weight decay of 0.008, and a cosine scheduler with 5 warmup epochs.

\begin{table}
    \small
    \centering
    \resizebox{\linewidth}{!}{
        \begin{tabular}{lccrrrrrr}
\toprule
 & +Spec. & $\PE$ & \multicolumn{3}{c}{\textbf{in-distribution}} & \multicolumn{3}{c}{\textbf{out-of-distribution}} \\
 & filter & & $MAE \, (\downarrow)$ & $RMSE \, (\downarrow)$ & $R^2 \, (\uparrow)$ & $MAE \, (\downarrow)$ & $RMSE \, (\downarrow)$ & $R^2 \, (\uparrow)$ \\
\midrule
\multirow[c]{6}{*}{\rotatebox{90}{\textbf{DAGs}}} & \xmark & \xmark & $7.0263 \pm 0.0033$ & $9.0950 \pm 0.0005$ & $0.1915 \pm 0.0001$ & $8.1381 \pm 0.0368$ & $10.7735 \pm 0.0402$ & $0.1214 \pm 0.0066$ \\
 & \xmark & \cmark & $6.8252 \pm 0.0008$ & $8.8636 \pm 0.0024$ & $0.2322 \pm 0.0004$ & $8.0018 \pm 0.0018$ & $10.4432 \pm 0.0017$ & $0.1745 \pm 0.0003$ \\
  & undir. & \xmark & $1.9248 \pm 0.0116$ & $3.2687 \pm 0.0100$ & $0.8956 \pm 0.0006$ & $3.0471 \pm 0.0192$ & $4.9467 \pm 0.0263$ & $0.8148 \pm 0.0020$ \\
 & undir. & \cmark & $1.7384 \pm 0.0039$ & $2.9934 \pm 0.0046$ & $0.9124 \pm 0.0003$ & $2.7950 \pm 0.0041$ & $4.5834 \pm 0.0117$ & $0.8410 \pm 0.0008$ \\
 & direc. & \xmark & \underline{$1.2401 \pm 0.0173$} & \underline{$2.1600 \pm 0.0340$} & \underline{$0.9544 \pm 0.0014$} & \underline{$2.1824 \pm 0.0787$} & \underline{$3.7694 \pm 0.0710$} & \underline{$0.8924 \pm 0.0040$} \\
 & direc. & \cmark & $\mathbf{1.1676 \boldsymbol{\pm} 0.0032}$ & $\mathbf{2.0428 \boldsymbol{\pm} 0.0066}$ & $\mathbf{0.9592 \boldsymbol{\pm} 0.0003}$ & $\mathbf{2.0565 \boldsymbol{\pm} 0.0326}$ & $\mathbf{3.5887 \boldsymbol{\pm} 0.0434}$ & $\mathbf{0.9025 \boldsymbol{\pm} 0.0024}$ \\
\midrule
\multirow[c]{6}{*}{\rotatebox{90}{\textbf{Trees}}} & \xmark & \xmark & $13.7472 \pm 0.0478$ & $17.3902 \pm 0.0277$ & $0.0958 \pm 0.0029$ & $16.8554 \pm 0.0559$ & $21.6454 \pm 0.1394$ & $0.0144 \pm 0.0127$ \\
 & \xmark & \cmark & $11.6316 \pm 0.0370$ & $15.0123 \pm 0.0249$ & $0.3262 \pm 0.0022$ & $14.9837 \pm 0.0501$ & $19.3659 \pm 0.0610$ & $0.2110 \pm 0.0050$ \\
  & undir. & \xmark & $1.0236 \pm 0.0408$ & $1.7991 \pm 0.1956$ & $0.9902 \pm 0.0020$ & $1.5981 \pm 0.2221$ & $2.7377 \pm 0.4786$ & $0.9839 \pm 0.0053$ \\
 & undir. & \cmark & $1.2887 \pm 0.1195$ & $2.0095 \pm 0.2638$ & $0.9878 \pm 0.0031$ & $1.7184 \pm 0.3288$ & $2.5791 \pm 0.5372$ & $0.9856 \pm 0.0055$ \\
 & direc. & \xmark & \underline{$0.8166 \pm 0.5012$} & \underline{$1.2224 \pm 0.7600$} & \underline{$0.9944 \pm 0.0060$} & \underline{$1.5280 \pm 0.4539$} & \underline{$2.2942 \pm 0.7592$} & \underline{$0.9881 \pm 0.0069$} \\
 & direc. & \cmark & $\mathbf{0.7767 \boldsymbol{\pm} 0.3306}$ & $\mathbf{1.1512 \boldsymbol{\pm} 0.5839}$ & $\mathbf{0.9954 \boldsymbol{\pm} 0.0041}$ & $\mathbf{0.9911 \boldsymbol{\pm} 0.6911}$ & $\mathbf{1.5064 \boldsymbol{\pm} 1.0206}$ & $\mathbf{0.9938 \boldsymbol{\pm} 0.0077}$ \\
\bottomrule
\end{tabular}

    }
    \vspace{5pt}
    \caption{Results on the distance task, with DirGCN as base. The best mean score is bold, second is underlined.}
    \label{tab:source_dist_results_app}
\end{table}

\begin{figure}
    \centering
    \includegraphics[width=\linewidth]{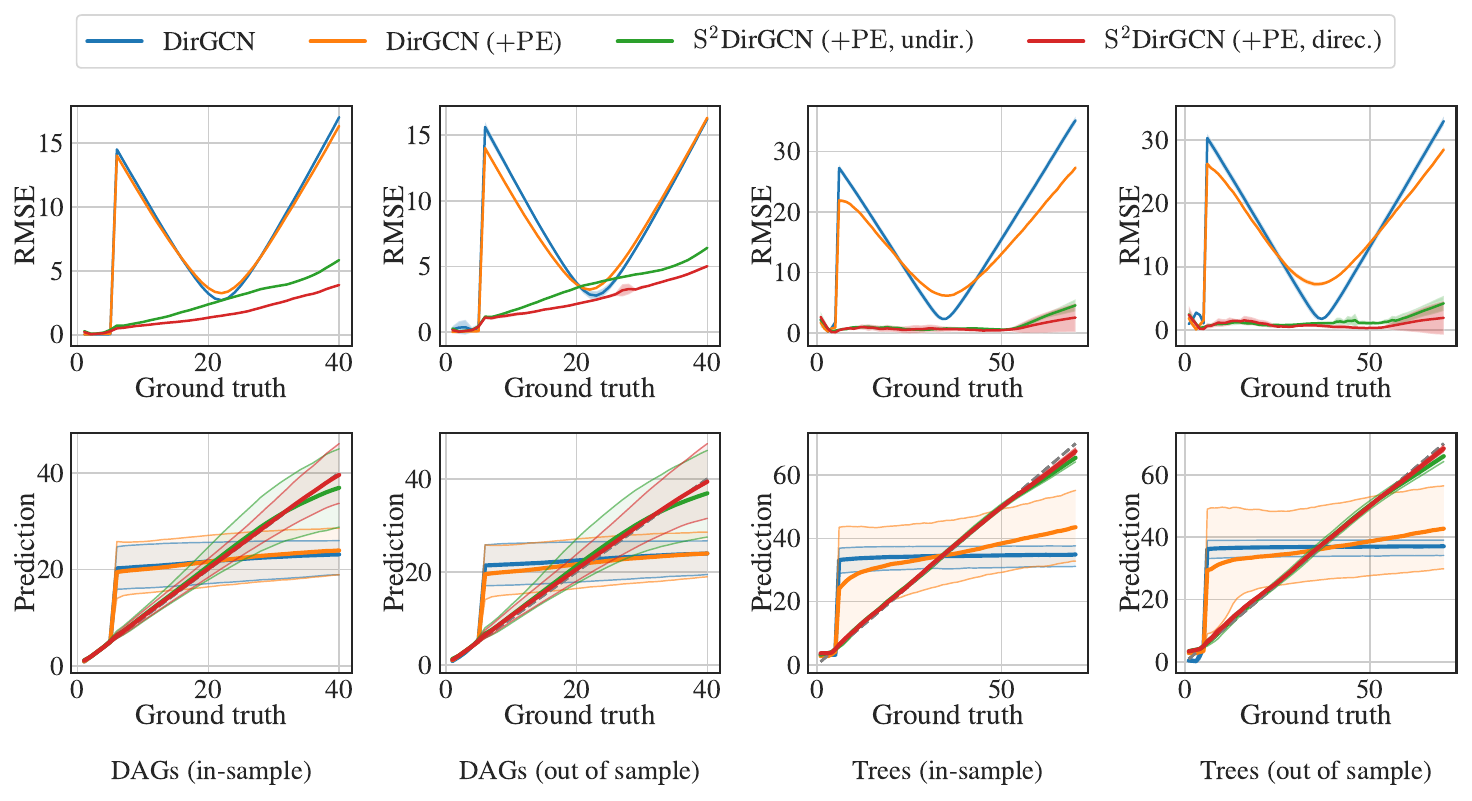}
    \caption{RMSE and 90\% prediction intervals for distance predictions by ground truth.}
    \label{fig:source_dist_rmse_preds_appendix}
\end{figure}

\textbf{Results.} In \autoref{tab:source_dist_results_app}, we show the performance of the different models on DAGs and trees. We observe that the simple MPGNNs are notably surpassed by all versions of \nameprefix DirGCN. While \nameprefix DirGCN achieves nearly perfect predictions on the tree tasks in both the directed and undirected case, the undirected version is outperformed by the directed version on the DAG tasks. Here, performance also reduces slightly in the out-of-distribution regime. The great performance on the tree task is due to the fact that trees are \textit{collision-free} graphs~\citep{geisler_transformers_2023-1}, where the phase of each eigenvector is $\exp(i2 \pi q (d_v + c))$ for each node $v$, with $d_v$ representing the distance to the source node and $c \in \R$ being an arbitrary constant (due to phase invariance of the eigenvector). It is noteworthy that a simple MPGNN with positional encodings, despite having the distances (shifted by $c$) readily available, fails the task, as the information about the phase of the source node cannot be effectively shared among all nodes. In \autoref{fig:source_dist_rmse_preds_appendix}, we compare the distance predictions by the different models. While the prediction of all models is close to perfect below a distance of 5, the spatial MPGNNs are almost unable to distinguish higher distances. By contrast, \nameprefix DirGCN predicts reasonable distances regardless of the ground truth, with the absolute error only increasing slowly.

\FloatBarrier

\subsection{arXiv-year \citep{lim_large_2021}}\label{app:arxiv_year}

\textbf{Setup.} We evaluate \nameprefix GNN on a large-scale heterophilic dataset, namely \textit{arXiv-year}. \textit{arXiv-year} is based on OGB arXiv~\citep{hu_open_2020}, but instead of paper subject areas, the year of publication (divided into 5 classes) is the prediction target. While there are no long-range interactions in this dataset, preliminary experiments indicated that the phase of the Magnetic Laplacian eigenvectors on its own can also be predictive of the class label. We report average $\pm$ standard deviation over 5 reruns with the splits from~\citet{lim_large_2021}, using a different random seed for each run.

\textbf{Models.} We use DirGCN~\citep{rossi_edge_2023} as a baseline and largely follow the original setup. However, we observe that using 4 layers (instead of 6) and introducing a dropout of $p = 0.5$ improves baseline performance. Furthermore, we drop the jumping knowledge used by~\citet{rossi_edge_2023}. We compare this baseline to \nameprefix DirGCN with two spectral layers (after the second and third spatial layers) and apply residual connections only for the spectral layers. For the spectral layers, we set $q = 0.0001$ and use the partial eigendecomposition with $k=100$, a NN in the spectral domain~\autoref{sec:nn_spec}, no feature gating, and a bottleneck of $0.05$.
All other parameters are kept similar to the DirGCN base from \citet{rossi_edge_2023}. We train for 2000 epochs using the AdamW optimizer~\citep{loshchilov_decoupled_2019} with a base learing rate of 0.005, no weight decay, and a cosine scheduler with 50 warmup epochs.

\textbf{Results.} We report the results in~\autoref{tab:arxiv_year}. Notably, our \nameprefix DirGCN outperforms both our baseline DirGCN as well as the recent FaberNet~\citep{koke_holonets_2024}, albeit by a very tight margin. A more comprehensive evaluation of \nameacrsing{}'s power on heterophilic datasets, potentially with long-range interactions, is left for future work.

\begin{table}[H]
    \vspace{-5pt}
        \captionsetup{type=table}
        \caption{Results on arXiv-year. Best mean test accuracy is bold, second is underlined. \label{tab:arxiv_year}}
        \centering
        \resizebox{0.5\linewidth}{!}{
            \begin{tabular}{lr}
    \toprule
    \multicolumn{1}{c}{\textbf{Model}} & \multicolumn{1}{c}{\textbf{Accuracy} ($\uparrow$)} \\
    \midrule
     DirGCN~\citep{rossi_edge_2023} & $0.6408\pm0.0026$ \\
     FaberNet~\citep{koke_holonets_2024} & \underline{$0.6462\pm0.0101$} \\
    \midrule
    \emph{DirGCN (\textbf{tuned})} & $0.6444 \pm 0.0027$ \\
     \emph{\nameprefix DirGCN (\textbf{ours})} & $\mathbf{0.6495 \boldsymbol{\pm} 0.0033}$ \\
    \bottomrule
\end{tabular}
        }
\end{table}

\subsection{TPUGraphs Graph Construction}\label{app:tpu_graphs_construction}

The ``XLA'' collections of TPUGraphs contain many constructs that are most certainly suboptimal for a machine-learning-based runtime prediction. However, in our preliminary experiments, we could not show that our graph construction yielded better results in a statistically significant manner. Nevertheless, we include this discussion since it might be insightful.

To understand the challenges with the default graph construction, note that in the TPUGraphs dataset each node represents an operation in the compuational graph of Accelerated Linear Algebra (XLA). Its incoming edges are the respective operands, and the outgoing edges signal where the operation's result is used. Thus, the graph describes how the tensors are being transformed. An (perhaps unnecessary) challenge for machine learning models arises from using \verb|tuple|, which represents a sequence of tensors of arbitrary shapes. In this case, the model needs to reason how the tuple is constructed, converted, and unpacked again. Moreover, directly adjacent tensors/operations can be very far away in the graphs of TPUGraphs.

We identified and manually ``fixed'' three cases to eliminate this problem largely in the TPUGraphs dataset: Tuple-GetTupleElement, While, and Conditional. Since we could not access the configurations in the HLO protobuf files and C++ XLA extraction code, we decided to perform these optimizations ourselves. However, it might be a better strategy to utilize the existing XLA compiler etc.

\begin{figure}[H]
    \centering
    \includegraphics[width=0.5\linewidth]{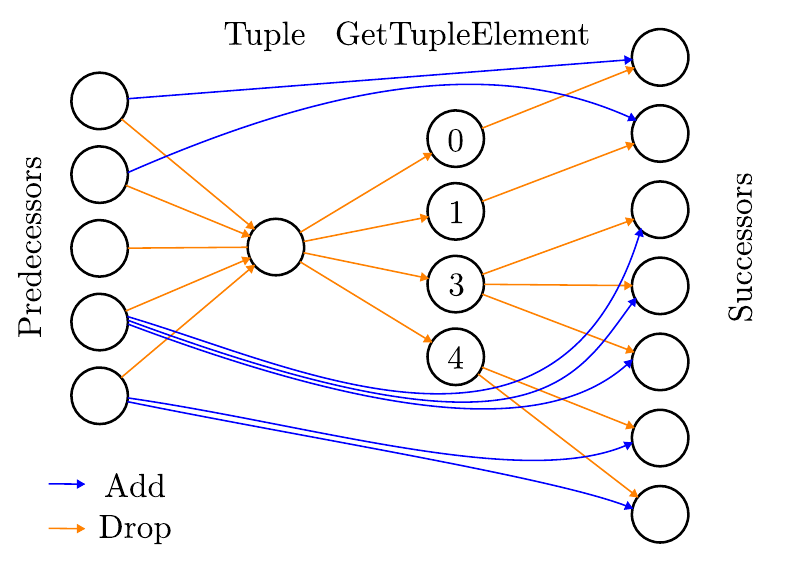}
    \caption{Tuple-GetTupleElement simplification: the Tuple aggregates the output of multiple predecessors/operations and then the GetTupleElement extracts the tensor according to its index (number in respective nodes). We propose dropping immediate Tuple-GetTupleElement constructs and connecting predecessors and successors.}
    \label{fig:tuple_gte}
\end{figure}
Additionally, to the subsequently described graph structure changes, we extract the order of operands from the HLO protobud files. Outgoing edges are assumed to be unordered except for the GetTupleElement operation, where the tuple index is used as order. Moreover, we extracted all features masked in the C++ code and then excluded constant features.

\subsubsection{Tuple-GetTupleElement}

The dataset contains aggregations via the XLA Tuple operation that are often directly followed by a GetTupleElement operation. To a large extent, these constructs are required for the subsequently discussed While and Conditional operations. Importantly, the model could not extract the relationships through a tuple aggregation since the \verb|tuple_index| was not included in the default features. Moreover, the resulting tuple hub nodes severely impact the Fourier basis of the graphs (see \autoref{sec:background}). We illustrate the graph simplification in \autoref{fig:tuple_gte} and denote the edge order of incoming edges from top to bottom. The edge order represents the order of operands.

\begin{figure}[H]
    \centering
    \includegraphics[width=0.5\linewidth]{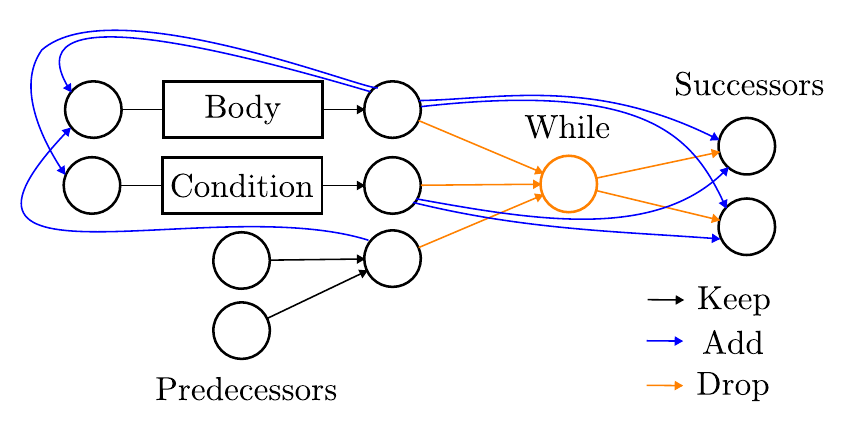}
    \caption{Instead of aggregating everything into a hub node, we propose to connect respective inputs and outputs.}
    \label{fig:while}
\end{figure}
We propose dropping immediate Tuple-GetTupleElement constructs and directly connecting predecessors and successors. For this, we generate a graph solely consisting of direct connections and then resolve multiple consecutive Tuple-GetTupleElement constructs via a graph traversal (depth-first search).
We perform the Tuple-GetTupleElement simplification after dealing with While and Conditionals. However, for the sake of simplicity, we will avoid using tuples in the subsequent explanations for While and Conditional. In other words, the subsequent explanations extend to functions with multiple arguments via the use of tuples. 

\subsubsection{While Operation}

\begin{figure}[H]
    \centering
    \includegraphics[width=0.5\linewidth]{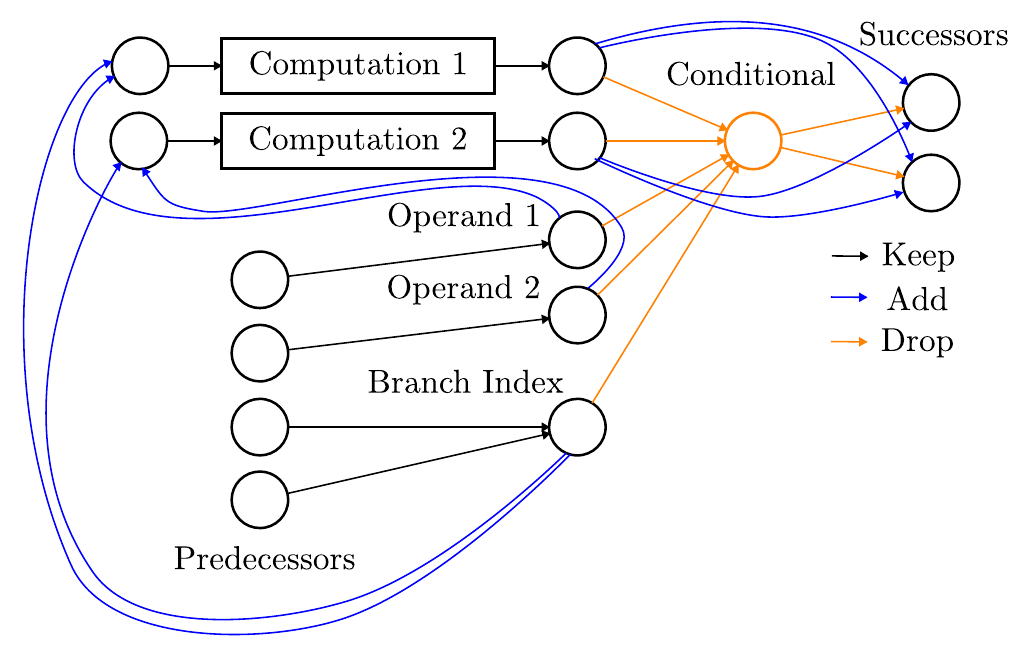}
    \caption{Instead of aggregating everything into a hub node, we propose to connect respective inputs and outputs. Here as an example with two conditional computations.}
    \label{fig:conditional}
\end{figure}
The While operation has the signature \verb|While(condition, body, init)| where \verb|condition| is a function given the \verb|init| or output of the \verb|body| function. Note that in the TPUGraph construction, \verb|body| as well as \verb|condition| only represent the outputs of the respective function and their operands need to be extracted from HLO.

To avoid hub nodes and to retain the dataflow between operations (important for decisions about the layout), operands and outputs are connected directly. Technically, we am modeling a do-while construct because the condition is not connected to the inputs. Since the successors of the while are of type GetTupleElement, they relabeled to a new node type, signaling the end of a while loop. To support nested while loops, each node in the body is assigned a new node feature signaling the number of while body statements it is part of. 

\subsubsection{Conditional Operation}

\verb|Conditional(branch_index, branch_computations, branch_operands)| is 
the most common signature of the Conditional operation, where the integer-valued \verb|branch_index| selects which \verb|branch_computations| is executed with the respective input in \verb|branch_operands|. Similarly to the While operation, we introduce new node types for the inputs of computations and the successors (they are GetTupleElement operations).

\end{document}